\documentclass[11pt]{article} 
\usepackage[utf8]{inputenc} 

\usepackage[margin=1.1in]{geometry} 
\usepackage{graphicx} 

\usepackage{booktabs} 
\usepackage{array} 
\usepackage{paralist} 
\usepackage{verbatim} 
\usepackage{mathrsfs}
\usepackage{amssymb}
\usepackage{amsthm}
\usepackage{amsmath,amsfonts,amssymb}
\usepackage{esint}
\usepackage{graphics}
\usepackage{enumerate}
\usepackage{mathtools}
\usepackage{xfrac}
\usepackage{subcaption}
\usepackage{stmaryrd}
 \usepackage{mathabx}
\usepackage{algorithm}
\usepackage{algpseudocode}

\usepackage{float} 

\newfloat{routine}{tbp}{lor}
\floatname{routine}{Routine}

\usepackage[dvipsnames]{xcolor}
\usepackage[colorlinks=true, pdfstartview=FitV, linkcolor=blue, citecolor=blue, urlcolor=blue]{hyperref}
\usepackage[normalem]{ulem}

\usepackage{tikz}
\usetikzlibrary{calc}
\usepackage{pgf}
\usetikzlibrary{external}
\numberwithin{equation}{section}
\numberwithin{figure}{section}

\newtheorem{theorem}{Theorem}[section]

\newtheorem{assumption}[theorem]{Assumption}
\newtheorem{corollary}[theorem]{Corollary}
\newtheorem{proposition}[theorem]{Proposition}
\newtheorem{lemma}[theorem]{Lemma}
\theoremstyle{definition}
\newtheorem{definition}[theorem]{Definition}

\newtheorem{remark}[theorem]{Remark}

\newtheorem{condition}[theorem]{Condition}

\newcommand*{\supp}{\ensuremath{\mathrm{supp\,}}}

\newcommand*{\R}{\ensuremath{\mathbb{R}}}

\newcommand{\eps}{\varepsilon}

\renewcommand*{\tilde}{\widetilde}

\renewcommand{\S}{\mathbf{S}}
\newcommand{\Pc}{\mathcal{P}}

\DeclareMathOperator{\MMD}{MMD}

\usepackage{accents}

\newcommand{\ux}{X}
\newcommand{\uy}{Y}

\def\Xint#1{\mathchoice
{\XXint\displaystyle\textstyle{#1}}%
{\XXint\textstyle\scriptstyle{#1}}%
{\XXint\scriptstyle\scriptscriptstyle{#1}}%
{\XXint\scriptscriptstyle\scriptscriptstyle{#1}}%
\!\int}
\def\XXint#1#2#3{{\setbox0=\hbox{$#1{#2#3}{\int}$}
\vcenter{\hbox{$#2#3$}}\kern-.5\wd0}}

\def\fint{\Xint-}

\let\originalleft\left
\let\originalright\right
\renewcommand{\left}{\mathopen{}\mathclose\bgroup\originalleft}
\renewcommand{\right}{\aftergroup\egroup\originalright}

\usepackage[hang,flushmargin]{footmisc}
\newcommand{\emptyfootnote}[1]{%
    \renewcommand{\thefootnote}{}%
    \footnotetext{#1}%
    \renewcommand{\thefootnote}{\arabic{footnote}}%
}

\newcommand{\E}{\mathbb{E}}

\newcommand{\RD}{\mathbb{D}}
\renewcommand{\S}{{{\mathbb{S}}}}
\newcommand{\diff}{\mathrm{diff}}

\renewcommand{\hat}{\widehat}

\makeatletter
\pgfmathdeclarefunction{erf}{1}{%
  \begingroup
    \pgfmathparse{#1 > 0 ? 1 : -1}%
    \edef\sign{\pgfmathresult}%
    \pgfmathparse{abs(#1)}%
    \edef\x{\pgfmathresult}%
    \pgfmathparse{1/(1+0.3275911*\x)}%
    \edef\t{\pgfmathresult}%
    \pgfmathparse{%
      1 - (((((1.061405429*\t -1.453152027)*\t) + 1.421413741)*\t 
      -0.284496736)*\t + 0.254829592)*\t*exp(-(\x*\x))}%
    \edef\y{\pgfmathresult}%
    \pgfmathparse{(\sign)*\y}%
    \pgfmath@smuggleone\pgfmathresult%
  \endgroup
}

\newenvironment{keywords}{%
  \begin{quote}\noindent
  \textbf{Keywords: }\ignorespaces
  \hangindent=3.2em\hangafter=1 
}{%
  \end{quote}
}

\newenvironment{MSC}{%
  \begin{quote}\noindent
  \textbf{AMS MCS 2020 Classes: }\ignorespaces
  \hangindent=3.2em\hangafter=1 
}{%
  \end{quote}
}

\usepackage{titlesec}

\makeatletter
\newif\ifinappendix
\let\oldappendix\appendix
\renewcommand{\appendix}{\oldappendix\inappendixtrue}

\renewcommand{\@seccntformat}[1]{%
  \ifinappendix
    \ifnum\pdfstrcmp{#1}{section}=0
      Appendix~\thesection\quad
    \else
      \csname the#1\endcsname\quad
    \fi
  \else
    \csname the#1\endcsname\quad
  \fi
}

\makeatother

\title{Radon--Wasserstein Gradient Flows for Interacting-Particle Sampling in High Dimensions}

\author{
Elias Hess-Childs
\thanks{Department of Mathematical Sciences, Carnegie Mellon University
{\footnotesize \href{mailto:ehesschi@andrew.cmu.edu}{ehesschi@andrew.cmu.edu}.}
}
\and 
Dejan Slep\v{c}ev
\thanks{Department of Mathematical Sciences, Carnegie Mellon University
{\footnotesize \href{mailto:slepcev@andrew.cmu.edu}{slepcev@andrew.cmu.edu}
}
}
\and
Lantian Xu
\thanks{Department of Mathematical Sciences, Carnegie Mellon University
{\footnotesize \href{mailto:lxu2@alumni.cmu.edu}{lxu2@alumni.cmu.edu}
}
}
}
\date{ }

\usepackage[nottoc,notlot,notlof]{tocbibind}

\makeatletter
\renewcommand{\@oddhead}{\hfil Radon--Wasserstein Gradient Flows \hfil}
\makeatother

\begin{document}

\maketitle

\begin{abstract}
Gradient flows of the Kullback–Leibler (KL) divergence, such as the Fokker–Planck equation and Stein Variational Gradient Descent, evolve a distribution toward a target density known only up to a normalizing constant.\ We introduce new gradient flows of the KL divergence with a remarkable combination of properties: they admit accurate interacting-particle approximations in high dimensions, and the per-step cost scales linearly in both the number of particles and the dimension.\ These gradient flows are based on new transportation-based Riemannian geometries on the space of probability measures: the Radon–Wasserstein geometry and the related Regularized Radon–Wasserstein (RRW) geometry.\ We define these geometries using the Radon transform so that the gradient-flow velocities depend only on one-dimensional projections.\ This yields interacting-particle-based algorithms whose per-step cost follows from efficient Fast Fourier Transform-based evaluation of the required 1D convolutions. We additionally provide numerical experiments that study the performance of the proposed algorithms and compare convergence behavior and quantization. Finally, we prove some theoretical results including well-posedness of the flows and long-time convergence guarantees for the RRW flow. 
\end{abstract}

\begin{keywords}Interacting-particle sampling; sampling Gibbs distributions;  Wasserstein gradient flows; variational inference
\end{keywords}

\begin{MSC}
 65M75, 35Q62, 35Q68, 35Q70, 62-08, 82C22, 49Q22
\end{MSC}


\section{Introduction}

\emptyfootnote{\quad\:\, Companion code: \url{https://github.com/slepcev/Radon-Wasserstein-Gradient-Flow}}%

Sampling high-dimensional distributions is an essential task in the sciences \cite{liu08MC,Leimkuhler15MD,rubinstein2016simulation}, Bayesian inference \cite{Gelman14Bayes,Stuart10Bayes}, and other domains \cite{brooks2011handbook,chib2001markov,luengo2020survey,Rasmussen06GP}. The most widely used approaches for sampling in high dimensions are based on Markov processes with invariant measure equal to the target, 
such as Metropolis--Hastings (MH), Langevin Monte Carlo (LMC),  Hamiltonian Monte Carlo (HMC), and variants such as the Metropolis Adjusted Langevin Algorithm (MALA).

While these approaches are applicable to high-dimensional problems, for complex energy landscapes the convergence can be very slow and may be difficult or impossible to ensure in practice. In recent years alternative approaches that take into account global information have been developed. Variational  Inference~\cite{blei2017variational,zhang2018advances,dhaka2021challenges,graves2011practical,lambert2022variational} refers to approaches in which approximating the target measure is achieved by minimizing a functional over a parameterized family of measures, such as Gaussians or product measures. Variational Inference typically offers faster convergence than MCMC methods, at the cost of introducing bias due to the restriction to a parameterized family.

Another line of approaches that takes global information into account is based on interacting particles. Some of these approaches are extensions of MCMC that benefit from taking a global point of view: for example, the affine-invariant ensemble sampler of~\cite{GoodmanWeare10} adapts to the geometry of the landscape, birth–death methods~\cite{lindsey2022ensemble,lu2019accelerating} enable nonlocal transport to overcome multimodality, and annealing and Sequential Monte Carlo (SMC)~\cite{Chopin20} address multimodality by introducing families of interpolating measures. Other interacting-particle approaches discretize flows that evolve toward the target distribution. One example of such a flow is the Fokker--Planck equation---the gradient flow of the Kullback--Leibler (KL) divergence with respect to the Wasserstein geometry on the space of probability measures. To approximate this flow with interacting particles, authors have developed the ``blob method,'' which modifies the functional~\cite{carrillo2019blob}, as well as probability-flow methods that estimate the score (needed to compute the velocity) via score matching with deep neural networks~\cite{maoutsa20,boffi2023probability}. A further direction modifies the geometry so that the resulting KL gradient flow can be more directly approximated by interacting particles, potentially in high dimensions. In particular, Stein Variational Gradient Descent (SVGD) \cite{liu2016stein,liu2016kernelized,lu2019scaling} considers a more restrictive geometry than the Wasserstein geometry that penalizes velocities in a Reproducing Kernel Hilbert Space (RKHS). This allows the flow to be approximated by interacting particles whose positions satisfy an ODE system. A downside of SVGD is that the complexity of each step is $O(n^2 d)$ where $n$ is the number of particles and $d$ is the dimension. Additionally, convergence can be slow, as the kernels typically need to be wide, and the approximation of $d$-dimensional convolutional quantities from $n$ particles can degrade when $n$ is not at least comparable to $d$.

 The need to develop approaches whose complexity is subquadratic in $n$ and that apply to high-dimensional problems motivated us to introduce a new geometry on the space of probability measures, the Radon--Wasserstein (RW) geometry. This geometry only allows velocities that can be decomposed as an average over ``one-dimensional'' movements. Considering such``simple" velocities enables for accurate approximation in high dimensions. We describe the geometry precisely in Section~\ref{sec:RWintro}. Here we highlight the gradient flow of the KL divergence in the RW geometry, which is the basis for the algorithms we develop. Letting $R^\theta$ denote the Radon transform,
\[R^\theta f(p):=\int_{\{y^\theta\in\R^d:\ \theta\cdot y^\theta=0\}}f(p\theta+y^\theta)\,dy^\theta\qquad \theta\in\S^{d-1},\; p\in\R,\]
the Radon--Wasserstein gradient flow of the KL divergence with target measure $\pi\,\propto\, e^{-U}$ is given by the continuity equation 
\[\partial_t\rho_t + \nabla\cdot(\rho_t v) = 0,
     \qquad v(t,x) = -\fint_{\S^{d-1}} \theta \left(\frac{ \frac{\partial}{\partial p} ( R^\theta\rho_t) + R^\theta(\rho_t\nabla_\theta U)}{R^\theta\rho_t}\right)(x\cdot\theta)\,d\theta,\]
where $p$ is the one-dimensional variable in the Radon transform domain and $\nabla_\theta$ denotes the directional derivative. We also introduce a variant of this geometry, the Regularized Radon--Wasserstein (RRW) geometry, and derive the corresponding KL gradient flow (see Proposition~\ref{prop:RRWgf} and Figure~\ref{fig:trajectories_plot}).

The two key properties of these approaches---namely, that the velocities can be well approximated in high dimensions and that each step of our scheme can be performed in $O(nd)$ operations---both stem from the fact that velocities can be computed from one-dimensional projections of the evolving distribution. To build these methods and justify them, we analyze the underlying equations, introduce interacting-particle discretizations, design projection-based algorithms, and prove convergence. Below we provide a detailed outline of our work.

\begin{figure}[t]
\centering
\includegraphics[width=0.72\textwidth,
  clip]{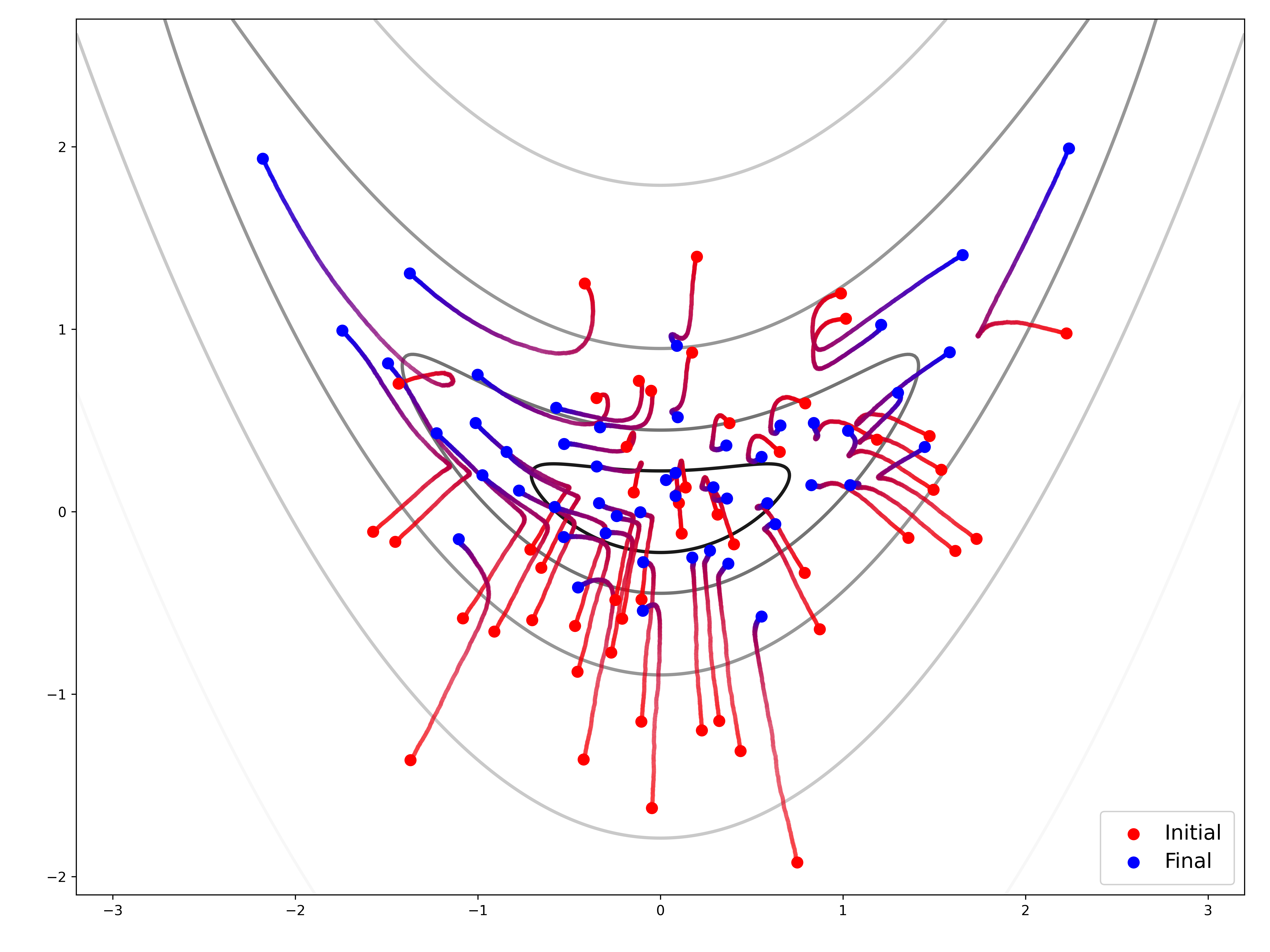}
     \caption{Trajectories of the Regularized Radon--Wasserstein (RRW) gradient flow of the KL divergence~\eqref{eq:RRWgf} for a $2$-dimensional Rosenbrock ``banana" target distribution with potential $U(x)=\frac12 ( x_1^2 +10(x_2 - 0.4x_1^2)^2)$. Trajectories were generated using Algorithm \ref{algo:skeleton} with Routine~\ref{rout:KDRW_fft} and $n=50$ particles. The initial particles were i.i.d.\ sampled from a centered Gaussian with covariance matrix $0.8I_2$.}
\label{fig:trajectories_plot}
\end{figure}

\subsection{Outline} 
In the subsequent subsections, we review related work (Subsection~\ref{sec:related}) and introduce the notation used throughout the paper (Subsection~\ref{sec:notation}).

After reviewing the Radon transform in Subsection~\ref{sec:Radon} we introduce the Radon--Wasserstein metric tensor in Subsection~\ref{sec:RWintro} and the Regularized Radon--Wasserstein metric tensor in Subsection \ref{sec:RRWintro}. The associated gradient flows of the KL divergence are introduced in Subsections~\ref{sec:RWgf} and \ref{sec:RRWgf}, as well as a kernel density–regularized version of the Radon--Wasserstein flow---the Kernel-Density Radon-Wasserstein (KDRW) flow---in Subsection~\ref{sec:KDRWgf}. The details of the derivation of the gradient flows are presented in Subsection~\ref{subsec:GFDeriv}.

In Section~\ref{sec:IPS} we present the interacting-particle approximations of the gradient flows and the associated algorithms. More precisely, in Subsection~\ref{sec:IPSKDRW} we introduce an interacting particle approximation of the  Kernel-Density Radon--Wasserstein system, as well as an efficient way to compute the velocities using the Fast Fourier Transform (FFT). The interacting-particle approximation for the Regularized Radon--Wasserstein flow and the associated routines for computing the velocity are given in Subsection~\ref{sec:IPSRRW}. Subsection~\ref{subsec:comp} is devoted to a discussion of the computational complexity of the algorithms. Finally, in Subsection~\ref{sec:IPSextensions} we discuss an alternative fast algorithm for computing velocities when the regularizing kernel is a Laplace kernel that does not use the FFT. 

Section~\ref{sec:experiments} is devoted to investigating the properties and performance of the proposed algorithms. This includes performance across different kernel bandwidths in Subsection~\ref{sec:bandwidths}, comparison of convergence over time in Subsection~\ref{sec:convergence}, observed computational complexity of steps in Subsection~\ref{subsec:obs_comp}, and the approximation error of the final states over different numbers of particles in Subsection~\ref{sec:quantization}.

Section~\ref{sec:theory} is devoted to establishing theoretical results regarding the introduced flows and algorithms. In particular, Theorem~\ref{thm:specific_existence} establishes existence and uniqueness of KDRW and RRW flows. Theorem~\ref{thm:specific_stability} shows that the solutions are stable with respect to Wasserstein distance. Since the stability estimate also applies to the particle system, it yields mean-field convergence of the interacting-particle system to the continuum solution as $n\rightarrow \infty$. Theorem~\ref{thm:specific_sgd_conv} establishes the convergence of the stochastic gradient descent algorithms introduced in Section~\ref{sec:IPS} and used in the experiments in Section~\ref{sec:experiments}. Finally, Theorem~\ref{thm:long_time_convergence} establishes the qualitative convergence of solutions to the RRW gradient flow equation to the target density as $t \to \infty$. 

In Appendix~\ref{appendix:Otto} we recall some of the background on geometry of optimal transport and Otto calculus. Appendix~\ref{ap:continuity} proves an entropy balance identity for continuity equations, and Appendix~\ref{app:theory} collects technical background and proof details for some of the results in Section~\ref{sec:theory}.

\subsection{Related work} \label{sec:related}

Our work falls in the large class of approaches that are based on recasting sampling as optimization over the space of probability measures. There are several related approaches in this direction. Variational Inference \cite{blei2017variational, wainwright2008graphical, zhang2018advances} considers optimization over a parameterized family of measures such as product measures, Gaussians, and mixtures of Gaussians. 
Variational Inference scales well with dimension and offers fast performance, at the expense of the bias that minimizing over a restrictive class of measures introduces. Rigorous analyses of bias and convergence are available in specific regimes, notably for log-concave targets. In particular,  \cite{arnese2024convergenceVI, bhattacharya2025CAVI, jiang2025algorithms} and \cite{lavenant2024convergence} establish convergence of Coordinate Ascent Variational Inference (CAVI) and related algorithms to the minimizer within the class for log-concave targets.

More closely aligned to this paper are works that consider gradient flows of KL divergence or other functionals and their interacting particle approximations. In seminal work by Jordan, Kinderlehrer, and Otto \cite{jordan1998variational} the Fokker--Planck equation was shown to be the Wasserstein gradient flow of the KL divergence. We note that the Fokker--Planck equation describes the mean-field behavior of particles following (overdamped) Langevin dynamics used in the LMC sampling algorithm. Although LMC is typically presented via discretizations of Langevin dynamics, the underlying Wasserstein gradient-flow viewpoint has been exploited to design variants and to prove convergence results; see Wibisono \cite{wibisono2018sampling} and the book by Chewi \cite{chewi2023book}. Other works more directly exploit the gradient flow structure to introduce interacting-particle systems. The ``blob method'' of  Carrillo, Craig, and Patacchini, \cite{carrillo2019blob} is based on regularizing the KL divergence so it has a finite value for discrete measures. The resulting Wasserstein gradient flow is then a system of interacting particles for discrete initial data. We note that this regularization by a convolution introduces bias that is proportional to the bandwidth of the convolution. This necessitates using convolutions with narrow bandwidth which suffers from the curse of dimensionality.

Stein Variational Gradient Descent (SVGD), introduced by Liu and Wang~ \cite{liu2016stein,liu2017stein}, offers a different route to obtaining an interacting particle system for a gradient flow of the KL divergence. Namely, SVGD is the gradient flow of the KL divergence in a geometry on the space of measures where---instead of the kinetic energy as for the Wasserstein geometry---one considers an RKHS norm on the space of velocities. This yields an integral equation where the velocities are more regular and are meaningful for discrete measures. This gives rise to the SVGD interacting-particle system, and the resulting geometry on the space of measures is now called the Stein geometry. Lu, Lu and Nolen \cite{lu2019scaling} provide important theoretical results for SVGD that include well-posedness, mean-field convergence of interacting-particle systems towards the continuum equation and long-time asymptotic results. The subsequent work of Korba et al.~\cite{korba2020non} provides discrete-time descent properties for KL under suitable stepsizes and stronger long-time convergence results for the continuum equation. The Stein geometry has been further studied  by Duncan, N\"usken, and Szpruc \cite{duncan2023geometry}, who show that due to the integral nature of the equation, the linearized system does not have a spectral gap when the kernel is regular, hence one cannot expect exponential convergence. Finally, exponential convergence for continuous-time SVGD was obtained for certain singular, exponentially growing kernels via a Stein log-Sobolev framework by  Carrillo, Skrzeczkowski, and Warnett \cite{carrillo2024stein}.

Turning to finite-$N$ behavior, Shi and Mackey \cite{ShiMackey2023finite} established finite-particle convergence rates for SVGD, later improved by Banerjee, Balasubramanian, and  Ghosal \cite{Banerjee2024Improved}. In terms of the non-asymptotic analysis of SVGD for a limited number of particles,
Ba et al. \cite{ba2021understanding} shows that for Gaussian targets, unless the number of particles is substantially higher than the dimension, the variance of the samples obtained by SVGD will be substantially lower than the variance of the target distribution. This limits the applicability of SVGD in high dimensional problems.

He,  Balasubramanian, Sriperumbudur, and Lu \cite{HeBalasub25} recently introduced gradient flows and corresponding discretizations that interpolate between SVGD and the Fokker--Planck equation. 
The connection between SVGD and stochastic processes was investigated by N\"usken and Renger \cite{Nusken23}.
In a different extension, Chen, et al. 
\cite{chen2023gradient} develop affine-invariant interacting-particle variations, and Gaussian approximations, of Langevin dynamics and of SVGD.
A further direction towards building interacting particle samplers includes 
consensus-based sampling, Carrillo, Hoffmann, Stuart, and Vaes \cite{Carrillo2022CBS}.

We note that there are several works that, similar to this paper, exploit projections to make the computation and approximation more manageable
 in high dimensions. In particular, Gong, Li and Hern\'andez-Lobato \cite{gong2020sliced} introduce the Sliced Kernelized Stein Discrepancy and use a related geometry to compute flows. Liu, Zhu, Ton, Wynne and Duncan \cite{liu2022grassmann} extend this approach to projections on arbitrary dimensional subspaces. While these flows alleviate some of the issues SVGD encounters, the per-step complexity is $O(n d(n+ d))$ or higher. The works of Chen and Ghattas
\cite{chen2020projected} and of Wang, Chen, and Li \cite{wang2022projected} approximate the flows based on projections to suitable subspaces, which allows for computational efficiency at the cost of introducing bias. In this sense this is a different approach to tackling the difficulties of high dimensional sampling.

There are also exciting recent works that develop interacting-particle methods for high-dimensional Fokker--Planck equations \cite{boffi2023probability,maoutsa20,Reich2021,shen2022self}. Fokker--Planck equations are solved using particles in the works of Maoutsa,  Reich, and  Opper~\cite{maoutsa20} and Reich and Weissman~\cite{Reich2021}. The term involving the score is estimated using variational score matching. 
The papers consider kernel based families of functions and works well in low and moderate dimensions. Boffi and Vanden-Eijnden \cite{boffi2023probability} and Shen et al. \cite{shen2022self} take a similar point of view but use deep neural networks for estimating the score based on particle positions. This approach allows for  high-dimensions, as demonstrated by Boffi and Vanden-Eijnden \cite{boffi2024active}. We remark that, although this family of approaches applies to high-dimensional problems, it implicitly relies on strong assumptions about the score’s structure, since the score must be learned in high dimensions from a limited number of particles. Moreover, training deep networks to approximate the score can require substantial computational resources.

\subsection{Notation} \label{sec:notation}
\begin{itemize} \addtolength{\itemsep}{-4pt}
    \item $\Pc(\R^d)$ denotes the space of probability measures on $\R^d$, and $\Pc_2(\R^d)$ denotes the subset of measures with finite second moment. 
    \item Given $\mu\in\Pc_2(\R^d)$ we let
    \[\|\mu\|_{\Pc_2}:=\left(\int_{\R^d} |x|^2\,d\mu(x)\right)^{1/2}.\]
    \item $\mathcal{W}(\mu,\nu)$ denotes the Wasserstein distance between $\mu,\nu\in\Pc_2(\R^d)$. That is,
    \[\mathcal{W}(\mu,\nu):=\inf_{\gamma\in\Gamma(\mu,\nu)}\left(\int_{(\R^d)^2}|x-y|^2\,d\gamma(x,y)\right)^{1/2}\]
    where $\Gamma(\mu,\nu)\subset \Pc((\R^d)^2)$ is the set of all couplings of $\mu$ and $\nu$.
    \item $C^k_b(\R^d)$ denotes the space of functions with bounded derivatives up to order $k$ with norm
    \[\|f\|_{C^k_b}=\sup_{1\leq j\leq k} \|D^j f\|_{\infty}.\]
    \item $T\#\nu$ refers to the pushforward of a finite signed measure $\nu$ under a measurable map $T$. That is, $T\#\nu(B)= \mu(T^{-1}(B))$ for all measurable $B$.
    \item For $\theta\in\S^{d-1}$, $\nabla_\theta$ denotes the directional derivative $\theta\cdot\nabla$.
    \item Given a function $f$ on $\S^{d-1}$, 
    \[\fint_{\S^{d-1}} f(\theta)\,d\theta\]
    denotes integration over the normalized $d-1$-dimensional volume form.
    \item $\RD^d$ denotes the $d$-dimensional Radon domain
    \[\RD^d:=(\S^{d-1}\times \R)/\sim\]
    where $\sim$ is the equivalence relation $(\theta,p)\sim(-\theta,-p).$ This is endowed with the measure inherited from $\S^{d-1}\times \R$. We thus abuse notation and denote integration over $\mathbb{D}$ by $d\theta dp$.
    \item $\ux^n$ denotes a vector $(x^1,\dotsc,x^n)\in (\R^d)^n$.
\end{itemize}

\section{Radon--Wasserstein metric tensor and gradient flows}\label{sec:defs}

In this section, we recall the definition of the Radon transform and its dual, define the Radon--Wasserstein (RW) metric tensor and its variants, and formally derive the gradient flow equations of the KL divergence with respect to these metric tensors. 

\subsection{The Radon transform and its dual} \label{sec:Radon}

We first recall the Radon transform, which maps measures (and functions) on $\R^d$ to measures (and functions) on the Radon domain $\mathbb{D}^d$. Throughout, we associate $\mathbb{D}^d$ with $\S^{d-1}\times \R$ and functions on $\mathbb{D}^d$ with functions on $\S^{d-1}\times \R$ that are invariant under the equivalence relation $(\theta,p)\sim(-\theta,-p)$. That is, if $g$ is a measurable function on $\mathbb{D}^d$, then we implicitly associate it with an even function $g$ on $\S^{d-1}\times \R$ and let
\[\int_{\RD^d} g(\theta,p)\,d\theta dp=\int_{\R} \fint_{\S^{d-1}} g(\theta,p)\,d\theta dp.\]
For a comprehensive review of the Radon transform and its properties, see~\cite{helgason2011integral}. 

\begin{definition} Letting $\Pi_\theta$ denote the projection $\Pi_\theta(x)=x\cdot\theta$, for any finite signed measure $\mu$, the \textit{Radon transform} of $\mu$ is defined by $R^\theta\mu:=\Pi_\theta\#\mu$. $R\mu$ thus denotes the finite signed measure on $\mathbb{D}^d$ defined by
\[R\mu(\Theta \times B)=\fint_{\Theta} R^\theta\mu(B)\,d\theta,\]
where $\Theta\subset \S^{d-1}$ and $B\subset \R$ are Borel measurable sets.
\end{definition}

More specifically, if $f\in L^1(\R^d)$, we use the convention that
\[Rf(\theta,p):=R^\theta f(p):=\int_{\theta^\perp} f(p\theta+y^\theta)\,dy^\theta, \quad \theta\in\S^{d-1},p\in\R,\]
where $\theta^\perp:=\{y\in\R^{d}:y\cdot\theta=0\}$ and $dy^\theta$ is the $(d-1)$-dimensional Lebesgue measure on $\theta^\perp$. This is consistent with the definition above since if $d\mu(x)= f(x)\,dx$, then $d(R\mu)(\theta,p)=Rf(\theta,p)\,d\theta dp$.

The dual Radon transform is then identified as follows.
\begin{definition}
For $g\in L^1_{loc}(\RD
^d)$, $R^* g$ denotes the \textit{dual Radon transform}
\[R^*g(x):=\fint_{\S^{d-1}}g(\theta,x\cdot\theta)\,d\theta,\quad x\in\R^d,\]
where $d\theta$ is the $(d-1)$-dimensional Hausdorff measure on $\S^{d-1}$.
\end{definition}

The main property of the Radon transform and its dual, which is easily established by Fubini's theorem, is the following adjoint property.

\begin{proposition}\label{prop:adjoint}
If $f\in L^1(\R^d)$ and $g\in L^\infty(\mathbb{D}^d)$, then
\begin{equation}\label{eq:adjoint}
\int_{\RD^d} Rf(\theta,p) g(\theta,p)\,d\theta dp=\int_{\R^d} f(x)R^*g(x)\,dx.
\end{equation}
\end{proposition}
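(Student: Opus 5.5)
The plan is to write the left-hand side of \eqref{eq:adjoint} as an iterated integral over $\S^{d-1}\times\R\times\theta^\perp$, change variables for each fixed $\theta$, and then exchange the order of integration using Fubini--Tonelli.

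First, absolute integrability, which is what licenses the interchanges. For fixed $\theta\in\S^{d-1}$, the map
\[
(p,y^\theta)\mapsto p\theta+y^\theta,\qquad \R\times\theta^\perp\to\R^d,
\]
is a linear isometry. Its Jacobian is therefore $1$, and Tonelli gives
\[
\int_\R\int_{\theta^\perp}|f(p\theta+y^\theta)|\,dy^\theta\,dp=\|f\|_{L^1(\R^d)} .
\]
Two consequences follow. First, $Rf(\theta,\cdot)$ is defined for a.e.\ $p$ and $\|R^\theta f\|_{L^1(\R)}\le\|f\|_{L^1}$. Second, since $g\in L^\infty$,
\[
\int_{\S^{d-1}\times\R\times\theta^\perp}|f(p\theta+y^\theta)|\,|g(\theta,p)|\le \|g\|_\infty\|f\|_{L^1},
\]
after averaging over $\theta$. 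So the triple integrand is absolutely integrable.

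Second, the computation. I will use the convention that integration over $\RD^d$ means integrating the even extension over $\fint_{\S^{d-1}}\int_\R$. Then
\begin{align*}
\int_{\RD^d}Rf\,g
&=\fint_{\S^{d-1}}\int_\R\int_{\theta^\perp}f(p\theta+y^\theta)\,g(\theta,p)\,dy^\theta\,dp\,d\theta\\
&=\fint_{\S^{d-1}}\int_{\R^d}f(x)\,g(\theta,x\cdot\theta)\,dx\,d\theta .
\end{align*}
In the second line, for fixed $\theta$, I substitute $x=p\theta+y^\theta$ and use $p=x\cdot\theta$. Fubini then swaps the $\theta$ and $x$ integrals, giving $\int_{\R^d}f(x)\,\fint_{\S^{d-1}}g(\theta,x\cdot\theta)\,d\theta\,dx=\int_{\R^d}f\,R^*g$.

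The one technical point is measurability. The function $(\theta,p,y)\mapsto f(p\theta+y)\,g(\theta,p)$ must be jointly measurable on a product space, but $\theta^\perp$ depends on $\theta$, so this is not literally a product. I would handle this by writing the inner integral as the disintegration of Lebesgue measure along the level sets of $x\cdot\theta$. Equivalently, I would first verify the formula for $f\in C_c(\R^d)$ and $g$ continuous and bounded, where everything is continuous, and then extend by density. This extension works because both sides are continuous in $f$ in $L^1$ (by the bound above). They are also continuous in $g$ under bounded a.e.\ convergence, by dominated convergence, using the same integrability bound.
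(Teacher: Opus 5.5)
Your proposal is correct and is the Fubini argument the paper has in mind; the paper itself gives no details beyond saying the identity is easily established by Fubini's theorem. Your isometric change of variables $(p,y^\theta)\mapsto p\theta+y^\theta$, the bound $\|R^\theta f\|_{L^1(\R)}\le\|f\|_{L^1(\R^d)}$ that licenses the interchange, and your handling of joint measurability supply the justification the paper leaves implicit.
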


\subsection{Radon--Wasserstein metric tensor} \label{sec:RWintro}

This discussion is motivated by the Otto Calculus~\cite{otto2001geometry}, where $\mathcal{P}_2(\R^d)$ endowed with the Wasserstein geometry can be formally viewed as a Riemannian manifold with metric tensor given by
\begin{equation} \label{eq:Wass_tensor}
\overline g_\rho(s,s):=\inf_{v \::\: s=-\nabla\cdot(\rho v)}g_\rho(v,v) \quad \text{ with }\: g_\rho(v,v) = \int_{\R^d} |v(x)|^2\,d\rho(x).
\end{equation}
Above, $s$ is a mass-preserving variation of the density and $v$ provides its Lagrangian description: the change of density $s$ is achieved by infinitesimally transporting the density along $v$. The quadratic form $g_\rho(v,v)$ provides a metric tensor on velocity fields and induces the metric on density variations via the minimization above. This can be viewed formally as an isometric submersion.

One can additionally associate to $s$ the minimizing (tangent) velocity field $v$ and identify the tangent space $T_\rho\mathcal{P}_2(\R^d)$ with $\overline{\{\nabla\phi:\phi\in C^\infty_c(\R^d)\}}^{L^2(\rho)}$; see \cite{ambrosio2005gradient} for rigorous discussion of the tangent structure of the spaces of probability measures endowed with Wasserstein metric. 

In our discussions we introduce new metric tensors on velocity fields, which induce corresponding metrics on spaces of probability measures as in~\eqref{eq:Wass_tensor}. These replace the Wasserstein choice $g_\rho(v,v)=\|v\|_{L^2(\rho)}^2$ with metric tensors involving the Radon transform and its dual. Our choices of $g_\rho$ are motivated by the following consideration: to accurately approximate velocities—particularly those arising in gradient flows—in high dimensions with a limited number of particles, the admissible velocity fields must have low information-theoretic complexity. Accordingly, we restrict the class of allowable velocities to those that decompose into simple uni-directional movements.

We first introduce an auxiliary metric tensor.

\begin{definition}\label{def:sliced_metric}
Given a direction $\theta\in \S^{d-1}$, the \textit{direction-$\theta$ Wasserstein metric tensor} at $\rho\in\mathcal{P}_2(\R^d)$ is defined by
\begin{equation}
    g_{\rho}^\theta(v^\theta,v^\theta) :=
    \begin{cases}
    \displaystyle{\int_{\R^d} u^\theta(x\cdot \theta)^2d\rho(x)} & \text{if } \exists u^\theta: \R \to\R\, \text{ such that }  v^\theta(x) = \theta u^\theta(x\cdot \theta)\text{ on } \supp(\rho) \\
     \infty & \text{otherwise}.
    \end{cases}
\end{equation}
Above, $u^\theta$ is assumed to be in $L^2(R^\theta\rho)$.
\end{definition}

That is, $g^\theta$ is the Wasserstein metric tensor restricted to velocity fields that only move mass in direction $\theta$ and only depend on the projection of the position to direction $\theta$. We can thus naturally define a metric tensor over velocity fields that can be decomposed into unidirectional velocity fields via
\[g_\rho(v,v):=\fint_{\S^{d-1}} g_\rho^\theta(v^\theta,v^\theta)\,d\theta\quad\text{ if}\quad v= \fint_{\S^{d-1}}  v^\theta \,d\theta \;\;\text{ on }\supp(\rho).\]
More precisely, expanding out the definition of $g_\rho^\theta$ and using the definition of the dual Radon transform $R^*$, we define the \textit{Radon--Wasserstein metric tensor} below.

\begin{definition}\label{def:RW} The \textit{Radon--Wasserstein metric tensor} at $\rho\in\mathcal{P}_2(\R^d)$ is defined by
\[g_\rho(v,v):=\inf_{\substack{v=R^*(\theta u)\\\text{on } \supp(\rho)}}\left\{\fint_{\S^{d-1}} \int_{\R^d} u(\theta,x\cdot\theta)^2 \,d\rho(x)\,d\theta\right\},\]
where the infimum is over all $u\in L^2(R\rho)$ such that $v=R^*(\theta u)$ on $\supp(\rho)$.
\end{definition}

We note that since $R^*$ is injective~\cite[Chapter 1 $\mathsection$3]{helgason2011integral}, if $\supp(\rho)=\R^d$ then there is at most one $u$ such that $v=R^*(\theta u)$. Thus, in this case, we have the simplified expression
\[g_\rho(v,v)=\begin{cases}\int_{\RD^d} u^2 d R\rho &\text{if }v=R^*(\theta u)\\
\infty&\text{otherwise.}
\end{cases}
\]
Additionally, Jensen's inequality and Proposition~\ref{prop:adjoint} imply that
\[\int_{\R^d}\bigg|\fint_{\S^{d-1}}\theta u(\theta,x\cdot\theta)\,d\theta\bigg|^2\,d\rho\leq \int_{\R^d} \fint_{\S^{d-1}}u(\theta,x\cdot\theta)^2\,d\theta\,d\rho=\int_{\R^d} R^*(u^2)\,d\rho=\int_{\RD^d} u^2 \,d(R\rho).\]
Thus if $u\in L^2(R\rho)$, then $R^*(\theta u)\in L^2(\rho)$ and $\|v\|_{L^2(\rho)}^2\leq g_\rho(v,v)$.

We make the following immediate remarks.

\begin{remark}
The metric-tensor $g_\rho$ formally defines a distance on $\mathcal{P}_2(\R^d)$ via the Benamou--Brenier-like formula
\[\mathcal{W}_R(\mu,\nu):=\inf\left\{ \left(\int_0^1 g_{\rho_t}(v_t,v_t)\,dt\right)^{1/2}:\partial_t\rho_t+\nabla\cdot(v_t\rho_t)=0,\rho_0=\mu,\rho_1=\nu\right\},\]
see Appendix~\ref{appendix:Otto}. Then, since $\|v\|_{L^2(\rho)}\leq g_\rho(v,v)$, we have that $\mathcal{W}(\mu,\nu)\leq \mathcal{W}_R(\mu,\nu)$ for all $\mu,\nu\in \mathcal{P}_2(\R^d)$.
\end{remark}

\begin{remark}
Suppose that $\rho=\sum_{i=1}^n m_i \delta_{x^i}$ where $m_i>0$ sum to $1$ and the points $x^i\in \R^d$ are distinct. For $i\neq j$, the set of directions $\theta\in \S^{d-1}$ such that
$x^i\cdot\theta = x^j\cdot\theta$ has spherical measure $0$, hence $x^i\cdot\theta$ are distinct for a.e.\ $\theta$. After straightforward optimization, this implies that for any $v \in L^2(\rho)$
\[g_\rho(v,v)=d\sum_{i=1}^n m_i |v(x^i)|^2=d\|v\|_{L^2(\rho)}^2.\]
Thus, up to a dimensional normalization constant, at discrete measures the Radon--Wasserstein metric tensor is the same as the $L^2(\rho)$ inner product, and the tangent space is the same as that of the Wasserstein geometry.

In contrast, if $\rho(dx)=\rho(x)dx$ is such that $\rho(x)$ is continuous and nonnegative, then so is $R\rho$. This implies that if $g_\rho(v,v)<\infty$, then there must exist $u\in L^2_{loc}(\mathbb{D}^d)$ such that $v=R^*(\theta u)$. Due to the property that $R^*: L^2_{loc}(\mathbb{D}^{d-1})\rightarrow H_{loc}^{(d-1)/2}(\R^d)$~\cite[Lemma 2]{Boman2006StableInversionHalfData}, this implies that $v\in H_{loc}^{(d-1)/2}(\R^d)$. Thus at smooth and nondegenerate measures, only velocity fields that are locally much smoother than $L^2$ are admissible.

Together these examples show that the RW metric tensor has a very different nature at measures with full support than at discrete measures. At discrete measures it behaves like the Wasserstein metric tensor, while at measures with continuous density it requires the velocities to be more regular. This behavior is dual to that of the Sliced Wasserstein (SW) metric as observed in \cite{park2023geometry}: the SW metric tensor agrees with the Wasserstein metric tensor at discrete measures but allows tangent velocity fields that live in negative regularity Sobolev spaces at continuous measures. 
\end{remark}

\subsection{Regularized Radon--Wasserstein metric tensor} \label{sec:RRWintro}

We also introduce a regularization of the Radon--Wasserstein geometry. This will result in gradient flows with smoother velocities. This has both theoretical and computational advantages. It will depend on some regularizing kernel $k:\R\rightarrow [0,\infty)$, and a small parameter $\eps>0$. Again, we first introduce a (unidirectional) auxiliary metric tensor.

\begin{definition}\label{def:regularized_sliced_metric}
Given a direction $\theta\in \S^{d-1}$, a kernel $k:\R\rightarrow [0,\infty)$, and $\eps>0$, the \textit{direction-$\theta$ Regularized Radon--Wasserstein metric tensor} is given by
\begin{align*}
    &g_{\rho}^{\theta,k,\eps}(v^\theta,v^\theta)
    \\&\ :=
    \begin{cases}
    \displaystyle{\int_{\R} (u^\theta)^2(k*R^\theta\rho+\eps)\,dp}& \text{if } \exists u^\theta: \R \to\R\, \text{ such that }  v^\theta(x) = \theta k*u^\theta(x\cdot \theta)\text{ on } \supp(\rho)\\
     \infty & \text{otherwise}.
    \end{cases}
\end{align*}
Above, $u^\theta$ is assumed to be in $L^2(\R)$.
\end{definition}

If $\eps=0$ and $k(p)=\delta(p)$ where $\delta(p)$ denotes the Dirac delta on $\R$, then the above definition concurs with Definition~\ref{def:sliced_metric} since
\[\int_{\R} (u^\theta)^2 \,d(R^\theta\rho)=\int_{\R^d} u^\theta(x\cdot\theta)^2d\rho(x)\]
by definition of $R^\theta\rho$.

The Regularized Radon--Wasserstein metric tensor is then defined in an analogous way as to Definition~\ref{def:RW}.

\begin{definition}\label{def:RRWM} For a kernel $k:\R\rightarrow [0,\infty)$ and $\eps>0$, the \textit{Regularized Radon--Wasserstein metric tensor} is defined by
\begin{equation*}
g^{k,\eps}_\rho(v,v):=\inf_{\substack{v=R^*(\theta k*u)\\\text{on }\supp(\rho)}} \int_{\RD^d} u^2(k*R\rho+\eps)\,d\theta dp
\end{equation*}
where the infimum is over all $u\in L^2(\RD^d)$ such that $v=R^*(\theta k*u)$ on $\supp(\rho)$.
\end{definition}

\begin{remark}
If $k$ is a smooth mollifier and $\eps>0$, then $g_\rho^{k,\eps}(v,v)<\infty$ implies that $u\in L^2(\mathbb{D}^d)$, and thus $k*u$ and $v$ are smooth. This implies that if $\mu$ and $\nu$ are connected by a curve $\rho$ solving $\partial_t\rho+\nabla(v\rho)=0$ so that $\rho_0=\mu$, $\rho_1=\nu$, and
\[ \int_0^1 g^{k,\eps}_{\rho_t}(v,v)\,dt<\infty\]
then the supports of $\mu$ and $\nu$ must have the same number of connected components. This demonstrates that $\mathcal{P}_2(\R^d)$ would be foliated under the geometry defined by the RRW metric tensor.
\end{remark}

\subsection{Radon--Wasserstein gradient flow} \label{sec:RWgf}

Here we state the (formal) equations for gradient flows with respect to Radon--Wasserstein metric tensor. In particular, we consider a target probability measure with density $\pi\ \propto\ e^{-U}$ for a potential $U$. The Kullback--Leibler divergence or relative entropy is then defined below.

\begin{definition}
$\mathcal{F}(\rho)$ denotes the Kullback--Leibler divergence of a probability measure $\rho$ with respect to probability measure $\pi$
\[\mathcal{F}(\rho):=
\begin{cases}
\displaystyle{\int_{\R^d} \log\left(\frac{\rho}{\pi}\right)\, \rho\,dx} \quad & \text{if } \rho\ll\pi ,
\\\infty&\text{otherwise}
\end{cases}\]
\end{definition}

We can now identify the  gradient flow equation of the Kullback--Leibler divergence with respect to the Radon--Wasserstein metric tensor. By this we mean the following: at a measure $\rho$, the gradient $\text{grad}_g E(\rho)$ of a differentiable function $E$ with respect to the metric tensor $g_\rho$ is the unique tangent velocity field such that for all velocities $w$
\begin{equation}\label{eq:grad} 
g_\rho(\text{grad}_gE(\rho),w) = \diff|_\rho E(w) = \frac{\delta E}{\delta \rho}(- \nabla \cdot (\rho w)).
\end{equation}
The associated gradient flow equation is then $\partial_t \rho + \nabla \cdot (\rho v) = 0$ with $v=-\text{grad}_g E(\rho)$.

\begin{proposition}\label{prop:ori_pgf}
The gradient flow of $\mathcal{F}$ with respect to the Radon--Wasserstein metric tensor $g$ is given by the equation
\begin{align}
\label{eq:ori_pgf}
    &\partial_t\rho + \nabla\cdot(\rho v) = 0, \quad  \text{with}
     \quad v(t,x) = -\fint_{\S^{d-1}} \theta \left(\frac{\theta\cdot (R^\theta(\nabla\rho_t + \rho_t\nabla U))}{R^\theta\rho_t}\right)(x\cdot\theta)\,d\theta.   
\end{align}
\end{proposition}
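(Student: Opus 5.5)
We carry out the formal computation prescribed by \eqref{eq:grad}, working at a smooth positive density $\rho$ with full support, so that by injectivity of $R^*$ every admissible velocity has a unique representation $v=R^*(\theta u)$ and $g_\rho(v,v)=\int_{\RD^d}u^2\,d(R\rho)$. Polarizing, for $v=R^*(\theta u)$ and $w=R^*(\theta \tilde u)$ the metric tensor is
\[
g_\rho(v,w)=\int_{\RD^d} u\,\tilde u\,R\rho\,d\theta dp .
\]
Since the tangent vectors are parametrized by $u$, it suffices to identify the $u$ representing the gradient and verify \eqref{eq:grad} against all $\tilde u$.

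For the right-hand side of \eqref{eq:grad}, we use $\frac{\delta\mathcal F}{\delta\rho}=\log\rho+U+1$ and integrate by parts:
\[
\diff|_\rho\mathcal F(w)=\int_{\R^d}\nabla(\log\rho+U)\cdot w\,\rho\,dx=\int_{\R^d}(\nabla\rho+\rho\nabla U)\cdot R^*(\theta\tilde u)\,dx .
\]
Writing $R^*(\theta\tilde u)(x)=\fint\theta\,\tilde u(\theta,x\cdot\theta)\,d\theta$ and exchanging integrals, this becomes $\int_{\R^d} f\,R^*\tilde u\,dx$ with the $\theta$-dependent integrand $f=\theta\cdot(\nabla\rho+\rho\nabla U)$. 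A version of Proposition~\ref{prop:adjoint} applied for each fixed $\theta$ (Fubini over $\theta^\perp$) then gives
\[
\diff|_\rho\mathcal F(w)=\int_{\RD^d} R^\theta\bigl(\theta\cdot(\nabla\rho+\rho\nabla U)\bigr)(p)\,\tilde u(\theta,p)\,d\theta dp .
\]
Before this identity is meaningful on $\RD^d$, we must check that the integrand is even under $(\theta,p)\mapsto(-\theta,-p)$. It is: $R^{-\theta}h(-p)=R^\theta h(p)$, and the factor $\theta\cdot$ flips sign together with the sign flip of $\tilde u$'s partner $\theta\tilde u$. So the product $\theta\tilde u$ is even, as required.

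Comparing the two expressions, since $\tilde u$ is arbitrary in $L^2(R\rho)$, we obtain
\[
u_{\mathrm{grad}}=\frac{\theta\cdot R^\theta(\nabla\rho+\rho\nabla U)}{R^\theta\rho}.
\]
Consequently $\text{grad}_g\mathcal F(\rho)=R^*(\theta u_{\mathrm{grad}})$, and setting $v=-\text{grad}_g\mathcal F$ yields exactly \eqref{eq:ori_pgf}. Uniqueness of the gradient follows from the Riesz representation in $L^2(R\rho)$.

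Two points need care. The first, and the main obstacle, is justifying the pairing manipulations: we need $u_{\mathrm{grad}}\in L^2(R\rho)$, and we need $R\rho>0$ so that the division makes sense. We would handle this by restricting to the formal setting, namely smooth positive $\rho$ with enough decay and $U$ smooth, exactly as in the paper's formal Otto-calculus framework. The second is noting that $\theta\cdot R^\theta(\nabla\rho)=\partial_p R^\theta\rho$, which recovers the form stated in the introduction.
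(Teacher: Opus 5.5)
Your computation is correct at the paper's formal level and gives the same direction-wise formula, but it is organized differently.

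\textbf{The two routes.} The paper never polarizes $g_\rho$. It minimizes the Rayleigh functional $\frac12 g_\rho(v,v)+\diff|_\rho\mathcal F(v)$ and notes that the infimum over representations $v=R^*(\theta u)$ in Definition~\ref{def:RW} merges with the minimization over $v$ into one unconstrained minimization over $u$. It then minimizes $\frac12\int_\R u^2\,R^\theta\rho\,dp+\int_\R u\,\theta\cdot R^\theta(\nabla\rho+\rho\nabla U)\,dp$ separately for each $\theta$. It actually does this for the RRW tensor and recovers \eqref{eq:ori_pgf} by setting $k=\delta$, $\eps=0$.

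You instead verify the defining identity \eqref{eq:grad} directly by Riesz representation in $L^2(R\rho)$. This avoids the Rayleigh characterization of the gradient. The cost is that you need a polarization formula for $g_\rho$, hence your full-support assumption and your reliance on uniqueness of $u$.

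\textbf{The uniqueness claim.} That uniqueness, taken from the remark after Definition~\ref{def:RW}, does not actually hold. The map $u\mapsto R^*(\theta u)$ has a nontrivial kernel $N$ even when $\supp\rho=\R^d$. For example, take $u(\theta,p)=c(\theta)$ with $c$ odd and $\fint_{\S^{d-1}}\theta\,c(\theta)\,d\theta=0$; for $d=2$, $c=\cos 3\phi$ works. So $g_\rho$ is really a quotient norm, and $g_\rho(v,w)=\int_{\RD^d}u\,\tilde u\,R\rho\,d\theta dp$ holds only for representatives in $N^\perp$.

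Your conclusion survives with one extra line. The functional $\tilde u\mapsto\diff|_\rho\mathcal F(R^*(\theta\tilde u))$ vanishes on $N$, so its representer $u_{\mathrm{grad}}$ lies in $N^\perp$ and is the minimal-norm representative. The Rayleigh route gets this for free.

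\textbf{Parity.} Your parity remark should read as follows. $\tilde u$ and $u_{\mathrm{grad}}$ are odd under $(\theta,p)\mapsto(-\theta,-p)$; an even $\tilde u$ would give $R^*(\theta\tilde u)\equiv 0$. The factor $R^\theta\bigl(\theta\cdot(\nabla\rho+\rho\nabla U)\bigr)$ is also odd. Hence the integrand is even.
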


We verify the claim above in Subsection~\ref{subsec:GFDeriv}.
We note that the velocity in the equation above can also be written as
\[v(t,x)=-\fint_{\S^{d-1}} \theta \left(\frac{\partial}{\partial p}(\log R^\theta\rho_t)+\frac{R^\theta(\rho_t\nabla_\theta U)}{R^\theta\rho_t}\right)(x\cdot\theta)\,d\theta.\]
The first term in the integral on the right-hand side is the derivative of the logarithm of projected density, which can be calculated based on the projection itself; and the second term is the average of $\nabla U$ over the hyperplane that projects to a given point.

\subsection{Kernel-Density Radon--Wasserstein (KDRW) Flow} \label{sec:KDRWgf}

The velocity in \eqref{eq:ori_pgf} is not defined if $\rho$ is a discrete measure. In particular, the first term only makes sense for measures whose Radon transform is differentiable in $p$. For this reason, to create particle approximations of the Radon--Wasserstein gradient flow, we introduce a kernel-regularized version of the PDE that we call the \textit{Kernel-Density Radon--Wasserstein} (KDRW) flow.

To approximate the flow \eqref{eq:ori_pgf}, we again consider a regularizing kernel $k : \R \to [0, \infty)$ and a small parameter $\eps>0$. The kernel-density flow is then defined as follows.

\begin{definition}
The \textit{Kernel-Density Radon--Wasserstein flow} is given by the equation,
\begin{align}
\label{eq:KDRWgf}
    &\partial_t\rho + \nabla\cdot(\rho v) = 0, \quad 
     \quad v(t,x) = -\fint_{\S^{d-1}}\theta \left(\frac{k'*R^\theta\rho_t + k*R^\theta(\rho_t\nabla_\theta U)}{k*R^\theta\rho_t+\eps} \right)(x\cdot\theta)\,d\theta.
\end{align}
\end{definition}

One should think of the KDRW flow as an approximation of the RW flow~\eqref{eq:ori_pgf}. Formally, \eqref{eq:KDRWgf} converges to \eqref{eq:ori_pgf} as the kernel concentrates (i.e., as its bandwidth tends to zero) and $\eps \to 0$. In practice, however, if one starts from an $n$-particle approximation of $\rho_0$, the choice of kernel must depend on $n$ in order to balance regularization bias with finite-$n$ approximation error. In this sense, KDRW performs kernel density estimation of the one-dimensional projected densities that define the RW velocity field.

\subsection{Regularized Radon--Wasserstein (RRW) Gradient Flow} \label{sec:RRWgf}

We now define the formal gradient flow of the Kullback--Leibler divergence with respect to the Regularized Radon--Wasserstein metric tensor as defined in Definition~\ref{def:RRWM}. 
In contrast to Radon--Wasserstein gradient flow, the RRW gradient flow is well-defined for discrete measures, and is thus amenable to approximation by particles. 

For a regularizing kernel $k:\R\rightarrow[0,\infty)$ and a small parameter $\eps>0$, the \emph{Regularized Radon--Wasserstein} gradient flow of $\mathcal{F}$ is given below.

\begin{proposition}\label{prop:RRWgf} The gradient flow of $\mathcal{F}$ with respect to the Regularized Radon--Wasserstein metric tensor $g^{k,\eps}$ is given by
\begin{equation}
\label{eq:RRWgf}
\partial_t\rho+\nabla\cdot(\rho v)=0,\qquad v(t,x)=-\fint_{\S^{d-1}}\theta k*\left(\frac{k'*R^\theta\rho_t+k*R^\theta(\rho_t\nabla_\theta U)}{k*R^\theta\rho_t+\eps}\right)\,d\theta. 
\end{equation}   
\end{proposition}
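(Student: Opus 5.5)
We derive the flow formally, following the definition of gradient in \eqref{eq:grad}. We assume $\rho$ has full support, so that (by injectivity of $R^*$ and of convolution with $k$) each admissible velocity $v$ corresponds to a unique $u\in L^2(\RD^d)$ with $v=R^*(\theta\, k*u)$. Writing $w=R^*(\theta\,k*z)$ for another admissible velocity, polarization of Definition~\ref{def:RRWM} gives the inner product
\[
g^{k,\eps}_\rho(v,w)=\int_{\RD^d} u\,z\,(k*R\rho+\eps)\,d\theta dp .
\]
The plan is to compute $\diff|_\rho\mathcal{F}(w)$, express it as a pairing of $z$ against some function $h$ on $\RD^d$, and then read off $u$ from $u(k*R\rho+\eps)=h$.

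\textbf{Step 1.} The first variation is $\frac{\delta\mathcal F}{\delta\rho}=\log\rho+U+1$. Integration by parts therefore gives
\[
\diff|_\rho\mathcal F(w)=\int_{\R^d}(\nabla\rho+\rho\nabla U)\cdot w\,dx .
\]

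\textbf{Step 2.} Substitute $w(x)=\fint_{\S^{d-1}}\theta\,(k*z)(\theta,x\cdot\theta)\,d\theta$ and apply Proposition~\ref{prop:adjoint} componentwise. This yields
\[
\int_{\RD^d} R^\theta\big(\theta\cdot(\nabla\rho+\rho\nabla U)\big)(p)\,(k*z)(\theta,p)\,d\theta dp .
\]
Next we use the identity $R^\theta(\nabla_\theta\rho)=\partial_p R^\theta\rho$, which follows from differentiating the hyperplane integral in $p$ (or from $R^\theta\mu=\Pi_\theta\#\mu$ tested against $\phi(x\cdot\theta)$). With it the integrand becomes $\big(\partial_pR^\theta\rho+R^\theta(\rho\nabla_\theta U)\big)\,(k*z)$.

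We then move the convolution onto the other factor. For this we need $k$ even (as is implicit in the paper's kernels); in general $\check k$ appears. The result is
\[
\diff|_\rho\mathcal F(w)=\int_{\RD^d} z\,\big(k'*R^\theta\rho+k*R^\theta(\rho\nabla_\theta U)\big)\,d\theta dp,
\]
since $k*\partial_p f=k'*f$. We should also check that evenness in $(\theta,p)\mapsto(-\theta,-p)$ is preserved. It is: $\theta\cdot\nabla$ flips sign and $\partial_p$ flips sign together, so $h:=k'*R^\theta\rho+k*R^\theta(\rho\nabla_\theta U)$ is odd in a way compatible with $\theta\,z$.

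\textbf{Step 3.} The gradient condition \eqref{eq:grad} must hold for all $z\in L^2$. Matching the pairings, this forces
\[
u\,(k*R\rho+\eps)=h,\qquad\text{i.e.}\qquad u=\frac{h}{k*R^\theta\rho+\eps}.
\]
Here $\eps>0$ guarantees that the division is legitimate and that $u\in L^2$ when $h\in L^2$. Then $\mathrm{grad}\,\mathcal F=R^*(\theta\,k*u)$, and $v=-\mathrm{grad}\,\mathcal F$ is exactly \eqref{eq:RRWgf}.

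\textbf{Obstacles.} The main delicate point is the case where $\rho$ does not have full support, for example discrete $\rho$. There $u$ is not unique, and $g^{k,\eps}_\rho$ is an infimum over representatives. We handle this as in the isometric-submersion picture: the minimizing representative $u$ is $(k*R\rho+\eps)$-orthogonal to the set of all $z$ with $R^*(\theta\,k*z)=0$ on $\supp\rho$. Since $\diff\mathcal F(w)$ depends only on $w|_{\supp\rho}$, the functional $z\mapsto\int zh$ vanishes on that set. Consequently $u=h/(k*R\rho+\eps)$ is itself the minimal representative and the same formula results.

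A secondary point is the sign and evenness bookkeeping in moving $k*$ across, which we would verify carefully.
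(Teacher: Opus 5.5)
Your derivation is correct and is essentially the paper's. The paper minimizes the Rayleigh functional $\tfrac12 g^{k,\eps}_\rho+\diff|_\rho\mathcal F$ direction by direction, jointly over decompositions, so it never needs your full-support/injectivity assumption. Its stationarity condition $\int_\R\big(u^\theta(k*R^\theta\rho+\eps)+h\big)w^\theta\,dp=0$ is your matching identity up to the sign in $v=-\mathrm{grad}\,\mathcal F$. It uses the same ingredients: the adjoint property, $R^\theta(\nabla_\theta\rho)=\partial_pR^\theta\rho$, and moving $k$ across by evenness, which you state explicitly where the paper uses it silently.

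One caution on your ``Obstacles'' aside. The claim that $\diff|_\rho\mathcal F(w)$ depends only on $w|_{\supp\rho}$ holds for absolutely continuous $\rho$, since a Lipschitz $w$ has $\nabla w=0$ a.e.\ on $\{w=0\}$. It fails for discrete $\rho$: there $\mathcal F=\infty$, and your pairing $\int zh$ equals $\int(-\nabla\cdot w+\nabla U\cdot w)\,d\rho$, which sees $\nabla\cdot w$ at the atoms. So $h/(k*R\rho+\eps)$ is not the minimal representative there, and at discrete measures the formula should be read as an extension of the velocity field, not as a gradient.
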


Again, this derivation is formal, and we provide details in the next subsection. The RRW velocity~\eqref{eq:RRWgf} is closely related to the KDRW velocity~\eqref{eq:KDRWgf}, but is smoother due to the additional convolution with $k$. In contrast to the KDRW flow, the regularization kernel enters through the metric tensor, rather than as an ad hoc smoothing introduced to accommodate discrete measures, and thus does not introduce an additional regularization bias of the type present in the KDRW flow. Consequently, the role of the bandwidth of $k$ is more subtle: it changes the geometry (and hence the induced dynamics), rather than serving only as a tuning parameter.

\subsection{Derivation of the gradient flows}\label{subsec:GFDeriv}

To compute the gradient of the Kullback--Leibler divergence $\mathcal F$ with respect to the
(Regularized) Radon--Wasserstein geometry, we use the Rayleigh functional
\[
\mathcal R_\rho(v)\;:=\;\frac12\,g_\rho(v,v)\;+\;\diff\big|_\rho \mathcal F(v).
\]
As explained in Appendix~\ref{appendix:Otto}, we have that
\[
-\text{grad}_g\mathcal F(\rho)\;=\;\arg\min_{v}\,\mathcal R_\rho(v).
\]
Thus, to derive the velocity of the gradient flow, it suffices to compute the minimizer of $\mathcal R_\rho$ for the appropriate metric tensor $g_\rho$. In the Wasserstein case $g_\rho(v,v)=\|v\|_{L^2(\rho)}^2$, this recovers the Fokker--Planck equation; again
see Appendix~\ref{appendix:Otto}.

\medskip

\begin{proof}[Derivation of Proposition~\ref{prop:ori_pgf} and~\ref{prop:RRWgf}] We will only derive~\eqref{eq:RRWgf} as the derivation of~\eqref{eq:KDRWgf} then follows by setting $\eps=0$ and $k(p)=\delta(p)$.

First, we note that the first variation of $\mathcal{F}$ is given by
\[\text{diff}|_{\rho}\mathcal{F}(v)=\frac{d}{dt}{\Big|}_{t=0} \mathcal{F}(\rho_t)=\int_{\R^d} (\nabla\rho+\rho\nabla U)\cdot v\,dx\]
where $\rho_t$ satisfies the continuity equation $\partial_t\rho_t+\nabla(v\rho_t)=0$ at time $0$. Formally viewing $(\Pc_2(\R^d),g^{k,\eps})$ as a Riemannian manifold, we thus need to  minimize the Rayleigh functional
\[\mathcal{R}(v)=\frac{1}{2}g_\rho^{k,\eps}(v,v)+ \text{diff}|_{\rho}\mathcal{F}(v)=\inf_{v=R^*(v^\theta)}\fint_{\S^{d-1}}\left(\frac{1}{2}g^{\theta,k,\eps}(v^\theta,v^\theta)+\text{diff}|_{\rho}\mathcal{F}(v^\theta)\right)\,d\theta\]
where in the second equality we have expanded out the definition of $g^{k,\eps}$ and used the linearity of $\text{diff}|_{\rho}\mathcal{F}(v)$ in $v$. We thus have that 
\[\mathcal{R}(v)=\inf_{v=R^*(v^\theta)} \fint_{\S^{d-1}} \mathcal{R}^\theta(v^\theta)\,d\theta,\quad\text{where}\quad \mathcal{R}^\theta(v^\theta):=\frac{1}{2}g_\rho^{\theta,k,\eps}(v^\theta,v^\theta)+\diff|_\rho\mathcal{F}(v^\theta).\]
This implies that
\[\text{argmin}\ \mathcal{R}(v)=\fint_{\S^{d-1}} \text{argmin}\  \mathcal{R}(v^\theta)\,d\theta,\]
thus we must only compute the $\text{argmin}$ of $\  \mathcal{R}^\theta(v^\theta)$ for all $\theta$.

For admissible $v^\theta(x)=\theta k*u^\theta(x\cdot\theta)$, we can abuse notation and write $\mathcal{R}^\theta(v^\theta)$ as a function of $u^\theta$
\begin{align*}
 \mathcal{R}^\theta(u^\theta)&=\frac{1}{2}\int_{\R} (u^\theta)^2(k*R^\theta\rho+\eps)\,dp+\int_{\R^d} \theta\cdot(\nabla\rho(x)+\rho(x) \nabla U(x)) k*u^\theta(x\cdot\theta)\,dx
 \\&=\frac{1}{2}\int_{\R} (u^\theta)^2(k*R^\theta\rho+\eps)+u^\theta \theta\cdot (k*R^\theta(\nabla\rho+\rho \nabla U))\,dp 
\end{align*}
where we have used the definition of $R^\theta$ in the second line.

Perturbing $u^\theta$ by $w^\theta$, the first variation of $\mathcal{R}^\theta$ is equal to
\[\frac{\partial\mathcal{R}^\theta}{\partial u^\theta}(w^\theta)=\int_{\R} \big(u^\theta(k*R^\theta\rho+\eps)+\theta\cdot(k*R^\theta(\nabla\rho+\rho\nabla U))\big)w^\theta,\]
which implies that
\[\text{argmin}_{v^\theta}\ \mathcal{R}^\theta(v^\theta)=-\theta k*\left(\frac{ \theta\cdot (k*R^\theta(\nabla\rho+\rho\nabla U))}{k*R^\theta\rho+\eps}\right).\]
Thus, using that $\theta\cdot (k*R^\theta(\nabla\rho+\rho\nabla U))=k'*R^\theta\rho+k*R^\theta(\rho\nabla_\theta U)$, in total we find that
\[\text{argmin}_{v}\ \mathcal{R}(v)=- \fint_{\S^{d-1}} \theta k*\left(\frac{k'*R^\theta\rho+k*R^\theta(\rho\nabla_\theta U)}{k*R^\theta\rho+\eps}\right)\,d\theta,\]
as claimed.
\end{proof}

\section{Interacting-particle approximations} \label{sec:IPS}

In this section we introduce interacting particle approximations for the KDRW and RRW flows, discuss their algorithmic complexity, and describe some extensions. We note that the velocity fields $v$ in~\eqref{eq:KDRWgf} or~\eqref{eq:RRWgf} are well-defined when $\rho=\frac{1}{n}\sum_{i=1}^n\delta_{x^i}$, and, if $k$ is differentiable and the initial conditions are discrete, then the equations define a coupled system of ODEs (see Corollary~\ref{cor:specific_ode_wp}) that can be approximated by a forward Euler scheme.

As evaluating these velocities requires integrating over $\S^{d-1}$, we approximate the integral by a single Monte Carlo sample at each time step. Specifically, letting
\begin{equation}\label{eq:spec_integral_kern}
u(\theta,p,\mu):=-\theta\left(\frac{k'*R^\theta\mu+k*R^\theta(\mu\nabla_\theta U)}{k*R^\theta\mu+\eps}\right)(p),
\end{equation}
we sample $\theta$ uniformly from $\S^{d-1}$ and update the particle positions using the velocity field
$u(\theta,x\cdot\theta,\rho)$ for the KDRW flow, or $k*u(\theta,x\cdot\theta,\rho)$ for the RRW flow.
In particular, if $v(t,x)$ is respectively defined by~\eqref{eq:KDRWgf} or~\eqref{eq:RRWgf}, then
\[v(t,x)=\E_\theta[u(\theta,x\cdot\theta,\rho_t)]\quad\text{or}\quad v(t,x)=\E_\theta[k*u(\theta,x\cdot\theta,\rho_t)],\]
so the single-direction updates are unbiased estimators of the sphere-averaged velocity. This thus amounts to a form of stochastic gradient descent~\cite{RobMon51}.

The general skeleton to these algorithms is given by the following pseudocode, with the scalar velocity $v_m^i$ to be specified by the routines in the subsequent subsections.

\begin{algorithm}[h]
\caption{Radon--Wasserstein Algorithm Skeleton}\label{algo:skeleton}
\textbf{Input:} Initial
particles $\{x_0^i\}_{i=1}^n$, number of iterations $T$, time step $\tau$, and score  S = $-\nabla U$\\
\textbf{Output:} An $n$-particle approximation of $\pi$
\begin{algorithmic}[1]
\For{\texttt{m = 0:T}}
    \State Choose random direction $\theta_m$
    \State Compute the particle projection vector $p_m$ with entries $p_m^i := \theta_m \cdot x_m^i$
    \State  Compute projected score vector $s_m$ with entries $s_m^i:=\theta_m\cdot S(x_m^i)$
    \State For each $i$, compute $v_m^i$ (as a function of $p_m$ and $s_m$)
    \State Update $x_{m+1}^i = x_m^i - \tau \theta_m v_m^i$
\EndFor
\end{algorithmic}
\end{algorithm}

\begin{remark}
In the way the algorithm is written, it requires $n$ evaluations of the score function $S$ in the step 4, each of which has $d$ coordinates. The score is then dotted with $\theta_m$ to compute directional derivatives. Computing the scores and the dot products is a major part of the overall complexity of the algorithm. We note that instead of computing the scores one can easily approximate the directional derivatives by using finite differences as follows: for small $\triangle p >0$ let $\triangle x = \theta_m^i \triangle p$. Then $s_m^i \approx \frac{1}{2\triangle p} (U(x_m^i + \triangle x) - U(x_m^i - \triangle x))$. This calls for $2n$ evaluations of the potential, which, depending on $U$, may be much less demanding than computing the scores. 
\end{remark}

\subsection{Interacting particle approximation of KDRW flow} \label{sec:IPSKDRW}
 Given time step $\tau$ and an initial configuration, consider the forward discretization of the KDRW flow~\eqref{eq:KDRWgf} with a signle, random direction per step:
\begin{equation}\label{eq:KDRW_sgd}
    x_{m+1}^i = x_m^i - \tau\theta_m \! \left(\frac{\sum_{j=1}^n k'(p_m^i -p_m^j) + \sum_{j=1}^n k(p_m^i -p_m^j)\nabla_{\theta_m} U(x_m^j)}{\sum_{j=1}^n k( p_m^i -p_m^j)+n\eps}\right),
\end{equation}
where $i= 1,\dotsc,n$, $\theta_m$ is sampled uniformly on $\S^{d-1}$, and $p_m^i = x_m^i \cdot \theta_m$. This clearly corresponds to Algorithm~\ref{algo:skeleton} with $v_m^i$ defined by
\begin{equation}\label{eq:KDRW_velocity}
    v_m^i:=\frac{\sum_{j=1}^n k'(p_m^i -p_m^j) + \sum_{j=1}^n k(p_m^i -p_m^j)\nabla_{\theta_m} U(x_m^j)}{\sum_{j=1}^n k( p_m^i -p_m^j)+n\eps}.
    \end{equation}
This can be efficiently computed using vector operations. The following pseudo code thus replaces line 5 in Algorithm~\ref{algo:skeleton}.

\begin{routine}[ht]
\caption{KDRW scalar velocity}\label{rout:KDRW}
\textbf{Input:} Projected particles $\{p^i_m\}_{i=1}^n$, projected score vector $\{s^i_m\}_{i=1}^n$\\
\textbf{Output:} Scalar velocity $v_m$
\begin{algorithmic}[1]
 \State Compute ``kernel matrix'' $K_{m}$, whose entries are $K_m^{i,j} = k (p_m^i-p_m^j)$ and ``derivative kernel matrix''  $K'_m$, whose entries are $(K_m')^{i,j} = k'(p_m^i-p_m^j)$
\State Compute row sums of $K_m$ and $K_m'$ and denote them by $\kappa_m$ and $\kappa'_m$ \smallskip
\State Compute $v_m=(\kappa_m' -K_m s_m)\oslash(\kappa_m+n\eps)$, where $\oslash$ denotes element-wise (or Hadamard) division of vectors
\end{algorithmic}
\end{routine}

We note that if $k$ has finite support and $p_m^i$ has no neighbors within the support of $k$, then $x_{m+1}^i = x_m^i - \tau  \theta_m \nabla U(x_m^i) \cdot \theta_m$. That is, on average, the particles follow the velocity field $-\nabla U$ until they become sufficiently close to other particles. Additionally, if $k(0)>0$ then $(\kappa_m^i+n \eps)\geq k(0)$ for all $i$. We thus expect that changing the regularization parameter $\eps$ does little to change the output of the algorithm when $n\eps\lesssim k(0)$.

Alternatively we can approximate the one-dimensional convolutions in~\eqref{eq:KDRW_velocity} by using the Fast Fourier Transform (FFT)~\cite{cooley1965fft} to compute convolutions on a uniform grid. Specifically,  suppose that the kernel is specified by some parametrized family of the form \[k_b(x)\,\propto\, k_1(x/b),\quad b>0\]
where $b$ sets the spatial bandwidth (for example,\ $k_b(x)\propto \exp(-x^2/2b^2)$). Then we can truncate the interaction to a radius $R = Lb$ for some cutoff $L>0$, restrict computations to a padded interval containing the projected particles, and discretize this interval with grid spacing $h=b/M$ for some parameter $M>0$ (so that bins are small relative to the kernel scale). We treat the FFT convolution as periodic on the domain and pad the interval by $R$ to eliminate wrap-around error. The following routine thus replaces the kernel-matrix and row-sum computations in Routine~\ref{rout:KDRW} by gridding, FFT convolution, and interpolation.

\begin{routine}[h]
\caption{KDRW FFT scalar velocity}\label{rout:KDRW_fft}
\textbf{Input:} Projected particles $\{p^i_m\}_{i=1}^n$, projected score vector $\{s^i_m\}_{i=1}^n$, cutoff $L$, and discretization parameter $M$\\
\textbf{Output:} Scalar velocity $v_m$
\begin{algorithmic}[1]
    \State Set $R := Lb$, $h := b/M$, and $[a,c] := [\min_i p^i_m - R,\ \max_i p^i_m + R]$
    \State Build a uniform grid on $[a,c]$ with spacing $h$, and deposit $\{p^i_m\}$ and $\{s^i_m\}$ to obtain grid arrays
    $\rho_m\approx \sum_{i=1}^n \delta_{p^i_m}$ and $\sigma_m\approx \sum_{i=1}^n s^i_m\,\delta_{p^i_m}$
    \State Discretize the kernel on the same grid to obtain $k_{\mathrm{grid}} \approx k\,\mathbf{1}_{|\cdot|\le R}$
    \State Compute $k_{\mathrm{grid}}*\rho_m$ and $k_{\mathrm{grid}}*\sigma_m$ by FFT
    \State Compute $k_{\mathrm{grid}}'*\rho_m$ by spectral differentiation in Fourier space (applied to $k_{\mathrm{grid}}*\rho_m$)
    \State Interpolate $k_{\mathrm{grid}}*\rho_m,\ k_{\mathrm{grid}}*\sigma_m,$ and $\ k_{\mathrm{grid}}'*\rho_m$ back to $\{p^i_m\}$ to obtain approximations of $\sum_{j=1}^n k(p_m^i-p_m^j),\ \sum_{j=1}^n k(p_m^i-p_m^j)s_m^j,$ and $\sum_{j=1}^n k'(p_m^i-p_m^j)$
    \State Define $v_m$ as in~\eqref{eq:KDRW_velocity} with these approximations
\end{algorithmic}
\end{routine}

More precisely, if $\{z^{\ell}\}$ are the grid points, for each particle $p_m^i$, letting $\ell^i$ be the unique index such that $z^{\ell^i} \le p_m^i < z^{\ell^i+1}$, we define linear weights $w_0^i$,$w_1^i\in[0,1]$ with $w_0^i+w_1^i=1$ so that $p_m^i=w_0^iz^{\ell^i} + w_1^iz^{\ell^i+1}.$  We then build $\rho_m$ and $\sigma_m$ by initializing both arrays to $0$, and then depositing mass and score according to $\rho_m[\ell^i] \mathrel{+}= w_0^i,\,\rho_m[\ell^i+1] \mathrel{+}= w_1^i,\,
\sigma_m[\ell^i] \mathrel{+}= w_0^i\,s_m^i$, and $\sigma_m[\ell^i+1] \mathrel{+}= w_1^i\,s_m^i$ for each $i$. After the FFT-based convolutions, we evaluate the resulting grid fields at particle locations using the same weights, e.g.\ $\sum_{j=1}^nk(p^i_m-p^j_m)\approx w_0^i\,k_{\mathrm{grid}}*\rho_m[\ell^i] + w_1^i\,k_{\mathrm{grid}}*\rho_m[\ell^i+1],$ and similarly for $\sum_{j=1}^nk(p^i_m-p^j_m)S(p_m^j)$ and $\sum_{j=1}^n k'(p^i_m-p^j_m)$.

As long as $k$ and $U$ are sufficiently smooth, the FFT approximation quickly converges to the true value as $L,M\rightarrow\infty$. This method thus presents a way to substantially accelerate the speed of the algorithm with only small divergence from the exact computations. See Figures~\ref{fig:banana_fft} to see the similarity between samples generated using Routine~\ref{rout:KDRW} and~\ref{rout:KDRW_fft}, and~\ref{fig:bw} to see nearly identical performance for modest values of $L$ and $M$.


\subsection{Interacting particle approximation of RRW flow}  \label{sec:IPSRRW}

For the RRW, given a time step $\tau$,~\eqref{eq:RRWgf} is naturally approximated by the stochastic system
\begin{equation}\label{eq:RRW_sgd}
    x_{m+1}^i = x_m^i - \tau \theta_m\!\int_\R \left(\frac{\sum_{j=1}^n k'(p_m^i-p -p_m^j) + \sum_{j=1}^n k(p_m^i-p -p_m^j)\nabla_{\theta_m} U(x_m^j)}{\sum_{j=1}^n k( p_m^i-p-p_m^j)+n\eps}\right)k(p)\,dp.
\end{equation}
where again $i=1,...,n$, $\theta_m$ is sampled uniformly on $\S^{d-1}$, and $p_m^i = x_m^i \cdot \theta_m$. Letting
\[\tilde{v}(p):=\frac{\sum_{j=1}^n k'(p -p_m^j)+k(p -p_m^j)\nabla_{\theta_m} U(x_m^j)}{\sum_{j=1}^n k(p-p_m^j)+n\eps},\]
this corresponds to Algorithm~\ref{algo:skeleton} with $v_m$ defined by $v_m^i:=k*\tilde v(p^i_m)$. Unlike for the approximate KDRW flow, there is no naive implementation to compute this scalar velocity due to the additional convolution in space by $k$. For this reason, we introduce two approximate solutions: one that uses a crude approximation of the convolution, and one that uses spatial discretizations and the FFT, similar to Routine~\ref{rout:KDRW_fft}.

First, as an alternative to exactly computing the convolution defining $v_m$, we replace it by a local kernel average of $\tilde{v}$ over the projected particle locations:
\[ k*\tilde v(p_m^i)\approx \frac{\sum_{j=1}^n k(p_m^i-p_m^j)\tilde{v}(p_m^j)}{\sum_{j=1}^n k(p_m^i-p_m^j)}.\]
This approximation can be interpreted as replacing $k*\tilde{v}$ by the ratio $(k*(\tilde{v}\rho_m))/(k*\rho_m)$ where 
$\rho_m=\sum_{j=1}^n\delta_{p_m^j}$, and is therefore biased when the projected particle density varies significantly on the scale of $k$.
Nevertheless, it reduces the extra convolution in~\eqref{eq:RRW_sgd} to particle sums involving only differences  $p_m^i-p_m^j$, leading to the routine below.

\begin{routine}[h]
\caption{RRW scalar velocity}\label{rout:RRW}
\textbf{Input:} Projected particles $\{p^i_m\}_{i=1}^n$, projected score vector $\{s^i_m\}_{i=1}^n$\\
\textbf{Output:} Scalar velocity $v_m$
\begin{algorithmic}[1]
    \State Compute ``kernel matrix'' $K_{m}$, whose entries are $K_m^{i,j} = k (p_m^i-p_m^j)$ and ``derivative kernel matrix''  $K'_m$, whose entries are $(K_m')^{i,j} = k'(p_m^i-p_m^j)$
    \State Compute row sums of $K_m$ and $K_m'$ and denote them by $\kappa_m$ and $\kappa'_m$
    \State Compute $\tilde{v}_m=(\kappa_m' - K_m s_m)\oslash(\kappa_m+n\eps)$
    \State Compute $v_m=K_m\tilde{v}_m\oslash\kappa_m$
\end{algorithmic}
\end{routine}

It is clear that this approximation is not very accurate as this will not give a good estimation of the scalar velocity for particles outside the bulk of the point cloud. Instead, by using spatial discretization and the FFT to compute the convolution, we can give a more accurate approximation of $v_m^i$.

\begin{routine}[h]
\caption{RRW FFT scalar velocity}\label{rout:RRW_fft}
\textbf{Input:} Projected particles $\{p^i_m\}_{i=1}^n$, projected score vector $\{s^i_m\}_{i=1}^n$, cutoff $L$, and discretization parameter $M$\\
\textbf{Output:} Scalar velocity $v_m$
\begin{algorithmic}[1]
    \State Set $R := Lb$, $h := b/M$, and $[a,c] := [\min_i p^i_m - 2R,\ \max_i p^i_m + 2R]$
    \State Build a uniform grid on $[a,c]$ with spacing $h$, and deposit $\{p^i_m\}$ and $\{s^i_m\}$ to obtain grid arrays
    $\rho_m\approx \sum_{i=1}^n \delta_{p^i_m}$ and $\sigma_m\approx \sum_{i=1}^n s^i_m\,\delta_{p^i_m}$
    \State Discretize the kernel on the same grid to obtain $k_{\mathrm{grid}} \approx k\,\mathbf{1}_{|\cdot|\le R}$
    \State Compute $k_{\mathrm{grid}}*\rho_m$ and $k_{\mathrm{grid}}*\sigma_m$ by FFT
    \State Compute $k_{\mathrm{grid}}'*\rho_m$ by spectral differentiation in Fourier space (applied to $k_{\mathrm{grid}}*\rho_m$)
    \State Compute $\tilde{v}_{m,\mathrm{grid}}=(k_{\mathrm{grid}}'*\rho_m-k_{\mathrm{grid}}*\sigma_m)/(k_{\mathrm{grid}}*\rho_m+n\eps) $
    \State Compute $v_{m,\mathrm{grid}}=k_{\mathrm{grid}}*\tilde{v}_{m,\mathrm{grid}}$ by FFT
    \State Interpolate $v_{m,\mathrm{grid}}$ back to $\{p^i_m\}$ to obtain $v_m$
\end{algorithmic}
\end{routine}

This is almost identical to the FFT implementation outlined in Routine~\ref{rout:KDRW_fft} except we perform an extra convolution before interpolating back from the discretized values and have padded $[a,c]$ by $2R$ instead of $R$ to eliminate wrap-around error due to the double convolutions. We emphasize that this gives a more faithful approximation of $v_m$ than Routine~\ref{rout:RRW}. This is consistent with our empirical results: the implementation with Routine~\ref{rout:RRW} generally performs worse than with Routine~\ref{rout:RRW_fft}, see Figure~\ref{fig:bw}.

\subsection{Complexity}\label{subsec:comp}

In this subsection, we compare the per-step computational complexity of Algorithm~\ref{algo:skeleton} across the scalar velocity subroutines in Routines~\ref{rout:KDRW}--\ref{rout:RRW_fft}.

First, computing one step of Algorithm~\ref{algo:skeleton} with scalar velocity determined by Routine~\ref{rout:KDRW} or~\ref{rout:RRW}---KDRW or RRW with direct convolution computations---takes $O(nd + n^2)$ operations. Computing the directional derivatives and projections has complexity $O(nd)$ while computing the scalar velocity takes has $O(n^2)$ operations. This complexity is already much smaller than the $O(dn^2)$ operations per step of  SVGD. Additionally, the complexity of the convolution steps can be reduced further: if $k$ has bounded support then one only needs to account for $k(p_m^i-p_m^j)$ when $p_m^i - p_m^j$ is in the support of $k$. If the radius of the support of $k$ is very small, there are few such neighbors, and thus the complexity is much lower. For example, if the radius of the support is taken to scale like $n^{-\alpha}$ for some $\alpha$ (corresponding, for example, to classical choices for kernel density estimation such as $\alpha=1/5$ \cite{Tsybakov09,Wasserman06}), then the Routines \ref{rout:KDRW} and \ref{rout:RRW} typically have complexity $O(n^{2-\alpha})$.

Each step of the FFT-based routines---Routine~\ref{rout:KDRW_fft} and~\ref{rout:RRW_fft}---takes $O(n+G\log G)$ operations where $G\approx (c-a)/h$ is the number of grid-points. If a hard cap $G\le G_{\max}$ is additionally enforced (for example, when the target measure has bounded support and $b$ is fixed), then the cost of computing the convolution using the FFT is uniformly bounded and the per-step complexity of the Routine is just $O(n)$. If $b$ is instead taken to scale like $n^{-\alpha}$, then the per-step complexity becomes $O(n+ n^{\alpha}\log n)$. Thus if $\alpha<1$ and $c-a$ stays bounded, then the overall complexity of a full step of Algorithm~\ref{algo:skeleton} with velocities computed according to Routines \ref{rout:KDRW_fft} (KDRW with the FFT) or Routine \ref{rout:RRW_fft} (RRW with the FFT) is $O(nd)$.

In Subsection~\ref{subsec:obs_comp} we numerically investigate the time it takes to compute the update in practice.

\subsection{Alternative acceleration for Laplace kernel}  \label{sec:IPSextensions}

If we choose $k$ to be the Laplace kernel $k(p) = e^{-|p|}$ then there is an alternative routine to compute the velocity field with complexity $O(nd+n \log n)$ that does not use the FFT. This relies on ideas that go back to the Fast Multipole Method of Greengard and Rokhlin \cite{greengard87}. The particular recurrence relations relevant to Laplace kernel, for integrals on the line, appear in the work of Yarvin and Rokhlin \cite{Yarvin1999SOE}. This approach to computing convolutions can be extended, by approximation, to general kernels; it was already used in \cite{Yarvin1999SOE} and is generalized by Zhang, Zhuang, and Jiang \cite{Zhang21SOE}.

For brevity, we only present the algorithm for evaluating $k*\rho$ for $\rho=\frac{1}{n} \sum_{i=1}^n \delta_{p^i}$. The convolutions $k' * \rho$ and $k*(\rho\nabla_\theta U)$ are evaluated analogously. 

The first step is to sort the points so that $p^1\leq p^2 \cdots \leq p^n$. This takes $O(n \log n)$ operations. We then note that
\[ \sum_{i=1}^n e^{-|p^j-p^i|} = \sum_{i\le j} e^{-(p^j-p^i)} + \sum_{i\ge j} e^{-(p^i-p^j)} -1.
\]
Define the left and right sums $
L^j := \sum_{i\le j} e^{-(p^j-p^i)}$ and
$ R^j := \sum_{i\ge j} e^{-(p^i-p^j)}$ so that $L^1=R^n=1$ and
\begin{align} \label{eq:recrelLap}
\begin{split}
L^j &= 1 + e^{-(p^j-p^{j-1})}\,L^{j-1},\quad j\geq 2,\\
R^j &= 1 + e^{-(p^{j+1}-p^j)}\,R^{j+1},\quad j\leq n-1.
\end{split}
\end{align}
Thus $L^j$ and $R^j$  can be computed for all $j$ in $O(n)$ operations. We conclude by observing that
\[
(k * \rho)(p^j) = \frac{1}{n}\left(L^j + R^j - 1\right),
\]
hence $k*\rho(p^j)$ can be computed for all $j$ in $O(n\log n)$ operations.

\section{Experimental Results} \label{sec:experiments}

We performed numerous experiments using Algorithm~\ref{algo:skeleton} with Routines~\ref{rout:KDRW}-\ref{rout:RRW_fft} for sampling from  distributions in dimensions ranging from 2 to 2048. The corresponding Python code is available at~\cite{HCSX26Git}. Overall, we observe that these algorithms are able to quickly capture the large-scale structure of target distributions and converge efficiently for distributions which have a single well, even if their geometry is nontrivial.

Throughout this section and within our experiments, we  use the following conventions. KDRW and RRW respectively refer to Algorithm~\ref{algo:skeleton} with the scalar velocity determined by Routine~\ref{rout:KDRW} and Routine~\ref{rout:RRW}. KDRW\!\textunderscore fft and RRW\!\textunderscore fft respectively refer to the Algorithm~\ref{algo:skeleton} with the scalar velocity determined by Routine~\ref{rout:KDRW_fft} and Routine~\ref{rout:RRW_fft}. We take the regularizing kernel $k$ to be the Gaussian probability density with spatial bandwidth $b$, that is $k(p)\propto \exp({-|p|^2/2b^2}).$ We set the small parameter $\eps$ to equal $0.01/n$ or $0.02/n$ where $n$ is the number of particles. For the FFT based algorithms, we set the cuttoff parameter to be $L=5$, and the number of grid points per bandwidth to be $M=8$ or $10$.

We use two target distributions, standard Gaussians and Rosenbrock ``banana" distributions~\cite{pagani2022n}. For the latter we always take the potential function to equal
\begin{equation} \label{eq:banana_potential}
  U(x) = \frac{1}{2}\sum_{i=1}^{d-1} ( x^i)^2
+ 8 \left(x_d - \frac{1}{2\sqrt{d}} \left(\sum_{i=1}^{d-1} (x^i)^2 - (d-1)\right) \right)^2.  
\end{equation}
The banana distributions thus has a single energy well, but is not log-concave. 

To measure how close the generated samples are to the target density we rely on the $\MMD$ distance with gaussian kernel with spacial bandwidth $\sqrt{d}$. We use this distance since we can exactly compute the MMD between a discrete measure (the empirical measure of the samples, $\mu^n$) and a standard Gaussian. Additionally, the $\MMD$ distance is meaningful in high dimensions, as the expected error for $n$ i.i.d.\ samples scales like $\frac{1}{\sqrt n}$ \cite{sriperumbudur2016optimal}. To measure the error for banana target distributions $\pi$ we use an explicitly invertible transformation $T$ that pushes forward the standard normal distribution in $\R^d$ to $\pi$. That is $\pi = T\# \gamma$ where $\gamma$ is a standard normal. The error we report in the figures below is then $\MMD_T^2(\mu^n, \pi) = \MMD^2(T^{-1}\# \mu^n, \gamma)$. This allows us to explicitly measure the error for high-dimensional targets that are not log concave.

\begin{figure}[ht]
\centering
\begin{subfigure}{0.45\textwidth}
  \centering
  \includegraphics[width=\linewidth]{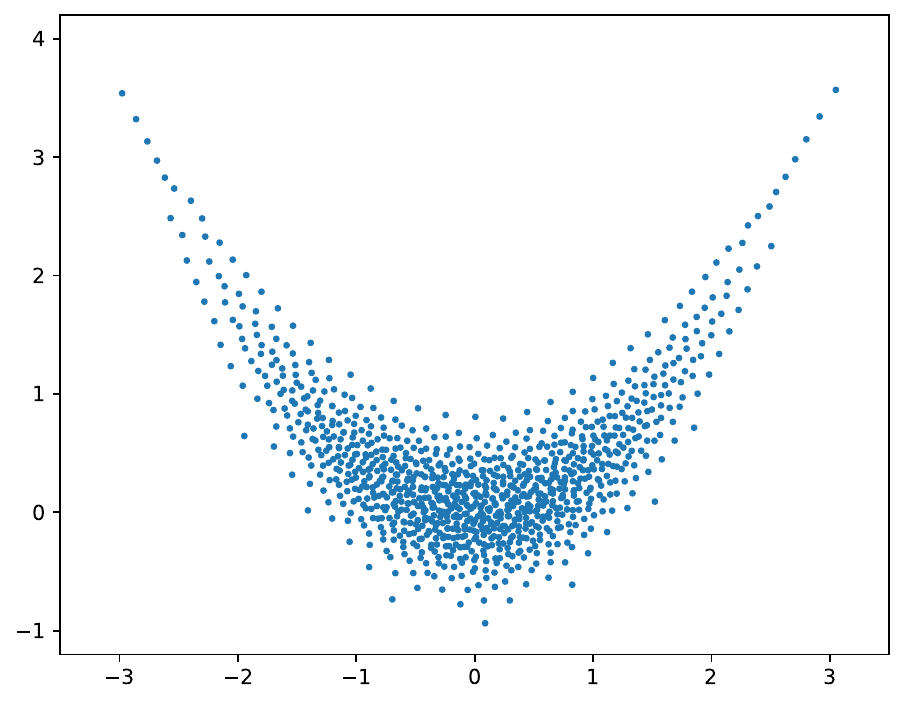}
  \caption{KDRW, $b=0.1$}
\end{subfigure}\hspace*{10pt}
\begin{subfigure}{0.45\textwidth}
  \centering
  \includegraphics[width=\linewidth]{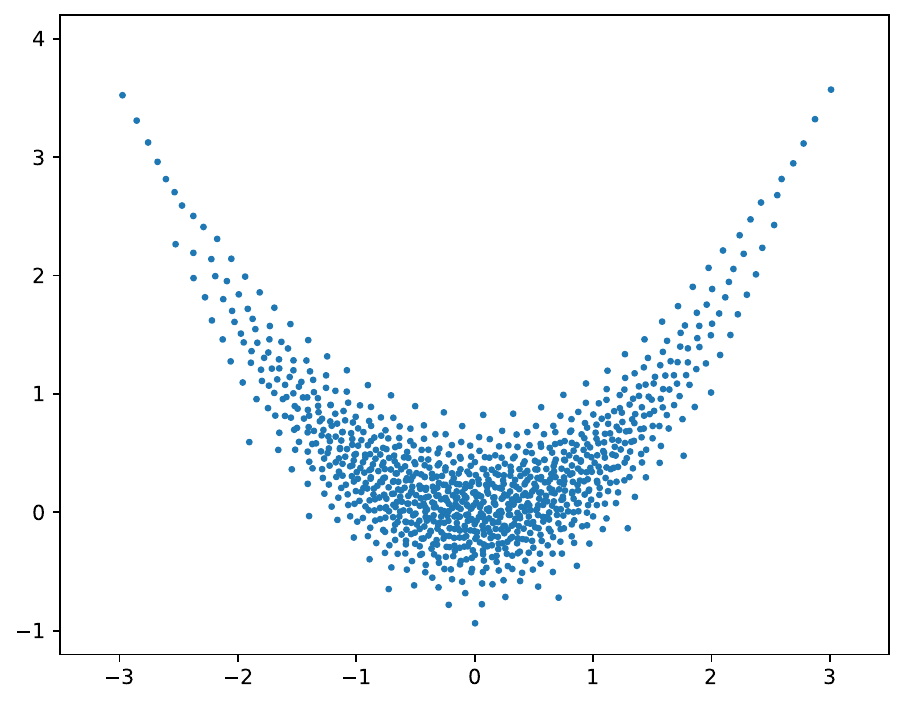}
  \caption{KDRW\!\textunderscore fft, $b=0.1$}
\end{subfigure}

\begin{subfigure}{0.45\textwidth}
  \centering
  \includegraphics[width=\linewidth]{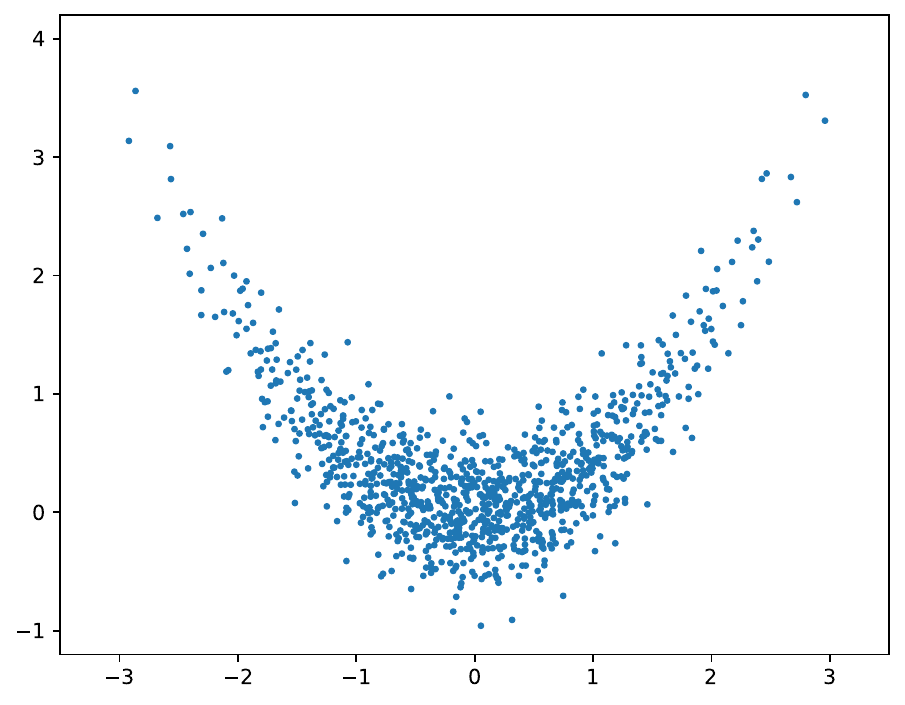}
  \caption{KDRW, $b=0.5$}
\end{subfigure}\hspace*{10pt}
\begin{subfigure}{0.45\textwidth}
  \centering
  \includegraphics[width=\linewidth]{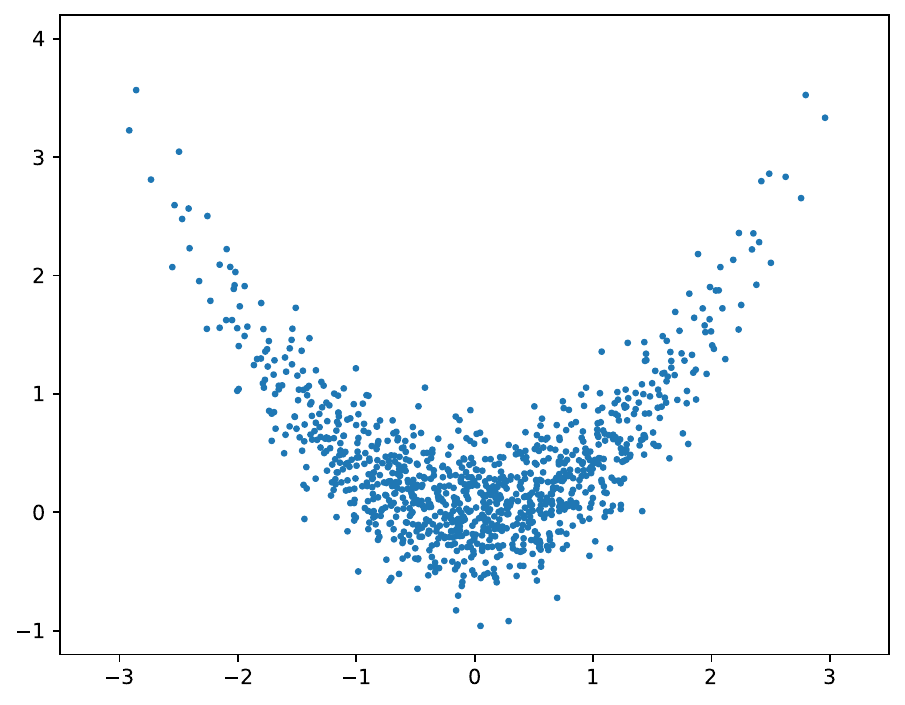}
  \caption{KDRW\!\textunderscore fft, $b=0.5$}
\end{subfigure}
     \caption{Samples from a banana distribution with potential~\eqref{eq:banana_potential} generated by both KDRW and KDRW\!\textunderscore fft with different bandwidths $b$ and $n=1024$ particles. Initial particles were sampled with the same i.i.d.\ sample from a standard Gaussian.}
\label{fig:banana_fft}
\end{figure}

\subsection{Influence of the bandwidth and performance of FFT-based algorithms}\label{sec:bandwidths}

Theoretically, for KDRW/KDRW\!\textunderscore fft, the bias of the final state decreases as the bandwidth $b$ is reduced. In practice, however, how small $b$ can be taken is limited by the need to accurately approximate the score of the projected measures. By contrast, RRW/RRW\!\textunderscore fft are unbiased. That said, when approximating the continuum flow by discrete measures we do expect the approximation error  to depend on $b$. For both dynamics, high-frequency information is dampened by the regularizing kernel, leading to a loss of resolution.

To investigate this effect, Figure~\ref{fig:bw} reports the $\mathrm{MMD}^2$ error for samples generated across a wide range of bandwidths using $1024$ particles, for Gaussian and banana target distributions in $d=2$ and $d=256$. Each run was initialized with i.i.d.\ samples from a standard Gaussian with mean shifted by 2 in the first coordinate, and was run for $50,000$ steps with step size $0.01$ when $d=2$ and $0.1$ when $d=256$. Results for the RW algorithms are averaged over 8 trials. For each target distribution the i.i.d.\ baseline reports the average 
$\mathrm{MMD}^2$ error of the empirical measure of $1024$ i.i.d.\ samples from the target distribution over 50 trials.

\begin{figure}[!t]
\centering
\captionsetup{font=small}

\begin{subfigure}{0.45\textwidth}
  \centering
  \includegraphics[width=\linewidth]{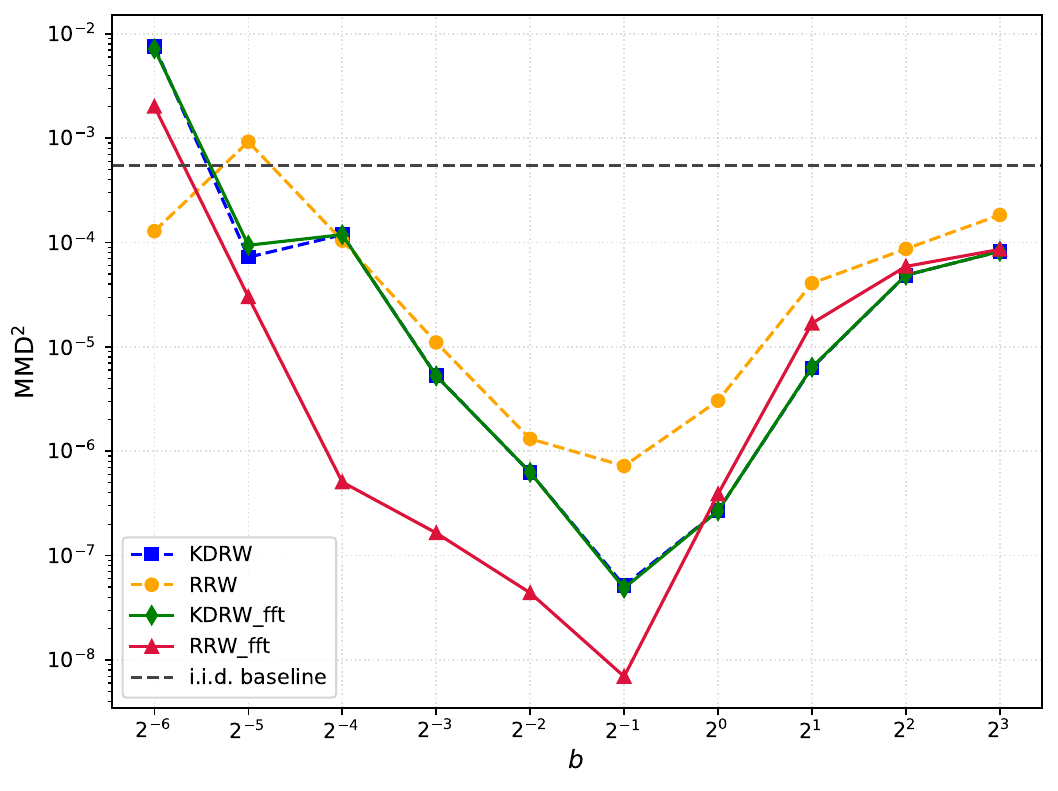}
  \caption{$d=2$, Gaussian target}
  \label{fig:bw_gauss_d2}
\end{subfigure}\hspace*{10pt}
\begin{subfigure}{0.45\textwidth}
  \centering
  \includegraphics[width=\linewidth]{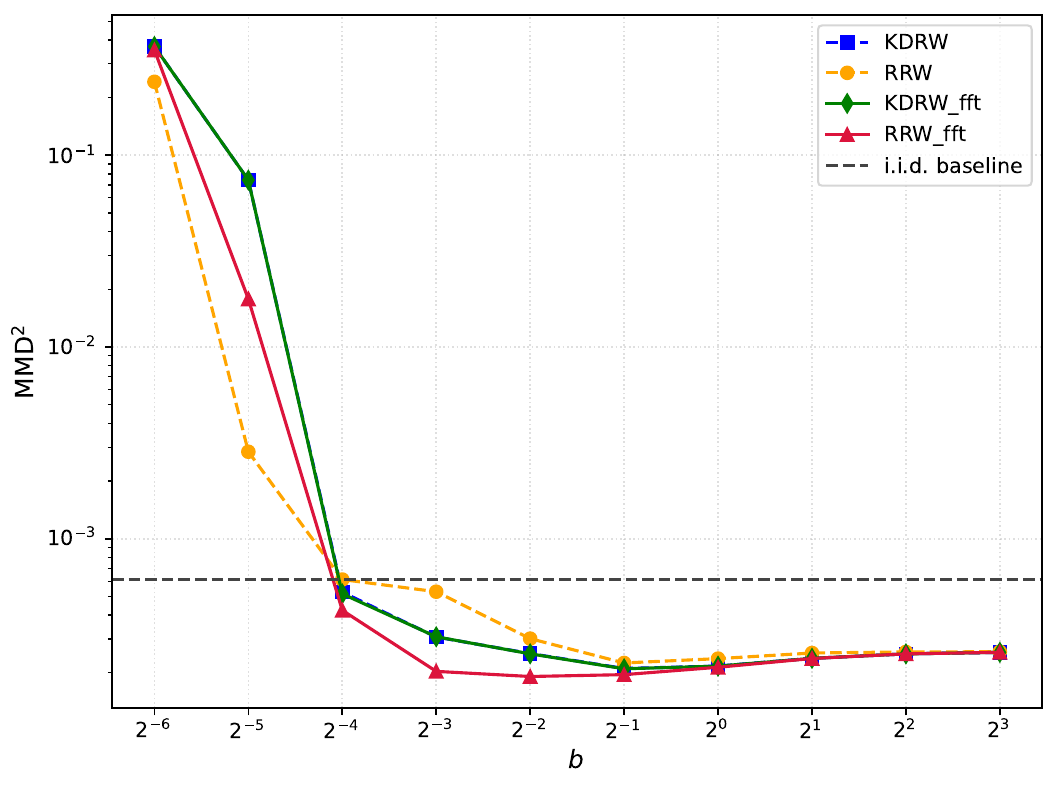}
  \caption{$d=256$, Gaussian target}
  \label{fig:bw_gauss_d256}
\end{subfigure}

\begin{subfigure}{0.45\textwidth}
  \centering
  \includegraphics[width=\linewidth]{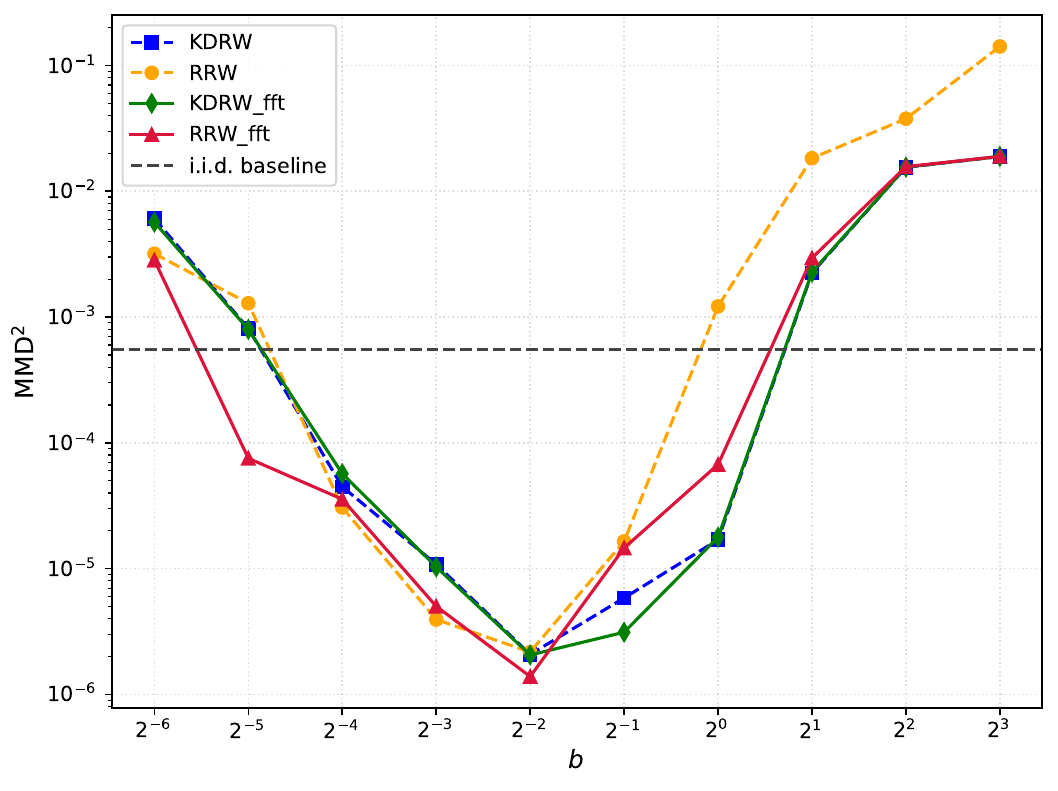}
  \caption{$d=2$, banana target}
  \label{fig:bw_banana_d2}
\end{subfigure}\hspace*{10pt}
\begin{subfigure}{0.45\textwidth}
  \centering
  \includegraphics[width=\linewidth]{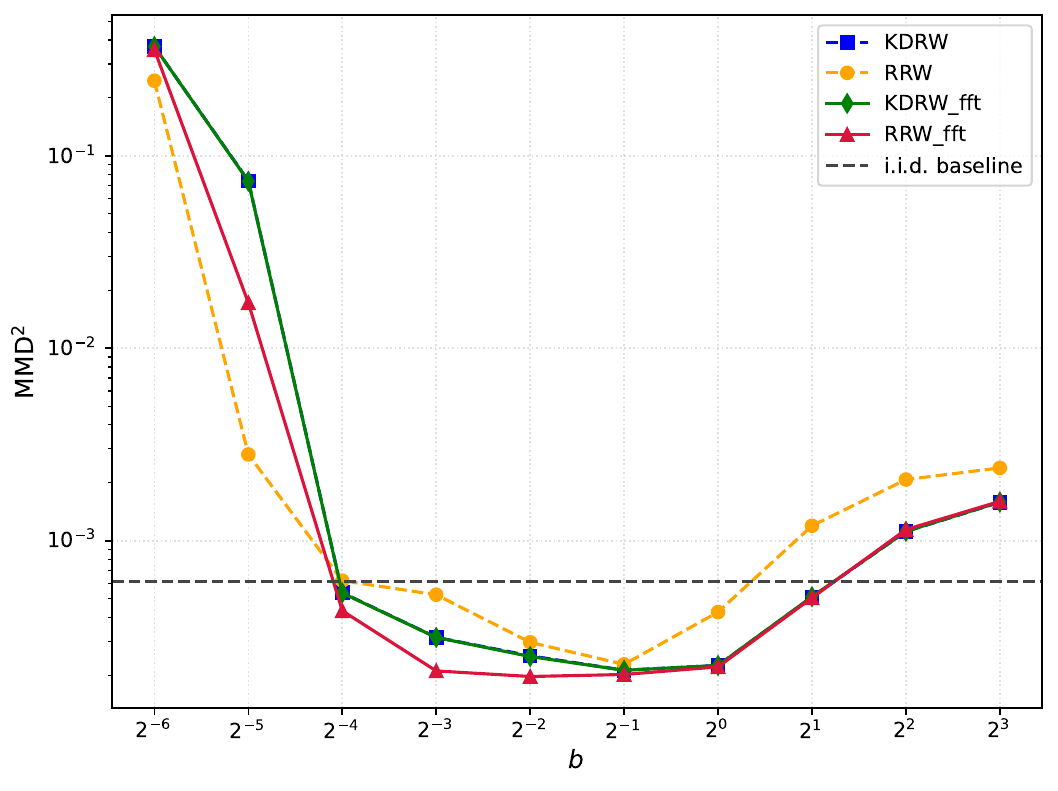}
  \caption{$d=256$, banana target}
  \label{fig:bw_banana_d256}
\end{subfigure}

\caption{$\MMD^2$ error for samples generated with different bandwidths $b$ and $n=1024$ particles. The target distribution is a standard Gaussian in the top row, and a Rosenbrock ``banana'' distribution with potential~\eqref{eq:banana_potential} in the bottom row.}
\label{fig:bw}
\end{figure}

We observe a characteristic $U$-shaped dependence on $b$: as $b$ decreases, performance initially improves, but once $b$ becomes too small the error increases rapidly. In higher dimensions the curve is noticeably flatter, with a broad range of $b$ yielding similar performance. Moreover, across a broad range of bandwidths, the RW algorithms return samples with lower $\mathrm{MMD}^2$ error than an i.i.d.\ sample with the same number of particles.

The performance of KDRW\!\textunderscore fft is nearly identical to that of the full KDRW scheme, even in high dimensions. This close agreement is also evident in the qualitative comparisons in Figure~\ref{fig:banana_fft}. In contrast, RRW\!\textunderscore fft consistently outperforms RRW. This is unsurprising as RRW uses a cruder approximation of the convolution with $k$. Because the FFT-based implementations are substantially faster (and more accurate for RRW), we use only the FFT-based versions of the algorithms in the remaining experiments.

\subsubsection{Bandwidth selection in general setting}

In addition to the experiments of Figure \ref{fig:bw}, we considered how the optimal bandwidth changes with the number of particles $n$. We found that the following simple scaling laws were nearly optimal for KDRW\!\textunderscore fft and RRW\!\textunderscore fft in dimensions 2, 32, and 256 for a standard Gaussian target 
\begin{equation} \label{bw:scaled_with_n}
    b_{\text{KDRW}} = 2 n^{-1/5} \quad \text{ and } \quad b_{\text{RRW}} = n^{-1/5}.
\end{equation}
We note that this scaling agrees with the optimal scaling for kernel density estimation in one dimension \cite{Tsybakov09}.

The observed scaling above motivates the following proposal for adaptive bandwidths for general target measures with arbitrary variance.  Given a configuration of particles $x^1, \dots, x^n$ and angle $\theta \in \S^{d-1}$, then letting 
$p^i = x^i \cdot \theta$, $\overline p = \frac{1}{n}\sum_{i=1}^n p^i$
and $\sigma = \left( \frac{1}{n} \sum_{i=1}^n (p^i - p)^2 \right)^{1/2}$, we propose the adaptive bandwidths
\begin{equation} \label{bw:adaptive}
    b_{ada, \text{KDRW}} = 2 \sigma  n^{-1/5} \quad \text{ and } \quad b_{ada,\text{RRW}} = \sigma  n^{-1/5}.
\end{equation}
As the banana distribution~\eqref{eq:banana_potential} has $\sigma\approx 0.8$ when $d=2$ and $\sigma\approx 1.0$ when $d=256$, these recommendations are consistent with Figure~\ref{fig:bw}. In particular, the slight shift between the curves in~\ref{fig:bw_gauss_d2} and~\ref{fig:bw_banana_d2}.

\subsection{Observed computational complexity of algorithms}\label{subsec:obs_comp}

As discussed in Subsection~\ref{subsec:comp}, the complexity of each step of KDRW\!\textunderscore fft and RRW\!\textunderscore fft is $O(dn)$. In practice, the fixed cost of some operations, memory management, and parallelism in modern CPUs and GPUs affect this scaling. Figure~\ref{fig:tps} reports the actual time to compute a single step of RRW\!\textunderscore fft for a banana target distribution using our code on a 2024 MacBook Pro over various particle numbers and dimensions. Although these values are dependent on the hardware, implementation, and target measure used, we believe they still present a valuable indication of the expected performance. In particular, they suggest that these algorithms remain feasible for large particle counts and high dimensions.

\begin{figure}[!t]
  \centering
  \begin{subfigure}{0.49\textwidth}
    \centering
    \includegraphics[width=\linewidth]{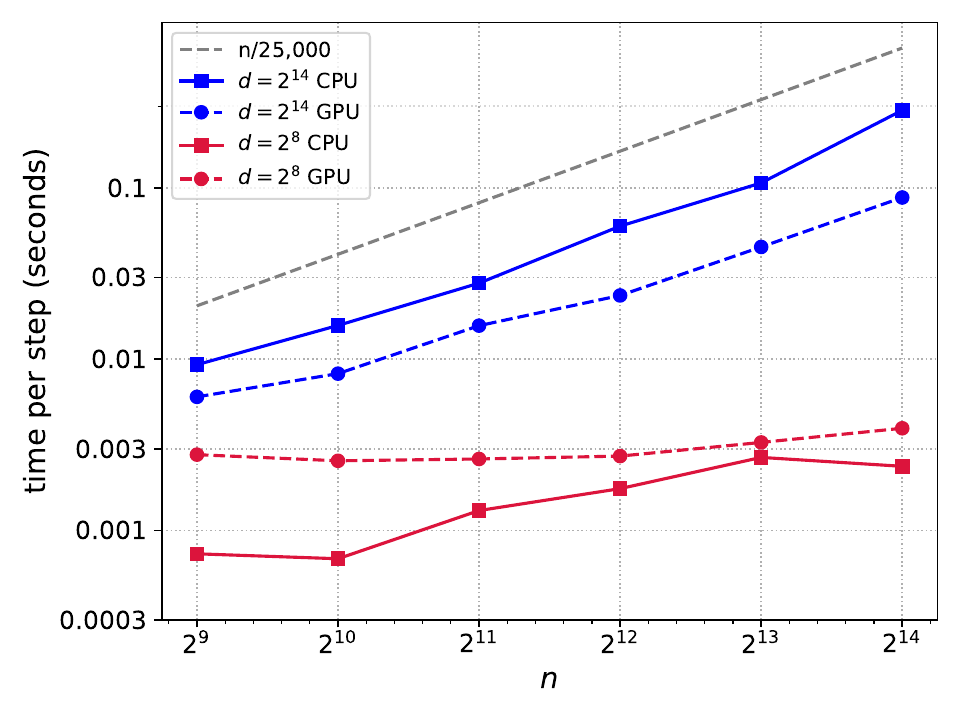}
    \caption{Time per step as a function of $n$}
  \end{subfigure}
  \begin{subfigure}{0.49\textwidth}
    \centering
    \includegraphics[width=\linewidth]{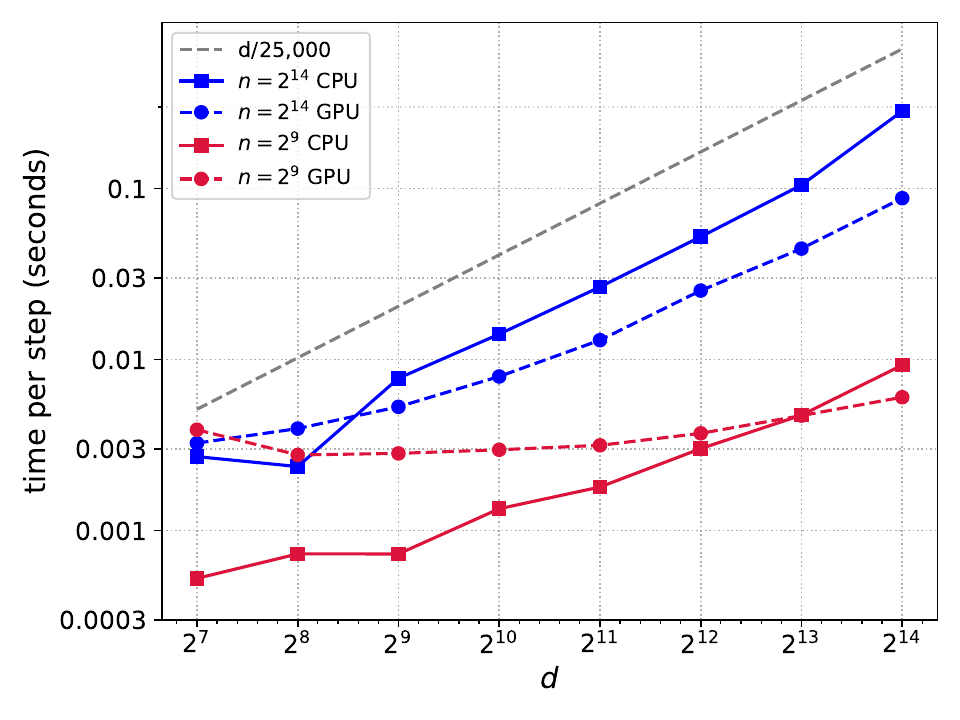}
    \caption{Time per step as a function of $d$}
  \end{subfigure}
  \medskip
  \caption{Time per step (seconds) for RRW\!\textunderscore fft on CPU vs.\ GPU for different particle numbers $n$ and dimensions $d$.} \label{fig:tps}
\end{figure}

\subsection{Observed Convergence in time of the flows}\label{sec:convergence}

To study the convergence behavior of the algorithms over time, in Figure~\ref{fig:convergencebanana} we plot the $\mathrm{MMD}^2$ error versus equation time, i.e. the cumulative step size $t$, for KDRW\!\textunderscore fft, RRW\!\textunderscore fft and SVGD for varying particle numbers $n$ and dimensions $d$. In all cases, the target density is the banana distribution, and each run was initialized with i.i.d.\ samples from a Gaussian with mean shifted by 1 in the first coordinate and covariance matrix $0.25I_d$.

After an initial warm-up period (with smaller step sizes), the RW algorithms and SVGD were run with respective step sizes equal to $0.005$ and $0.1$ when $d=2$, $0.1$ and $0.2$ when $d=32$, and identical step size of $0.1$ or $0.2$ in all other dimensions. The adaptive bandwidths~\eqref{bw:adaptive} were used for KDRW\!\textunderscore fft, RRW\!\textunderscore fft, while the median bandwidth $b=\text{median}_{i<j}|x_i-x_j|$ was used for SVGD. Each algorithm was run a single time for each value of $n$ and $d$, and the i.i.d.\ baselines report the $\mathrm{MMD}^2$ error of the empirical measure of a single i.i.d.\ sample of $n$ particles from the target distribution.

\begin{figure}[!htbp]
  \centering
  \begin{subfigure}{0.4\textwidth}
    \centering
    \includegraphics[width=\linewidth]{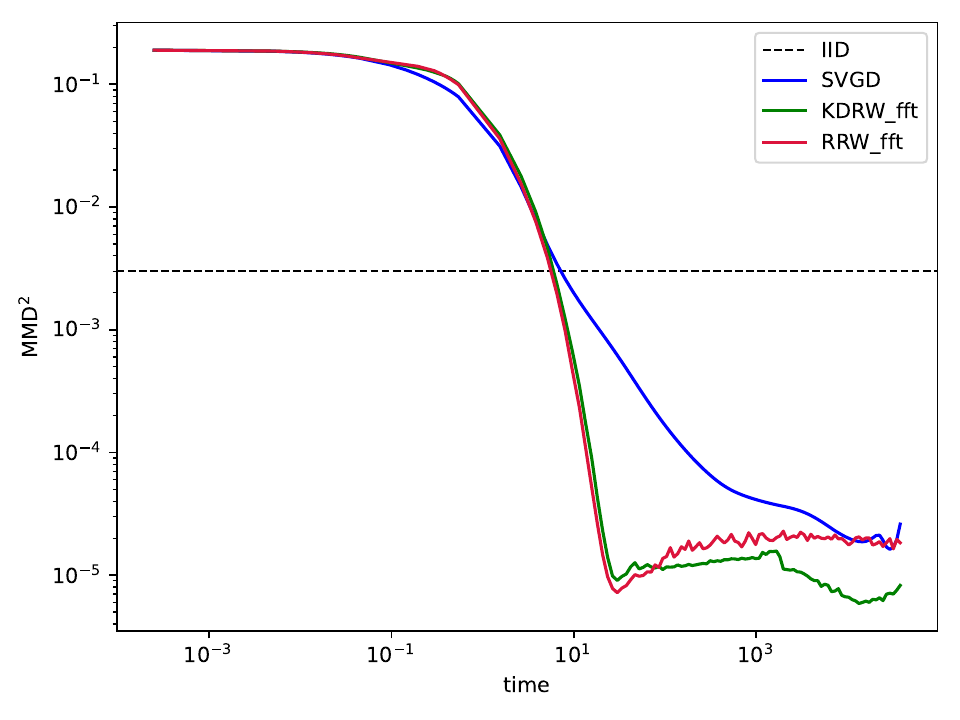}
    \caption{$d=2$, $n=256$}
    \label{fig:convergence2}
  \end{subfigure}
  \hspace*{16pt}
  \begin{subfigure}{0.4\textwidth}
    \centering
    \includegraphics[width=\linewidth]
    {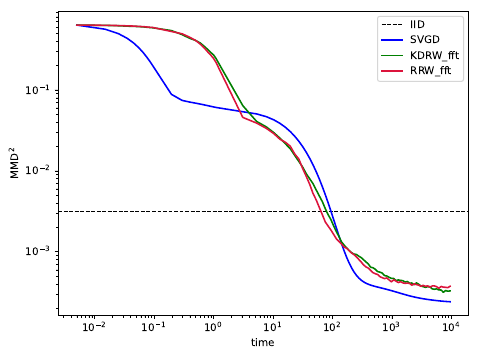}
    \caption{$d=32$, $n=256$}
    \label{fig:convergence32}
  \end{subfigure}
  \medskip

  \begin{subfigure}{0.4\textwidth}
    \centering
    \includegraphics[width=\linewidth]
    {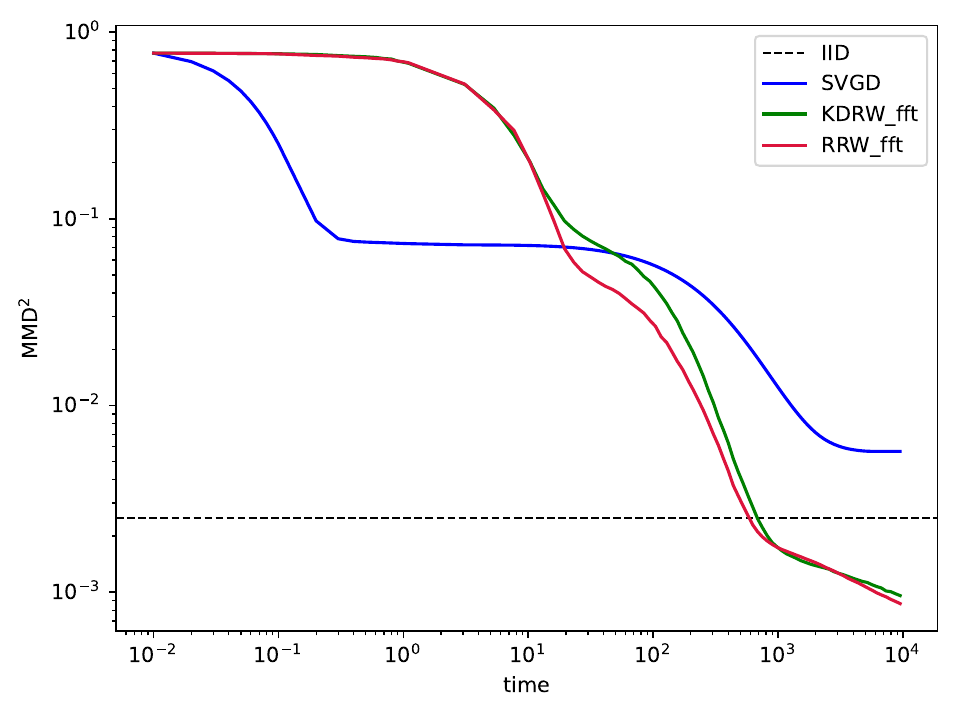}
    \caption{$d=256$, $n=256$}
    \label{fig:convergence256256}
  \end{subfigure}
  \hspace*{16pt}
  \begin{subfigure}{0.4\textwidth}
    \centering
    \includegraphics[width=\linewidth]
    {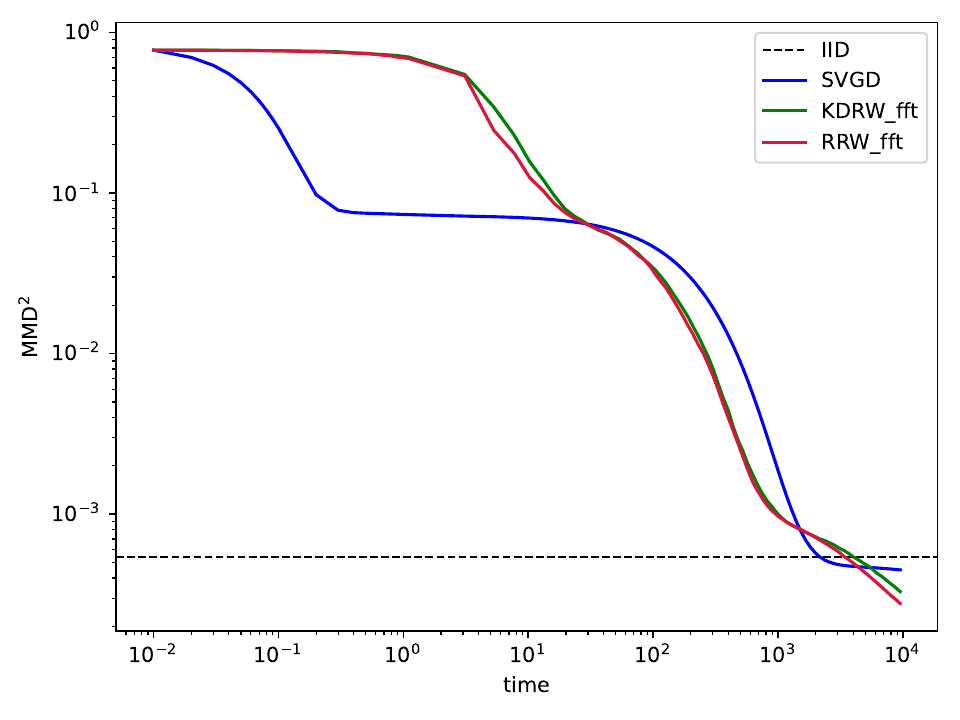}
    \caption{$d=256$, $n=1024$}
    \label{fig:convergence2561024}
  \end{subfigure}
   \medskip

  \begin{subfigure}{0.4\textwidth}
    \centering
    \includegraphics[width=\linewidth]
    {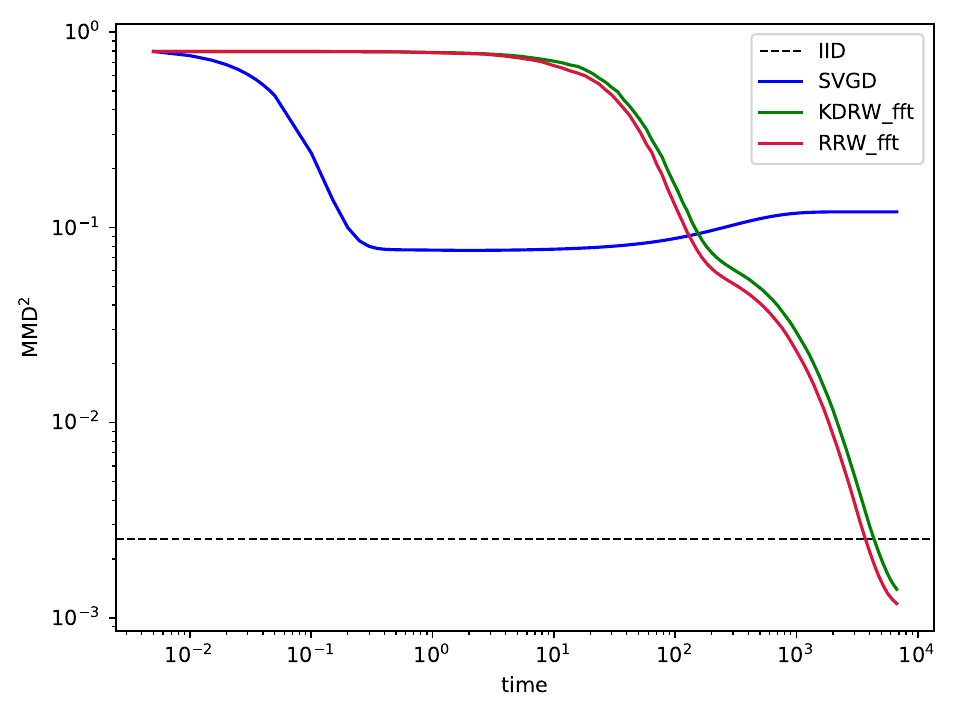}
    \caption{$d=2048$, $n=256$}
    \label{fig:convergence2048256}
  \end{subfigure}
  \hspace*{16pt}
  \begin{subfigure}{0.4\textwidth}
    \centering
    \includegraphics[width=\linewidth]
    {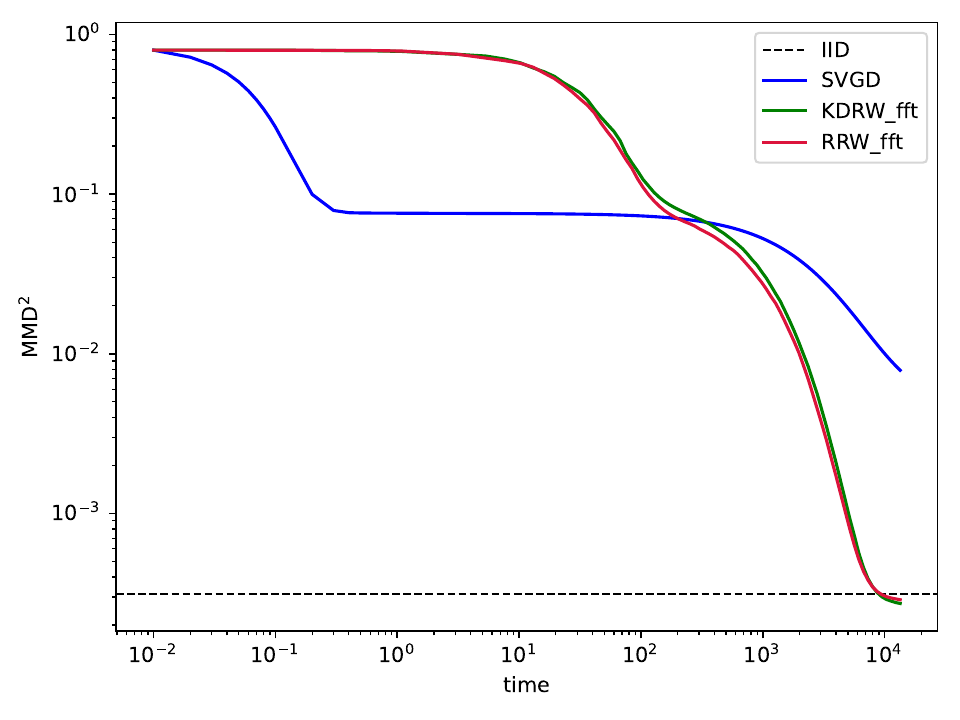}
    \caption{$d=2048$, $n=2048$}
    \label{fig:convergence20482048}
  \end{subfigure}

  \caption{$\MMD^2$ error versus algorithmic time for a Rosenbrock ``banana'' target distribution with potential~\eqref{eq:banana_potential} for different particle numbers $n$ and dimensions $d$.}
  \label{fig:convergencebanana}
\end{figure}

We see that KDRW\!\textunderscore fft and RRW\!\textunderscore fft perform well in all settings: they reach configurations that approximate the target measure as well as or better than the i.i.d.\ samples in a time that grows very mildly with the dimension. In lower dimensions they produce samples that are much better approximations than i.i.d.\ This is consistent with the experiments discussed in the next subsection. SVGD performs well in low and moderate dimensions when $n$ is substantially larger than $d$, while in Figure~\ref{fig:convergence2048256}, the final state of SVGD is far from the target distribution and the points have much smaller variance. We observed this to be typical of SVGD when $n<d$ and $d$ is large, with this being a manifestation of the variance collapse phenomenon discussed in~\cite{ba2021understanding}. In contrast, the RW algorithms perform well even when $n$ is substantially smaller than $d$.

Additionally, with the exception of $d=2$, where KDRW\!\textunderscore fft and RRW\!\textunderscore fft require a much smaller step size than SVGD, the RRW, KDRW and SVGD algorithms were stable for very similar step sizes. We note that in all dimensions (including $d=2$), the RW algorithms produced samples that are as good as, and often much better than, those of SVGD after performing approximately the same number of steps. When $d=2$, KDRW and RRW already produce excellent samples by $t=100$. Since each RW update costs $O(nd)$ whereas an SVGD update costs $O(dn^2)$, this implies a substantial computational advantage for the RW methods.

We also note that the step sizes under which the RW schemes remain stable do not appear to decrease with the dimension. This is consistent with our theoretical results in Subsection~\ref{subsec:spec-stoch_conv}, where we prove a dimension-free stochastic-approximation error bound between the exact ODE solution and the trajectories produced by our randomized schemes.

\subsection{Approximation error of final states}\label{sec:quantization}

Finally, we study how the long-time error scales with particle number $n$, i.e.\ the quantization error. As observed in Figure~\ref{fig:convergencebanana}, the outputs of KDRW\!\textunderscore fft and RRW\!\textunderscore fft often approximate the target measure better than the empirical measure of $n$ i.i.d.\ samples from the target distribution. A similar phenomenon was reported for SVGD in~\cite{xu2022accurate}. This improvement is specific to the interacting discrete dynamics and reflects the use of global information through particle interactions when steering the empirical measure toward the target.

First, in Figure~\ref{fig:quantization_MMD} we compare the $\MMD^2$ error of the long-time outputs of KDRW\!\textunderscore fft, RRW\!\textunderscore fft, SVGD, and i.i.d.\ sampling for a Gaussian target distribution across different particle numbers $n$ and dimensions $d$. Each run was initialized with i.i.d.\ samples from a standard Gaussian and was run until algorithmic time $t=10{,}000$, with respective step sizes for the RW algorithms and SVGD given by $0.1$ and $0.2$ when $d=2$, $0.2$ and $0.4$ when $d=32$, $0.5$ and $0.5$ when $d=256$, and $1.0$ and $1.0$ when $d=2048$. Results are averaged over $5$ trials when $d=2048$ and $10$ trials for all other dimensions.

The bandwidths~\eqref{bw:scaled_with_n} were used for KDRW\!\textunderscore fft and RRW\!\textunderscore fft. Setting the bandwidth for SVGD was a challenge. Namely, the adaptive bandwidth set according to the median trick,
\[b^2 = \frac{1}{2 \ln (n+1)} \text{median}_{i<j}|x^i - x^j|^2,\]
as is typical in the literature~\cite{liu2016stein,liu2018stein}, performs very poorly: the outputs remain far from the target distribution, especially in large dimensions. We found that taking a larger bandwidth, namely $b=\sqrt{2d}$, yielded substantially better results.

\begin{figure}[t]
\centering
\captionsetup{font=small}

\begin{subfigure}{0.45\textwidth}
  \centering
  \includegraphics[width=\linewidth]{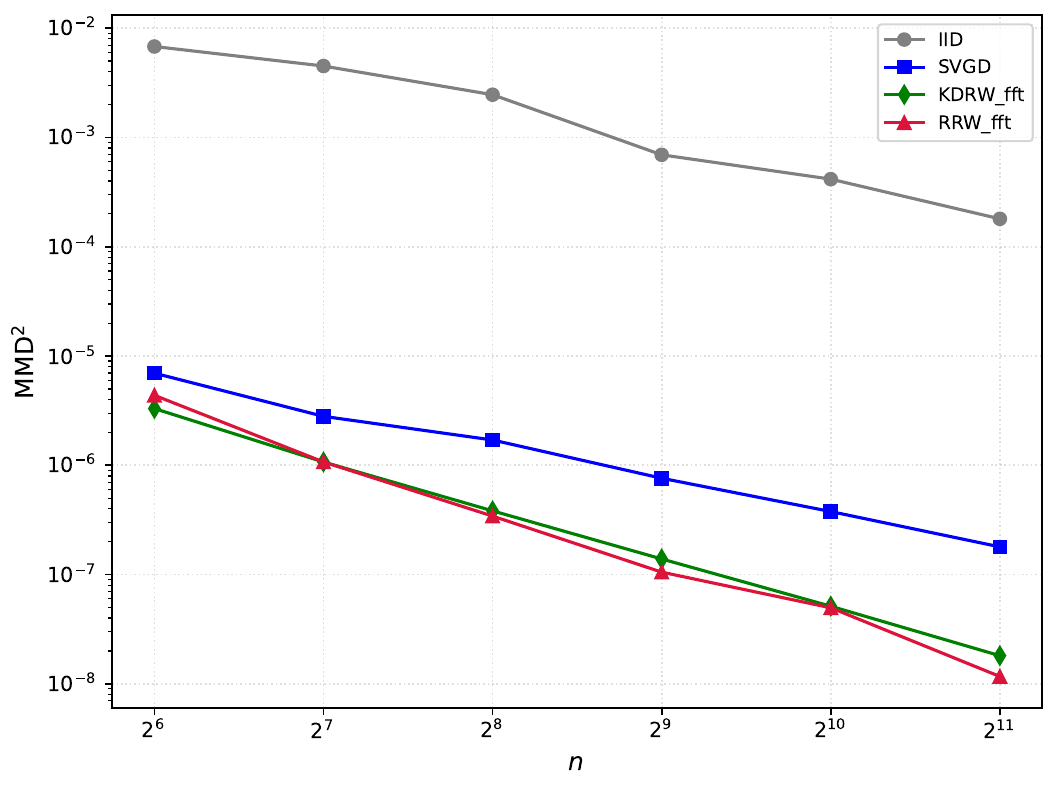}
  \caption{$d=2$}
  \label{fig:MMD_dim_2}
\end{subfigure}\hspace*{10pt}
\begin{subfigure}{0.45\textwidth}
  \centering
  \includegraphics[width=\linewidth]{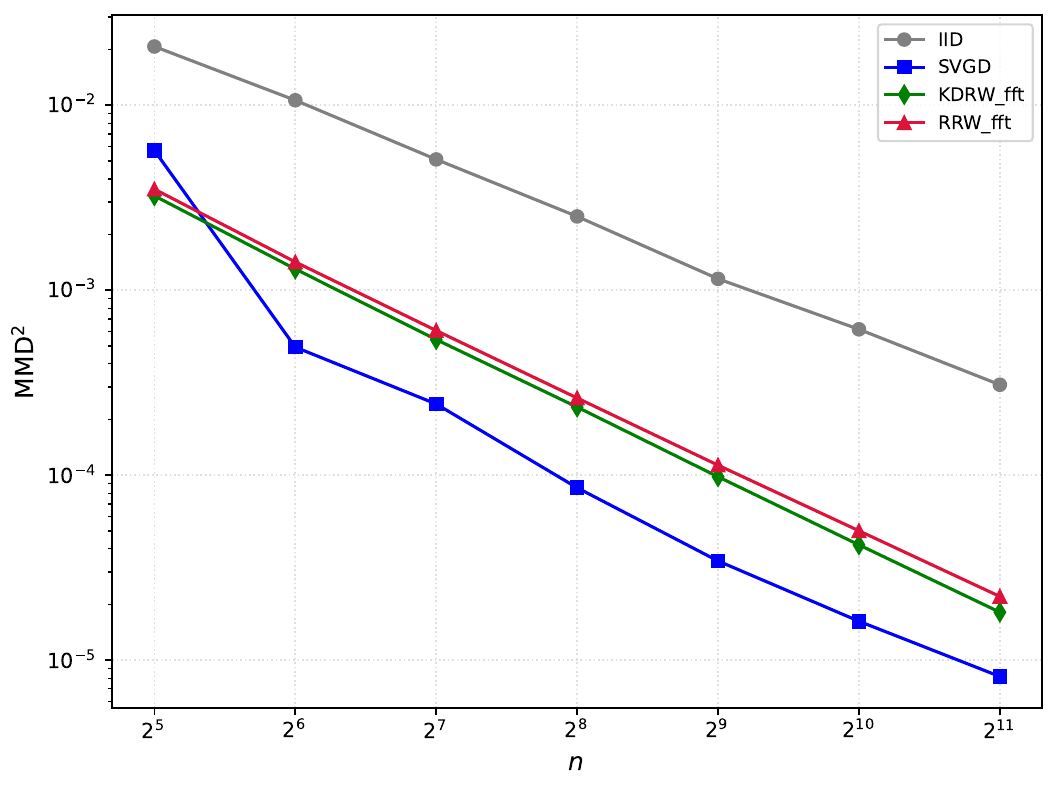}
  \caption{$d=32$}
  \label{fig:MMD_dim_32}
\end{subfigure}

\begin{subfigure}{0.45\textwidth}
  \centering
  \includegraphics[width=\linewidth]{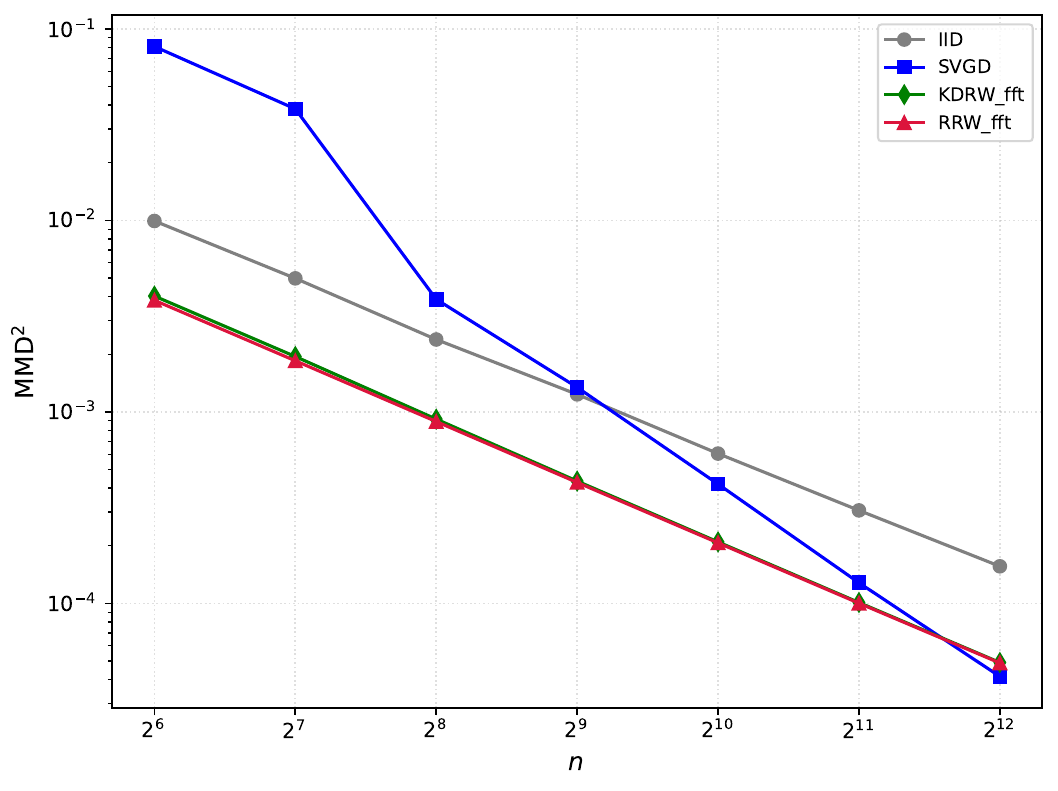}
  \caption{$d=256$}
  \label{fig:MMD_dim_256}
\end{subfigure}\hspace*{10pt}
\begin{subfigure}{0.45\textwidth}
  \centering
  \includegraphics[width=\linewidth]{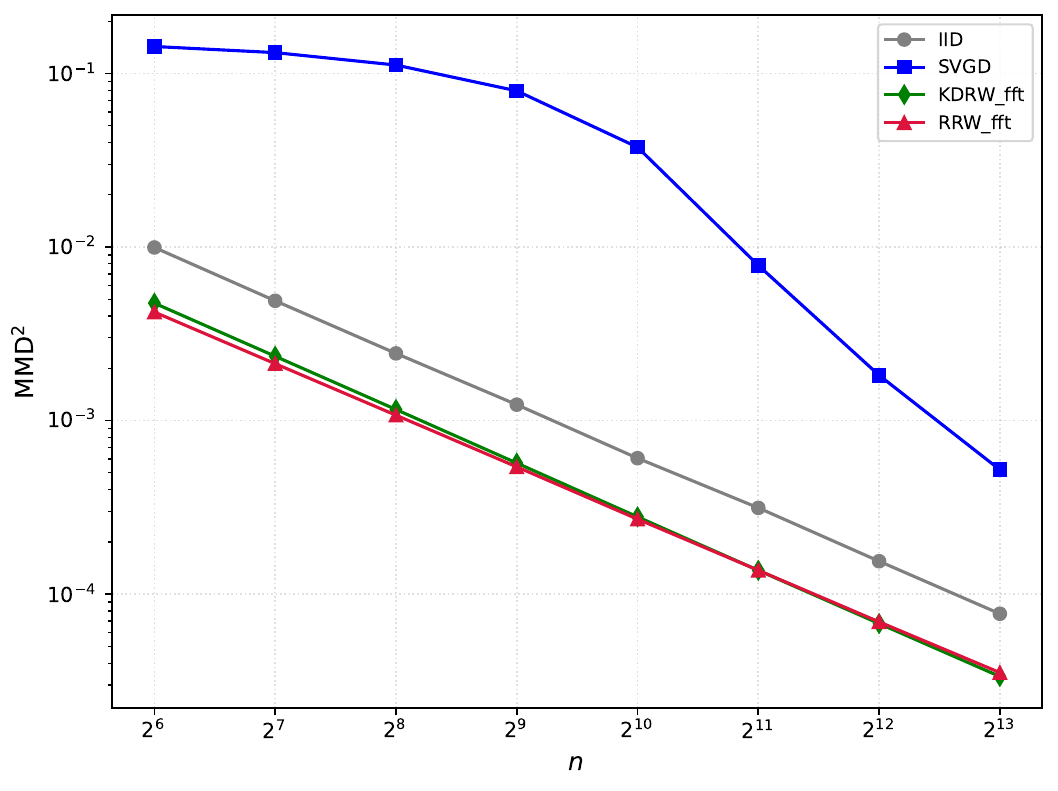}
  \caption{$d=2048$}
  \label{fig:MMD_dim_2048}
\end{subfigure}
     \caption{MMD$^2$ error versus particle number $n$ for a standard normal target distribution over different dimensions $d$.}\label{fig:quantization_MMD}
\end{figure}

We again observe that KDRW\!\textunderscore fft and RRW\!\textunderscore fft produce samples that are significantly closer to the target distribution in MMD distance than i.i.d.\ samples. SVGD also produces close samples when $n\geq d$. We also note that the error of the RW algorithms generally decreases with particle number faster than for i.i.d.\ samples. In particular, the slopes of the lines in the log--log plots of Figure~\ref{fig:quantization_MMD} (estimated by linear fits) are summarized in Table~\ref{tab:slopes}.

\begin{table}[b]
\centering
\begin{tabular}{lccc}
\hline
Dimension $d$ & i.i.d. & KDRW\!\textunderscore fft & RRW\!\textunderscore fft \\
\hline
$2$    & $-1.10$ & $-1.49$ & $-1.65$ \\
$32$   & $-1.02$ & $-1.23$ & $-1.20$ \\
$256$  & $-0.99$ & $-1.06$ & $-1.05$ \\
$2048$ & $-1.00$ & $-1.02$ & $-0.99$ \\
\hline
\end{tabular}
\caption{Slopes of the log--log error curves of the Radon--Wasserstein based algorithms in Figure~\ref{fig:quantization_MMD}.}
\label{tab:slopes}
\end{table}

Second, under the same experimental setup as in Figure~\ref{fig:quantization_MMD}, in Figure~\ref{fig:quantization_Mean} we plot the error of the sample mean
\begin{equation} \label{eq:mean_err}
    \| x \|_1 = \frac{1}{d} \sum_{i=1}^d  |x^i|,
\end{equation}
for each algorithm.

\begin{figure}[t]
\centering
\captionsetup{font=small}

\begin{subfigure}{0.45\textwidth}
  \centering
  \includegraphics[width=\linewidth]{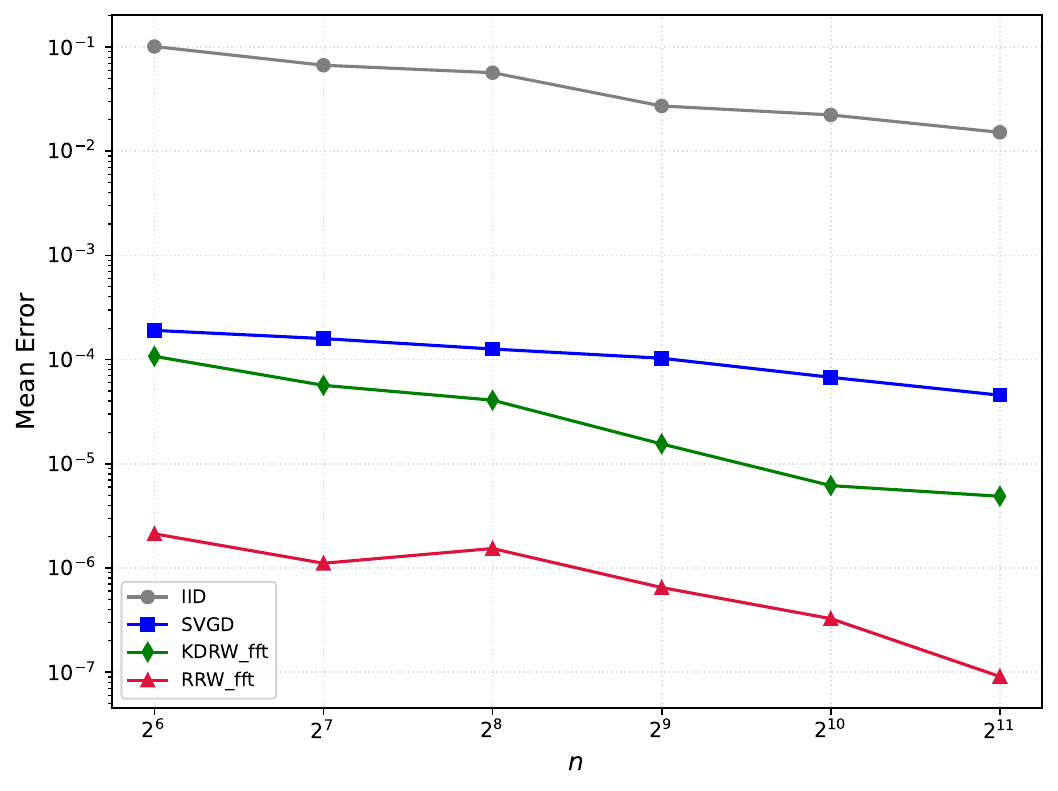}
  \caption{$d=2$}
  \label{fig:mean_dim_2}
\end{subfigure}\hspace*{10pt}
\begin{subfigure}{0.45\textwidth}
  \centering
  \includegraphics[width=\linewidth]{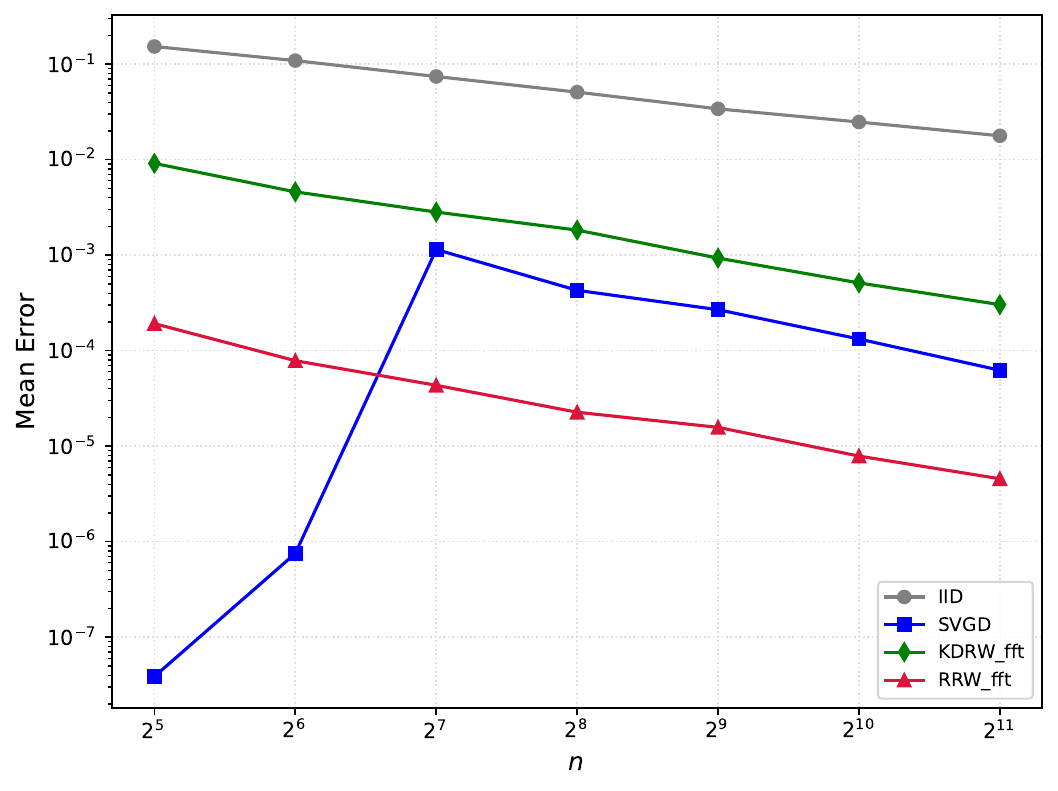}
  \caption{$d=32$}
  \label{fig:mean_dim_32}
\end{subfigure}

\begin{subfigure}{0.45\textwidth}
  \centering
  \includegraphics[width=\linewidth]{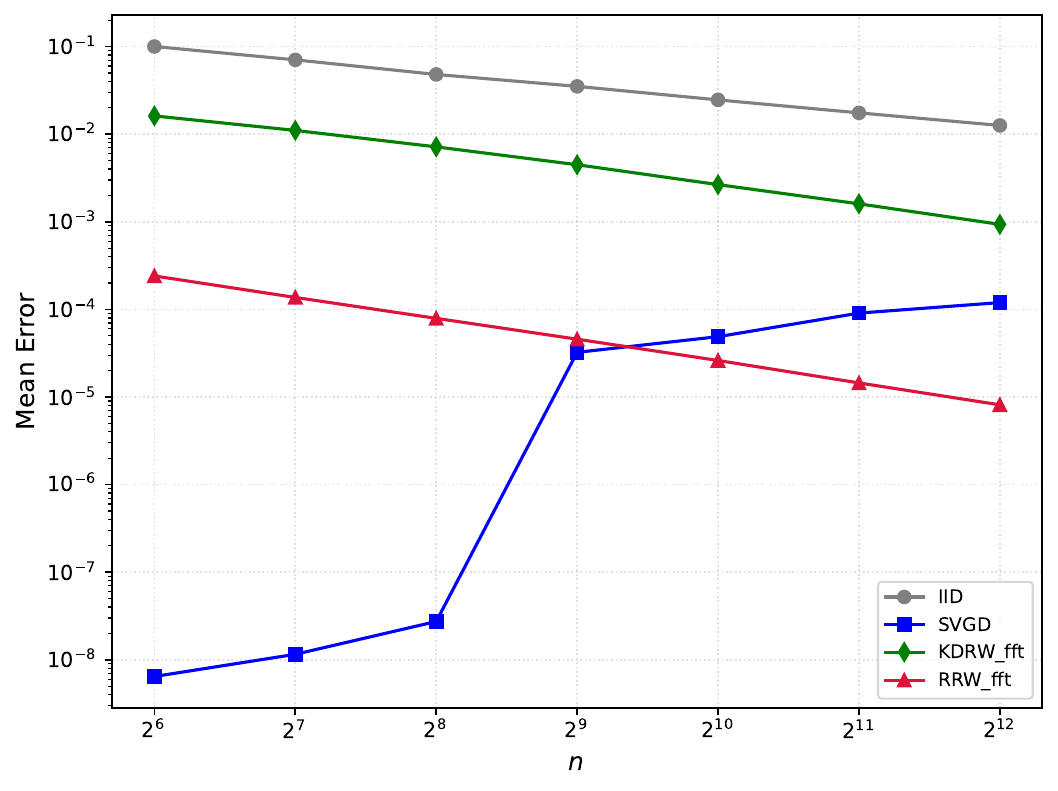}
  \caption{$d=256$}
  \label{fig:mean_dim_256}
\end{subfigure}\hspace*{10pt}
\begin{subfigure}{0.45\textwidth}
  \centering
  \includegraphics[width=\linewidth]{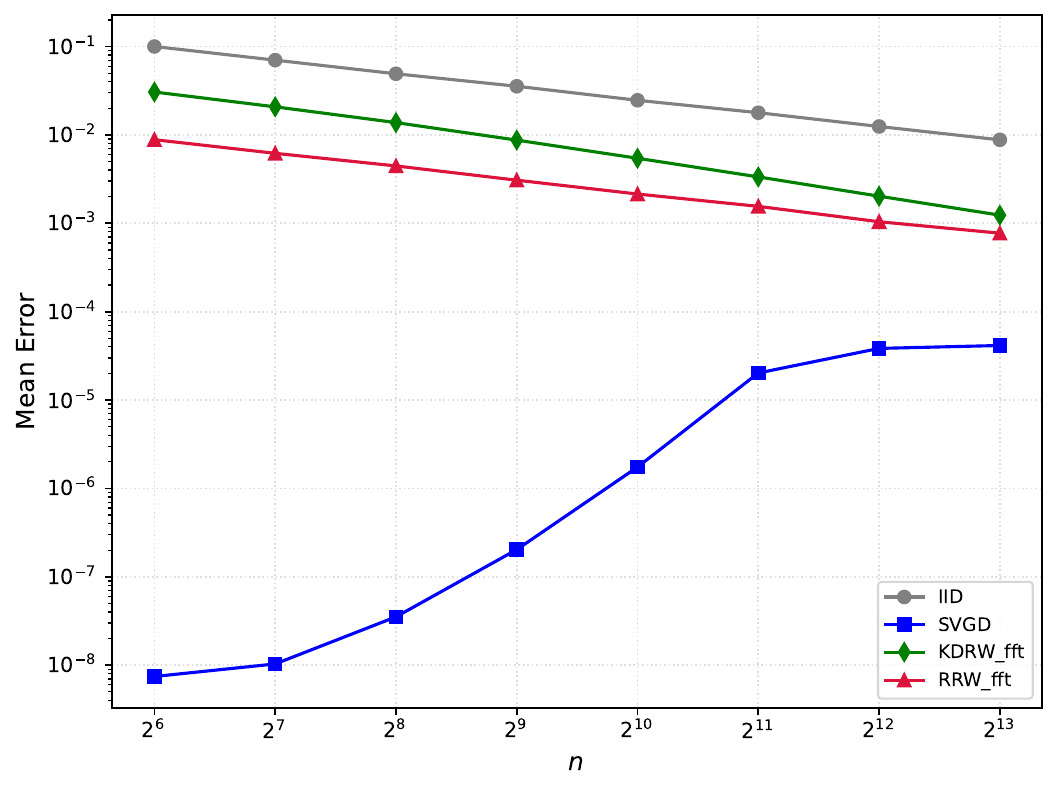}
  \caption{$d=2048$}
  \label{fig:mean_dim_2048}
\end{subfigure}
     \caption{Empirical mean~\eqref{eq:mean_err} error versus particle number $n$ for a standard normal target distribution over different dimensions $d$.}
\label{fig:quantization_Mean}
\end{figure}

We note that KDRW\textunderscore fft, RRW\textunderscore fft, and SVGD also have substantially smaller sample mean error than i.i.d.\ samples. Moreover, these results are largely consistent with the $\MMD^2$ error in Figure~\ref{fig:quantization_MMD}. The apparent oddity that SVGD produces the best approximation of the mean when $n\le d$, despite having large $\MMD^2$ error, is not due to a close approximation of the target measure, but rather to the variance collapse described in~\cite{ba2021understanding}: the particles are overall much closer to the origin than in i.i.d.\ sampling.

These experiments show that KDRW\!\textunderscore fft and RRW\!\textunderscore fft decrease the MMD distance in all dimensions and for all particle numbers, with the amount of improvement decreasing with dimension for all models. On the other hand, SVGD can increase the distance to the target (even though we start from i.i.d.\ samples of the target) when $n<d$. This degradation can be catastrophic: the final state of SVGD may fail to resemble the target distribution. This problem appears to worsen as the dimension increases. While we report SVGD experiments with bandwidth $\sqrt{2d}$, the other SVGD bandwidth choices we tested exhibited similar or worse behavior. Finally, we note that for all of these systems, convergence toward the final state is not exponential in the corresponding continuum model. Thus, while the evolution slows down with time, we cannot reliably predict what would happen for times well beyond the $t=10{,}000$ considered here.

\section{Theoretical Results} \label{sec:theory}

In this section, we study the theoretical properties of the flows proposed above, including the well-posedness of the continuous flow/particle schemes, the convergence of the stochastic descent scheme to the particle schemes, and the long-time behavior of the solutions. As many of the proofs of these facts are straightforward adaptions of well-known results, we defer most of the proofs to Appendix~\ref{app:theory}. The only exception is the proof of the qualitative long-time convergence of the RRW flow to the target measure, which we include in Subsection~\ref{subsec:long_time_convergence}.

Before continuing, we define the class of target measures for which these results apply. Given a target measure $\pi$ on $\R^d$ that is proportional to $e^{-U}$, we will always assume that  $U:\R^d\rightarrow \R$ satisfies the following conditions:

\begin{assumption}\label{ass:potential} There exists $L,a,c>0$ such that
\begin{enumerate}
    \item $\|\nabla U\|_{\text{Lip}}\leq L,$
    \item $U(x)\geq a|x|^2-c$.
\end{enumerate}   
\end{assumption}

That is, $\nabla U$ is globally Lipschitz and $U$ is quadratically confining. This assumption thus includes all Gaussian distributions and Gaussian mixtures.

Additionally, we will often require that the regularizing kernel $k$ used to define~\eqref{eq:KDRWgf} and~\eqref{eq:RRWgf} is sufficiently regular.

\begin{condition}\label{cond:k_conditions}
$k\in W^{1,1}(\R)\cap C_b^2(\R)$ and $k$ is positive and even.
\end{condition}

\subsection{Well-posedness}\label{subsec:spec-wp}

We start by stating the well-posedness of the KDRW and the  RRW flows. We first clarify our definition of a solution to a continuity equation.

\begin{definition}
If $\rho\in C([0,\infty),\Pc(\R^d))$ and $v(t,x)\in L^1_{loc}([0,\infty),L^1_{loc}(\rho_t))$, then we say $\rho$ is a solution to
\begin{equation}\label{eq:continuity}
\begin{cases}
    \partial_t\rho+\nabla\cdot(\rho v)=0\\
    \rho|_{t=0}=\nu
\end{cases}
\end{equation}
if
\[\int_0^\infty \int_{\R^d}(\partial_t \phi(t,x)+\nabla\phi(t,x)\cdot v(t,x)\,d\rho_t(x)\,dt+\int_{\R^d}\phi(0,x)\,d\nu(x)=0\]
for all $\phi\in C_c^\infty([0,\infty)\times \R^d)$.
\end{definition}

Our first result is then, when $\eps>0$ and $k$ is sufficiently regular, that there exists a unique solution to the KDRW and RRW flows in the class of $C([0,\infty),\Pc_2(\R^d))$ solutions for all initial conditions.

\begin{theorem}\label{thm:specific_existence}
If $\eps>0$ and $k$ satisfies Condition~\ref{cond:k_conditions}, then~\eqref{eq:KDRWgf} and~\eqref{eq:RRWgf} admit unique solutions in $C([0,\infty),\Pc_2(\R^d))$ for any initial condition $\nu$ in $\Pc_2(\R^d)$. More so, letting $\rho$ be the solution in either case, there exists $C(L,|\nabla U(0)|,\|k\|_{C^2_b},\eps^{-1})>0$ such that $\|\rho_t\|_{\Pc_2}\leq e^{Ct}(\|\nu\|_{\Pc_2}+1)$ for all $t\geq 0$.
\end{theorem}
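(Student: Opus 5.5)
We will treat both flows as McKean--Vlasov-type continuity equations $\partial_t\rho+\nabla\cdot(\rho\,V[\rho])=0$ with a nonlocal velocity functional $V:\Pc_2(\R^d)\to C(\R^d;\R^d)$, and prove well-posedness by a fixed-point argument in $C([0,T],\Pc_2(\R^d))$ equipped with the Wasserstein distance $\sup_{t}\mathcal{W}(\rho_t,\sigma_t)$. The key preliminary step is to establish two estimates on the scalar profile $u(\theta,p,\mu)$ from \eqref{eq:spec_integral_kern}, uniformly in $\theta$.

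The first estimate is a growth and Lipschitz-in-$x$ bound. The denominator satisfies $k*R^\theta\mu+\eps\geq\eps$. The numerator is bounded by $\|k'\|_\infty+\|k\|_\infty\int|\nabla U|\,d\mu$, and $\int|\nabla U|\,d\mu\leq |\nabla U(0)|+L\|\mu\|_{\Pc_2}$. Hence
\[|V[\mu](x)|\leq C\big(1+\|\mu\|_{\Pc_2}\big).\]
Differentiating in $p$ uses $k'$, $k''$ and the quotient rule, giving $|\partial_p u|\leq C\big(1+\|\mu\|_{\Pc_2}\big)\big(1+\|\mu\|_{\Pc_2}\big)$ with constant depending on $\eps^{-1}$ and $\|k\|_{C^2_b}$. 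So $V[\mu]$ is Lipschitz in $x$ with a constant depending on $\|\mu\|_{\Pc_2}$. For RRW the extra convolution with $k\in L^1$ only improves things, since $\|k*f\|_\infty\leq\|k\|_{L^1}\|f\|_\infty$ and $(k*f)'=k'*f$.

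The second estimate is stability in the measure:
\[\sup_x|V[\mu](x)-V[\nu](x)|\leq C\big(1+\|\mu\|_{\Pc_2}+\|\nu\|_{\Pc_2}\big)^2\,\mathcal{W}(\mu,\nu).\]
To prove it, take an optimal coupling $\gamma$ and write, for example,
\[k*R^\theta\mu(p)-k*R^\theta\nu(p)=\int\big[k(p-x\cdot\theta)-k(p-y\cdot\theta)\big]\,d\gamma,\]
which is bounded by $\|k'\|_\infty\mathcal{W}$. The term $k*R^\theta(\mu\nabla_\theta U)$ is handled similarly, using that $(x,p)\mapsto k(p-x\cdot\theta)\nabla_\theta U(x)$ is Lipschitz in $x$ with constant $\lesssim L+\|k'\|_\infty(|\nabla U(0)|+L|x|)$; Cauchy--Schwarz against $\gamma$ then produces the second-moment factor. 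The ratio difference is controlled because both denominators are at least $\eps$.

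With these in hand we build the solution. Fix $\rho\in C([0,T],\Pc_2)$ with $\sup_t\|\rho_t\|_{\Pc_2}\leq M$. The velocity $(t,x)\mapsto V[\rho_t](x)$ is globally Lipschitz in $x$ and continuous in $t$, so it generates a flow map $X_t$. Set $\Phi(\rho)_t=X_t\#\nu$; this is the unique solution of the linear continuity equation with that velocity, by standard uniqueness for Lipschitz fields. The moment bound follows from the ODE: $\frac{d}{dt}|X_t|\leq C(1+M)$. To close the invariant ball, however, one needs the self-consistent estimate
\[\frac{d}{dt}\|\rho_t\|_{\Pc_2}\leq C\big(1+\|\rho_t\|_{\Pc_2}\big),\]
and this requires a sharper growth bound that is linear rather than superlinear. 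Indeed $|V[\mu](x)|$ is bounded by a constant times $1+\|\mu\|_{\Pc_2}$, which is linear, so Grönwall gives $\|\rho_t\|_{\Pc_2}\leq e^{Ct}(\|\nu\|_{\Pc_2}+1)$, uniform in the iteration and hence a priori for any solution.

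Contraction on a short interval follows from the coupling estimate
\[\frac{d}{dt}\,\mathcal{W}\big(\Phi(\rho)_t,\Phi(\sigma)_t\big)\leq \mathrm{Lip}_x\,\mathcal{W}\big(\Phi(\rho)_t,\Phi(\sigma)_t\big)+\sup_x|V[\rho_t]-V[\sigma_t]|,\]
obtained by comparing the two flow maps started from the same point. Banach's fixed point theorem then gives local existence and uniqueness. Since the step length depends only on the moment bound, which is a priori controlled, the solution extends to $[0,\infty)$. Uniqueness in all of $C([0,\infty),\Pc_2)$ follows because any solution has locally bounded second moments, so the same Grönwall comparison applies; any solution is also a Lagrangian solution, since its velocity field is Lipschitz.

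The main obstacle is the measure-Lipschitz estimate for the term $k*R^\theta(\mu\nabla_\theta U)$, because $\nabla U$ is unbounded. I expect to manage it through the quadratic-moment weights together with Cauchy--Schwarz, and to check carefully that the constants depend only on $L$, $|\nabla U(0)|$, $\|k\|_{C^2_b}$ and $\eps^{-1}$. Note that the confinement constants $a$ and $c$ from Assumption~\ref{ass:potential} are not needed here.
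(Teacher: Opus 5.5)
Your proposal is correct and follows the paper's own route. The paper first proves uniform-in-$\theta$ growth and Lipschitz bounds on $u$ and $k*u$ (Lemma~\ref{lem:theta_vector_bounds}), then applies a Banach fixed point to $\rho\mapsto X^\rho_t\#\nu$ on a moment-bounded ball in $C([0,T],\Pc_2(\R^d))$, with Gr\"onwall-based global extension (Theorem~\ref{thm:transport_wp}); your measure-Lipschitz constant $(1+\|\mu\|_{\Pc_2}+\|\nu\|_{\Pc_2})^2$ is cruder than the paper's $(1+\|\mu\|_{\Pc_2})$ but harmless on that ball, and your remark that every $C([0,\infty),\Pc_2)$ solution is Lagrangian supplies the step, left implicit in the paper, that extends uniqueness beyond the ball.
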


The method of proof is classical: the velocity field, viewed as a non-local function, enjoys sufficient Lipschitz regularity under these conditions so as to apply a fixed point argument. That is, Theorem~\ref{thm:specific_existence} holds as a consequence of a general fixed point theorem and the following lemma.

\begin{proposition}\label{prop:specific_velocity}
Let $V:\R^d\times\mathcal{P}_2(\R^d)\rightarrow \R^d$ be defined by
\begin{equation}\label{eq:KDRW_vec}
V(x,\mu):=-\fint_{\S^{d-1}}\theta \left(\frac{k'*R^\theta\mu+k*R^\theta(\mu\nabla_\theta U)}{k*R^\theta\mu+\eps}\right)(x\cdot \theta)\,d\theta
\end{equation}
or
\begin{equation}\label{eq:RRW_vec}
V(x,\mu):=-\fint_{\S^{d-1}}\theta k*\left(\frac{k'*R^\theta\mu+k*R^\theta(\mu\nabla_\theta U)}{k*R^\theta\mu+\eps}\right)(x\cdot \theta)\,d\theta.
\end{equation}
Then, given the conditions of Theorem~\ref{thm:specific_existence}, there exists $M(L,|\nabla U(0)|,\|k\|_{C^2_b},\eps^{-1})>0$ so that for all $x,y\in\R^d$ and $\nu,\mu\in \Pc_2(\R^d)$
\begin{enumerate}
    \item $|V(x,\mu)|\leq M(1+\|\mu\|_{\Pc_2})$,
    \item $|V(x,\mu)-V(y,\nu)|\leq M(1+\|\mu\|_{\Pc_2})(|x-y|+\mathcal{W}(\mu,\nu))$.
\end{enumerate}
\end{proposition}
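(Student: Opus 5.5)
We reduce everything to one-dimensional estimates for a fixed direction $\theta$, uniform in $\theta$, and then average over $\S^{d-1}$. For $\mu\in\Pc_2(\R^d)$ and $\theta\in\S^{d-1}$ we write the scalar functions on $\R$
\[A_\mu(p):=k'*R^\theta\mu(p)=\int k'(p-x\cdot\theta)\,d\mu(x),\qquad B_\mu(p):=\int k(p-x\cdot\theta)\nabla_\theta U(x)\,d\mu(x),\]
\[D_\mu(p):=k*R^\theta\mu(p)+\eps\geq\eps,\qquad F_\mu:=(A_\mu+B_\mu)/D_\mu .\]
Then \eqref{eq:KDRW_vec} is $-\fint\theta F_\mu(x\cdot\theta)\,d\theta$ and \eqref{eq:RRW_vec} is $-\fint\theta (k*F_\mu)(x\cdot\theta)\,d\theta$.

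\textbf{Pointwise bounds.} Since $R^\theta\mu$ is a probability measure, we have $|A_\mu|\le\|k'\|_\infty$, $\|k\|_\infty\ge D_\mu-\eps\ge 0$, and $|\partial_pD_\mu|\le\|k'\|_\infty$. Using $|\nabla U(x)|\le|\nabla U(0)|+L|x|$ and Cauchy--Schwarz,
\[|B_\mu|\le\|k\|_\infty\big(|\nabla U(0)|+L\|\mu\|_{\Pc_2}\big),\qquad |\partial_pA_\mu|\le\|k''\|_\infty,\]
and similarly $|\partial_pB_\mu|\le\|k'\|_\infty\big(|\nabla U(0)|+L\|\mu\|_{\Pc_2}\big)$. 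Because $D_\mu\ge\eps$, this gives
\[\|F_\mu\|_\infty\le M(1+\|\mu\|_{\Pc_2}),\]
and, by the quotient rule, $\|\partial_pF_\mu\|_\infty\le M(1+\|\mu\|_{\Pc_2})$, with $M$ depending only on $L,|\nabla U(0)|,\|k\|_{C^2_b},\eps^{-1}$. For the RRW case, $\|k*F_\mu\|_\infty\le\|k\|_{L^1}\|F_\mu\|_\infty$ and $\|(k*F_\mu)'\|_\infty\le\|k\|_{L^1}\|F'_\mu\|_\infty$; here $k\in W^{1,1}$ is used. Averaging over $\theta$ gives item 1. Since $|x\cdot\theta-y\cdot\theta|\le|x-y|$, it also gives the $x$-Lipschitz part of item 2.

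\textbf{Measure dependence.} Let $\gamma$ be an optimal coupling of $\mu$ and $\nu$. For $A$ and $D$ we use that $k$ and $k'$ are Lipschitz:
\[|A_\mu(p)-A_\nu(p)|\le\|k''\|_\infty\int|x-y|\,d\gamma\le\|k''\|_\infty\mathcal{W}(\mu,\nu),\]
and likewise for $D$. For $B$ we write
\[k(p-x\cdot\theta)\nabla_\theta U(x)-k(p-y\cdot\theta)\nabla_\theta U(y)=\big[k(p-x\cdot\theta)-k(p-y\cdot\theta)\big]\nabla_\theta U(x)+k(p-y\cdot\theta)\big[\nabla_\theta U(x)-\nabla_\theta U(y)\big].\]
The first term is bounded by $\|k'\|_\infty|x-y|\big(|\nabla U(0)|+L|x|\big)$ and the second by $\|k\|_\infty L|x-y|$. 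Integrating against $\gamma$ and applying Cauchy--Schwarz yields
\[|B_\mu-B_\nu|\le M(1+\|\mu\|_{\Pc_2})\,\mathcal{W}(\mu,\nu).\]
Then the identity
\[F_\mu-F_\nu=\frac{(A_\mu-A_\nu)+(B_\mu-B_\nu)}{D_\mu}-F_\nu\,\frac{D_\mu-D_\nu}{D_\mu}\]
gives $\|F_\mu-F_\nu\|_\infty\le M(1+\|\mu\|_{\Pc_2}+\|\nu\|_{\Pc_2})\,\mathcal{W}(\mu,\nu)$.

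\textbf{Main obstacle.} The remaining issue is that the $F_\nu$ factor carries $\|\nu\|_{\Pc_2}$, while the statement allows only $1+\|\mu\|_{\Pc_2}$. We remove it by noting
\[\|\nu\|_{\Pc_2}\le\|\mu\|_{\Pc_2}+\mathcal{W}(\mu,\nu).\]
This yields an extra term $\mathcal{W}^2$, which is harmless when $\mathcal{W}\le1$. When $\mathcal{W}(\mu,\nu)>1$, we instead bound crudely:
\[|V(x,\mu)-V(x,\nu)|\le M(2+\|\mu\|_{\Pc_2}+\|\nu\|_{\Pc_2})\le M'(1+\|\mu\|_{\Pc_2})(1+\mathcal{W})\le 2M'(1+\|\mu\|_{\Pc_2})\,\mathcal{W}.\]
A slightly cleaner alternative is to swap the roles of $\mu$ and $\nu$, writing the numerator difference divided by $D_\nu$ and multiplying $F_\mu$, so that only $\|\mu\|_{\Pc_2}$ appears; this requires the $B$-estimate with $\nabla_\theta U$ evaluated at the $\mu$-marginal, which the splitting above already provides. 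Finally, convolving with $k$ preserves these sup bounds up to the factor $\|k\|_{L^1}$, and averaging over $\theta$ completes item 2 via the triangle inequality
\[|V(x,\mu)-V(y,\nu)|\le|V(x,\mu)-V(y,\mu)|+|V(y,\mu)-V(y,\nu)|.\]
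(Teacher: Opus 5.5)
Your proposal is correct and is essentially the paper's proof (via Lemma~\ref{lem:theta_vector_bounds}). Both arguments use the same steps:
\begin{itemize}
\item uniform-in-$\theta$ sup bounds and $p$-Lipschitz bounds on the numerator and the denominator;
\item optimal-coupling estimates in the measure variable, with the $\nabla_\theta U$ term split so that only $\mu$-moments enter;
\item a quotient decomposition using $D_\mu\ge\eps$;
\item the factor $\|k\|_{L^1}$ for the extra convolution in the RRW case, which the paper leaves implicit.
\end{itemize}

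The only difference is how you remove the $\|\nu\|_{\Pc_2}$ factor. Your case split on $\mathcal{W}(\mu,\nu)\le 1$ is valid. Your ``cleaner alternative'' $F_\mu-F_\nu=\big((A_\mu+B_\mu)-(A_\nu+B_\nu)\big)/D_\nu+F_\mu(D_\nu-D_\mu)/D_\nu$ is precisely the paper's route. Note that the paper's displayed identity mistakenly has $g(\theta,p,\mu)$ rather than $g(\theta,q,\nu)$ in the first denominator, whereas your version is correct.
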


We note that~\eqref{eq:KDRWgf} and~\eqref{eq:RRWgf} correspond to the continuity equation~\eqref{eq:continuity} with $v(t,x)=V(x,\rho_t)$ where $V$ is respectively defined by~\eqref{eq:KDRW_vec} and~\eqref{eq:RRWgf}. Proposition~\ref{prop:specific_velocity} then states that these velocity fields grow linearly with respect to the second moment of the measure and are locally Lipschitz in both space and measure coordinates.

Although the positivity of $\eps$ is in general essential to these bounds, for special choices of $k$ (such as Gaussian) and under stronger hypothesis on the initial conditions, this assumption can be removed via a more complicated fixed point argument that uses the exact decay of the kernel and its derivatives. Since taking $\eps>0$ is always natural for applications, we do not include this proof.

Proposition~\ref{prop:specific_velocity} follows immediately from Lemma~\ref{lem:theta_vector_bounds} in Subsection~\ref{subsec:abstract} of the Appendix. Theorem~\ref{thm:specific_existence} is then a direct application of an abstract well-posedness theorem for non-local velocity fields satisfying conditions guaranteed by Proposition~\ref{prop:specific_velocity}---Theorem~\ref{thm:transport_wp} in Subsection~\ref{subsec:wp}. The proof of Theorem~\ref{thm:transport_wp} is a straightforward modification of a classical fixed point argument (for example, see~\cite[Theorem~4.21]{CarmonaDelarue2018I}). As we could not find an exact result that applies to the velocity fields considered here in the literature, we include the proof for completeness.

Continuing, it is a straightforward corollary of the abstract well-posedness theorem that the corresponding finite particle systems are well-posed.

\begin{corollary}\label{cor:specific_ode_wp}
Under the conditions of Theorem~\ref{thm:specific_existence}, for all initial conditions $\ux_0^n\in(\R^d)^n$, there exists unique solutions to the system of ODEs defined by
\begin{equation}\label{eq:discrete_KDRWgf}
\frac{d}{dt}x_t^i=-\fint_{\S^{d-1}}\theta\left(\frac{\sum_{j=1}^n k'((x_t^i-x_t^j)\cdot\theta)+\sum_{j=1}^n k((x_t^i-x_t^j)\cdot\theta)\nabla_\theta U(x_t^j)}{\sum_{j=1}^n k((x_t^i-x_t^j)\cdot\theta)+n\eps}\right)\,d\theta,
\end{equation}
or
\begin{equation}\label{eq:discrete_RRWgf}
\frac{d}{dt}x_t^i=-\fint_{\S^{d-1}}\theta\int_{\R}k(x_t^i\cdot\theta-p)\left(\frac{\sum_{j=1}^n k'(p-x_t^j\cdot\theta)+\sum_{j=1}^n k(p-x_t^j\cdot\theta)\nabla_\theta U(x_t^j)}{\sum_{j=1}^n k(p-x_t^j\cdot\theta)+n\eps}\right)\,dp\,d\theta.
\end{equation}
More so, letting  $\mu_t^n:=\frac{1}{n}\sum_{i=1}^n\delta_{x_t^i}$ in either case, there exists $C(L,|\nabla U(0)|,\|k\|_{C^2_b},\eps^{-1})>0$ such that $\|\mu_t^n\|_{\Pc_2}\leq e^{Ct}(\|\mu_0^n\|_{\Pc_2}+1)$.
\end{corollary}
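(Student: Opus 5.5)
The plan is to reduce the particle system to the mean-field velocity field of Proposition~\ref{prop:specific_velocity}. For $X^n=(x^1,\dotsc,x^n)\in(\R^d)^n$ we set $\mu^n[X^n]:=\frac1n\sum_{j}\delta_{x^j}$. Then $R^\theta\mu^n=\frac1n\sum_j\delta_{x^j\cdot\theta}$, so
\[k*R^\theta\mu^n(p)=\tfrac1n\sum_j k(p-x^j\cdot\theta).\]
The analogous formulas hold for $k'*R^\theta\mu^n$ and for $k*R^\theta(\mu^n\nabla_\theta U)$. Multiplying the numerator and denominator by $n$ shows that the right-hand sides of \eqref{eq:discrete_KDRWgf} and \eqref{eq:discrete_RRWgf} equal exactly $V(x^i,\mu^n[X^n])$, with $V$ given respectively by \eqref{eq:KDRW_vec} and \eqref{eq:RRW_vec}. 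The ODE system is therefore $\dot X^n=F(X^n)$ with $F(X^n)^i:=V(x^i,\mu^n[X^n])$.

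Next we check that $F$ is locally Lipschitz on $(\R^d)^n$ with linear growth. Using the trivial coupling $\frac1n\sum_i\delta_{(x^i,y^i)}$ gives
\[\mathcal W(\mu^n[X^n],\mu^n[Y^n])\le \Big(\tfrac1n\sum_i|x^i-y^i|^2\Big)^{1/2}=n^{-1/2}|X^n-Y^n|,\]
and clearly $\|\mu^n[X^n]\|_{\Pc_2}=n^{-1/2}|X^n|$. Item 2 of Proposition~\ref{prop:specific_velocity} then yields
\[|F(X^n)-F(Y^n)|\le \sqrt n\,M(1+n^{-1/2}|X^n|)(1+n^{-1/2})|X^n-Y^n|,\]
so $F$ is Lipschitz on bounded sets. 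Item 1 gives $|F(X^n)|\le \sqrt n M(1+n^{-1/2}|X^n|)$. Picard--Lindel\"of gives a unique local solution, and the linear growth rules out blow-up, so the solution is global and unique.

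For the moment bound we use item 1 directly. Write $m_t:=\|\mu^n_t\|_{\Pc_2}^2=\frac1n\sum_i|x_t^i|^2$. Then
\[\dot m_t=\tfrac2n\sum_i x_t^i\cdot V(x_t^i,\mu_t^n)\le 2M\sqrt{m_t}\,(1+\sqrt{m_t}),\]
where we used Cauchy--Schwarz on the average. Setting $q_t:=\sqrt{m_t}+1$, we get $\dot q_t\le Mq_t$ wherever $m_t>0$; the case $m_t=0$ is handled by applying the same argument to $\sqrt{m_t+\delta}$ and letting $\delta\to0$. Hence $q_t\le e^{Mt}q_0$, which is the claimed bound with $C=M$ and depends only on the constants listed in Proposition~\ref{prop:specific_velocity}.

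An alternative route is to invoke Theorem~\ref{thm:transport_wp} with the empirical initial datum $\nu=\mu^n_0$. The solution there is the pushforward of $\mu_0^n$ along the characteristic flow $\dot x=V(x,\rho_t)$, and it remains empirical. In that route the only delicate point is uniqueness among particle trajectories, which the direct ODE argument above settles. The main (minor) obstacle is simply the bookkeeping identification of the discrete right-hand sides with $V(\cdot,\mu^n)$, including the $n\eps$ factor; everything else follows from Proposition~\ref{prop:specific_velocity}.
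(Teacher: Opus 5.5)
Your proof is correct, but it takes a different route from the paper. The paper never argues directly on $(\R^d)^n$. It deduces the corollary from Proposition~\ref{prop:specific_velocity} together with Corollary~\ref{cor:ode_wp}, in three steps:
\begin{itemize}
\item it solves the nonlinear continuity equation with empirical initial datum $\mu_0^n$ via the Wasserstein fixed-point Theorem~\ref{thm:transport_wp};
\item it observes that this solution is the pushforward of $\mu_0^n$ under the characteristic flow, so it stays empirical, which gives existence;
\item it obtains uniqueness from uniqueness of the PDE solution and inherits the moment bound from \eqref{eq:moment_bound}.
\end{itemize}

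You instead write the system as $\dot X^n=F(X^n)$. The diagonal coupling turns items 1--2 of Proposition~\ref{prop:specific_velocity} into local Lipschitz continuity and linear growth of $F$, and you apply Picard--Lindel\"of. You then prove the moment bound by a direct differential inequality for $\frac1n\sum_i|x_t^i|^2$.

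\textbf{What your route buys.} It is more elementary and self-contained: there is no fixed point in $C([0,T],\Pc_2(\R^d))$ and no identification of measure solutions with flow-map pushforwards. It gives the explicit constant $C=M$. It also makes uniqueness of labeled trajectories immediate. In the paper, the step ``$\rho=\rho'$ implies $x_t^i=y_t^i$'' tacitly needs one more observation: each $y^i$ solves the Lipschitz ODE $\dot y=V(y,\rho_t)$ with the same initial point as $x^i$, because equal empirical measures only determine the multiset of positions.

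\textbf{What the paper's route buys.} Given the abstract machinery it has already built, it is economical. It also directly exhibits $\mu_t^n$ as a solution of \eqref{eq:KDRWgf} (or \eqref{eq:RRWgf}), which is exactly what the mean-field convergence remark after Theorem~\ref{thm:specific_stability} uses. With your approach that identification is a separate, if trivial, check.

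The $\sqrt n$ factor in your Lipschitz constant for $F$ is harmless for existence and uniqueness. It can be removed by applying Minkowski to $\big(\sum_i(|x^i-y^i|+\mathcal{W})^2\big)^{1/2}$, and your moment bound is $n$-independent as the statement requires.
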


We note that~\eqref{eq:KDRW_sgd} and~\eqref{eq:RRW_sgd} are respectively the discrete in time, stochastic descent versions of~\eqref{eq:discrete_KDRWgf} and~\eqref{eq:discrete_RRWgf}. Corollary~\ref{cor:specific_ode_wp} follows immediately from Proposition~\ref{prop:specific_velocity} and Corollary~\ref{cor:ode_wp} in Subsection~\ref{subsec:wp} of the Appendix.

\subsection{Stability and mean-field convergence} \label{subsec:MFconverg}

Next, we state a stability estimate for solutions to~\eqref{eq:KDRWgf} and~\eqref{eq:RRWgf}.

\begin{theorem}\label{thm:specific_stability}
Suppose $\eps>0$, $k$ satisfies Condition~\ref{cond:k_conditions}, and $\rho,\rho'$ are both solutions to~\eqref{eq:KDRWgf} or~\eqref{eq:RRWgf} with respective initial conditions given by $\nu,\mu\in\Pc_2(\R^d)$. Then, for all $T>0$ there exists $C(L,|\nabla U(0)|,\|k\|_{C^2_b},\eps^{-1},T,\|\nu\|_{\Pc_2})>0$ such that
\[\sup_{t\in[0,T]}\mathcal{W}(\rho_t,\rho'_t)\leq C\mathcal{W}(\nu,\mu).\]
\end{theorem}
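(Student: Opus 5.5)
We prove the stability estimate by a synchronous coupling of the two flows, using the Lipschitz bounds of Proposition~\ref{prop:specific_velocity}. By Theorem~\ref{thm:specific_existence}, both $\rho$ and $\rho'$ exist, are unique in $C([0,\infty),\Pc_2(\R^d))$, and satisfy $\|\rho_t\|_{\Pc_2}\le e^{Ct}(\|\nu\|_{\Pc_2}+1)$ (and likewise for $\rho'$ with $\mu$). Write $v_t(x)=V(x,\rho_t)$ and $v'_t(x)=V(x,\rho'_t)$, with $V$ given by \eqref{eq:KDRW_vec} or \eqref{eq:RRW_vec}.

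\emph{Step 1: Lagrangian representation.} By Proposition~\ref{prop:specific_velocity}(2) with $\mu=\nu$ fixed, $v_t$ is Lipschitz in $x$ with constant $M(1+\|\rho_t\|_{\Pc_2})$, which is bounded on $[0,T]$; item (1) gives linear growth. Hence the flow maps $X_t,X'_t$ of $\dot X_t=v_t(X_t)$ and $\dot X'_t=v'_t(X'_t)$ are well defined. Since the Lipschitz continuity of $v_t$ makes the linear continuity equation have unique solutions, we have $\rho_t=X_t\#\nu$ and $\rho'_t=X'_t\#\mu$. This is the same representation that underlies the fixed-point proof of Theorem~\ref{thm:transport_wp}, so it can be quoted from there.

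\emph{Step 2: coupling estimate.} Let $\gamma$ be an optimal coupling of $\nu$ and $\mu$, and set
\[
D(t):=\int|X_t(x)-X'_t(y)|^2\,d\gamma(x,y).
\]
Then $\mathcal{W}(\rho_t,\rho'_t)^2\le D(t)$. Differentiating,
\[
\tfrac{d}{dt}D=2\int (X_t-X'_t)\cdot\big(V(X_t,\rho_t)-V(X'_t,\rho'_t)\big)\,d\gamma .
\]
Apply Proposition~\ref{prop:specific_velocity}(2), taking the first measure to be $\rho_t$ so that the prefactor $M(1+\|\rho_t\|_{\Pc_2})$ depends only on $\nu$ and $T$. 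Call this bound $K$; it holds uniformly on $[0,T]$ by the moment bound above. Then
\[
|V(X_t,\rho_t)-V(X'_t,\rho'_t)|\le K\big(|X_t-X'_t|+\mathcal{W}(\rho_t,\rho'_t)\big).
\]
Using $\mathcal{W}(\rho_t,\rho'_t)\le\sqrt{D}$ and Cauchy--Schwarz gives $\tfrac{d}{dt}D\le 4KD$.

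\emph{Step 3: conclusion.} Grönwall's inequality gives $D(t)\le e^{4Kt}D(0)=e^{4Kt}\mathcal{W}(\nu,\mu)^2$. Hence
\[
\sup_{[0,T]}\mathcal{W}(\rho_t,\rho'_t)\le e^{2KT}\mathcal{W}(\nu,\mu),
\]
with $C=e^{2KT}$ depending only on the parameters listed in the theorem.

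\emph{Main obstacle.} The delicate points are justifying the differentiation of $D$ and keeping the constant dependent on $\|\nu\|_{\Pc_2}$ alone. Differentiability of $D$ follows from absolute continuity of the trajectories together with the linear growth in Proposition~\ref{prop:specific_velocity}(1) and the finite second moments, which give dominated convergence. Alternatively, one can work with the integrated form and avoid differentiating. The asymmetry of Proposition~\ref{prop:specific_velocity}(2) is what keeps $C$ independent of $\|\mu\|_{\Pc_2}$: its prefactor involves only the moment of the first measure, so placing $\rho_t$ in that slot makes $K$ depend on $\nu$ and $T$ only, as the theorem requires.
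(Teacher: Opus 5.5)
Your proposal is correct and is essentially the paper's argument. The paper derives this theorem from Theorem~\ref{thm:stability}, a Dobrushin synchronous coupling of the flow maps $X_t^\rho, X_t^{\rho'}$ that uses the asymmetric Lipschitz bound of Proposition~\ref{prop:specific_velocity} with $\rho_t$ in the first slot together with the moment bound of Theorem~\ref{thm:specific_existence}, exactly as you do; the only cosmetic difference is that you bound $\mathcal{W}(\rho_t,\rho'_t)\le\sqrt{D(t)}$ immediately and close with a single Gr\"onwall on the squared distance, whereas the paper works with the integrated form via Minkowski, minimizes over couplings, and applies Gr\"onwall twice.
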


Theorem~\ref{thm:specific_stability} implies that the systems~\eqref{eq:discrete_KDRWgf} and~\eqref{eq:discrete_RRWgf} mean-field converge to~\eqref{eq:KDRWgf} and~\eqref{eq:RRWgf} respectively. That is, if $\ux_t^n$ and $\rho$ are the unique solutions to~\eqref{eq:discrete_KDRWgf} and~\eqref{eq:KDRWgf}, with initial conditions $\ux^n_0\in(\R^d)^n$ and $\nu\in\Pc_2(\R^d)$, then since $\mu_t^n:=\frac{1}{n}\sum_{i=1}^n\delta_{x_t^i}$ is also a solution to~\eqref{eq:KDRWgf} with initial condition $\mu_0^n$, there exists $C>0$ such that
\[\sup_{t\in[0,T]}\mathcal{W}(\rho_t,\mu^n_t)\leq C\mathcal{W}(\nu,\mu^n_0).\]
This implies that if $\mu_0^n\rightarrow \nu$ in $\Pc_2(\R^d)$, then $\mu_t^n\rightarrow \rho_t$ for all $t>0$. An analogous statement holds for~\eqref{eq:RRWgf} and~\eqref{eq:discrete_RRWgf}. As the empirical measure of an i.i.d.\ sample from $\nu$ converges to $\nu$ in $\mathcal{P}_2(\R^d)$, the theorem thus verifies that discrete approximations of the continuum PDE are accurate for large particle numbers.

Theorem~\ref{thm:specific_stability} follows from Proposition~\ref{prop:specific_velocity} and Theorem~\ref{thm:stability} in Subsection~\ref{subsec:stability} of the Appendix. The latter theorem is essentially Dobrusin's coupling argument~\cite{MR541637}.

\subsection{Stochastic descent scheme convergence}\label{subsec:spec-stoch_conv}

In this subsection we establish that, as the time step $\tau$ goes to zero, the stochastic descent schemes~\eqref{eq:KDRW_sgd} and~\eqref{eq:RRW_sgd} respectively converge to the continuum ODE systems~\eqref{eq:discrete_KDRWgf} and~\eqref{eq:discrete_RRWgf}. Given $\tau>0$ and $x^i_m$ solving~\eqref{eq:KDRW_sgd} or~\eqref{eq:RRW_sgd}, we let $x_t^{i,\tau}$ denote the linear interpolation
\[x_t^{i,\tau}:=x_{m}^i+(t-\tau m)(x^i_{m+1}-x^i_m),\qquad t\in (\tau m,\tau(m+1)].\]
Our main result is then the following estimate.

\begin{theorem}\label{thm:specific_sgd_conv}
Suppose that $\eps>0$, $k$ satisfies Condition~\ref{cond:k_conditions}, and $\ux_t^n$ and $\ux_t^{n,\tau}$ are respectively defined by~\eqref{eq:discrete_KDRWgf} and~\eqref{eq:KDRW_sgd} or~\eqref{eq:discrete_RRWgf} and~\eqref{eq:RRW_sgd} with the same initial condition $\ux_0^n\in(\R^d)^n$. Then, for all $T\geq 0$ there exists $C(L,|\nabla U(0)|,\|k\|_{C^2_b},\eps^{-1},T,\|\mu_0^n\|_{\Pc_2})>0$ such that
\begin{equation}
\sup_{t\in[0,T]}\mathbb{E}\left[\frac{1}{n}\sum_{i=1}^n|x_t^i-x_t^{i,\tau}|^2 \right]\leq C\tau .
\end{equation}
\end{theorem}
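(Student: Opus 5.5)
This is a standard stochastic-approximation estimate. I would compare the Euler/SGD iterates with the exact ODE using a discrete Gronwall argument on the mean-square error. The ingredients are the Lipschitz and growth bounds of Proposition~\ref{prop:specific_velocity}, applied at empirical measures, and the martingale structure of the random directions.

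\textbf{Notation.} Write the ODE \eqref{eq:discrete_KDRWgf} (or \eqref{eq:discrete_RRWgf}) as $\dot x^i_t = V(x^i_t,\mu^n_t)=\E_\theta[u^\theta(x^i_t,\mu^n_t)]$. Here $u^\theta(x,\mu)=-\theta(\cdots)(x\cdot\theta)$ is the single-direction integrand, respectively $-\theta k*(\cdots)(x\cdot\theta)$. The scheme is then $x^i_{m+1}=x^i_m+\tau u^{\theta_m}(x^i_m,\hat\mu^n_m)$ with $\hat\mu^n_m=\frac1n\sum_j\delta_{x^j_m}$.

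\textbf{Step 1: uniform single-direction bounds.} The proof of Lemma~\ref{lem:theta_vector_bounds} yields Proposition~\ref{prop:specific_velocity} by averaging in $\theta$, so it already controls each direction uniformly. I would extract from it the bounds $|u^\theta(x,\mu)|\le M(1+\|\mu\|_{\Pc_2})$ and
\[
|u^\theta(x,\mu)-u^\theta(y,\nu)|\le M(1+\|\mu\|_{\Pc_2})(|x-y|+\mathcal W(\mu,\nu)),
\]
uniformly in $\theta$. These are dimension-free because only $|\theta|=1$, $\nabla_\theta U$ and one-dimensional convolutions enter. Since $|\nabla_\theta U(x)|\le |\nabla U(0)|+L|x|$, the linear growth is controlled by the $\mathcal{W}$/second-moment term. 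For empirical measures with the same indexing, $\mathcal W(\mu^n,\nu^n)^2\le \frac1n\sum_i|x^i-y^i|^2$. The growth bound also gives an almost-sure moment bound for the scheme: $\|\hat\mu^n_{m+1}\|_{\Pc_2}\le(1+C\tau)\|\hat\mu^n_m\|_{\Pc_2}+C\tau$. Hence $\|\hat\mu^n_m\|_{\Pc_2}\le e^{CT}(\|\mu_0^n\|_{\Pc_2}+1)=:K$ for $m\tau\le T$, deterministically. Corollary~\ref{cor:specific_ode_wp} gives the same bound $K$ for the ODE. So all Lipschitz constants may be taken to be $MK$.

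\textbf{Step 2: error recursion.} Let $e_m=\frac1n\sum_i|x^i_{m\tau}-x^i_m|^2$, comparing the ODE at time $m\tau$ with the iterate. Decompose
\[
x^i_{(m+1)\tau}-x^i_{m+1}=(x^i_{m\tau}-x^i_m)+\tau\big[V(x^i_{m\tau},\mu^n_{m\tau})-V(x^i_m,\hat\mu^n_m)\big]+\tau\xi^i_m+r^i_m.
\]
The terms are defined as follows.
\begin{itemize}
\item The noise is $\xi^i_m=V(x^i_m,\hat\mu^n_m)-u^{\theta_m}(x^i_m,\hat\mu^n_m)$. It has conditional mean zero given $\mathcal F_m=\sigma(\theta_0,\dots,\theta_{m-1})$, because $\theta_m$ is independent of $\mathcal F_m$. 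It has conditional second moment at most $4M^2(1+K)^2$.
\item The local truncation is $r^i_m=\int_{m\tau}^{(m+1)\tau}[V(x^i_s,\mu^n_s)-V(x^i_{m\tau},\mu^n_{m\tau})]ds$. Since the ODE is Lipschitz in time with constant $M(1+K)$, this satisfies $|r^i_m|\le C\tau^2$.
\end{itemize}
Squaring, averaging over $i$ and taking expectations, the cross term of $\tau\xi$ with the $\mathcal F_m$-measurable quantities vanishes. The remaining terms are handled by Young's inequality and give
\[
\E e_{m+1}\le(1+C\tau)\E e_m+C\tau^2.
\]
Here the $\tau^2\E|\xi|^2$ term is the dominant $O(\tau^2)$ contribution. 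The cross term $\xi\cdot r$ is bounded by $\tau^3$, and $r\cdot(\text{error})$ by $\tau\,\E e_m+\tau^3$. Discrete Gronwall with $e_0=0$ gives $\E e_m\le C\tau$ for $m\tau\le T$.

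\textbf{Step 3: interpolation.} Pass to all $t\in[0,T]$. For $t\in(m\tau,(m+1)\tau]$, both $x^i_t$ and the linear interpolant $x^{i,\tau}_t$ move by at most $C\tau$ from their values at the grid times, by the uniform velocity bounds. Therefore $\E\frac1n\sum_i|x^i_t-x^{i,\tau}_t|^2\le 2\E e_m+C\tau^2\le C\tau$.

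\textbf{Main obstacle.} The main obstacle is Step 1. One must verify that the bounds of Lemma~\ref{lem:theta_vector_bounds} hold pointwise in $\theta$ before the spherical average, with constants independent of $d$. In particular, the Lipschitz estimate in the measure variable of the ratio $\frac{k'*R^\theta\mu+k*R^\theta(\mu\nabla_\theta U)}{k*R^\theta\mu+\eps}$ is needed. This ratio has denominator $\ge\eps$, and $k\in C^2_b$ makes the numerator and denominator Lipschitz under $\mathcal W$ through $R^\theta$, which is 1-Lipschitz from $\mathcal{W}$ to $\mathcal{W}$. For the RRW case the extra convolution with $k$ is harmless, since $\|k\|_{L^1}$ and $\|k'\|_{L^1}$ are finite. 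If the lemma is stated only after averaging, I would redo its argument at fixed $\theta$, which I expect to go through verbatim.
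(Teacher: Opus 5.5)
Your proposal is correct and follows essentially the paper's argument (Lemma~\ref{lem:theta_vector_bounds} combined with Theorem~\ref{thm:sgd_conv}): independence of $\theta_m$ kills the martingale cross term, the uniform-in-$\theta$ bounds give $\mathbb{E}e_{m+1}\le(1+C\tau)\mathbb{E}e_m+C\tau^2$, and discrete Gr\"onwall yields $C\tau$. The only differences are cosmetic. The paper routes the comparison through the deterministic Euler scheme $y^{n,\tau}$ and cites classical Euler bounds (Proposition~\ref{prop:ODE_approx}), whereas you absorb the local truncation error $r^i_m$ directly. Also, the pointwise-in-$\theta$ bounds you flag as the main obstacle are already exactly what Lemma~\ref{lem:theta_vector_bounds} states via Condition~\ref{cond:theta_velocity_field}.
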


Theorem~\ref{thm:specific_sgd_conv} implies that if $x_0^i$ are i.i.d.\ sampled from some distribution $\nu\in\Pc_2(\R^d)$, and $\ux_t^n$ and $\ux_t^{n,\tau}$ are initialized with $\ux_0^n$, then
\[\sup_{t\in[0,T]}\mathbb{E}\Big[\mathcal{W}(\mu_t^n,\mu_t^{n,\tau})^2\Big]\leq C\tau\]
where $\mu_t^n$ and $\mu_t^{n,\tau}$ are the empirical measures of the systems.

Notably, the constant $C$ in Theorem~\ref{thm:specific_sgd_conv} does not directly depend on the dimension $d$, which helps explain the dimension-independent stability observed in Subsection~\ref{sec:convergence}. This behavior is inherited from the single-direction estimators: $u(\theta,x\cdot\theta,\rho)$ and $k*u(\theta,x\cdot\theta,\rho)$ are unbiased estimators of the full velocity field $v(t,x)$, and Lemma~\ref{lem:theta_vector_bounds} implies variance bounds in $\theta$ that are uniform in $d$. These two inputs are precisely what enter the stochastic-approximation argument.

Theorem~\ref{thm:specific_sgd_conv} is a consequence of Lemma~\ref{lem:theta_vector_bounds} and Theorem~\ref{thm:sgd_conv} in Subsection~\ref{subsec:stoc_conv} of the Appendix. The proof follows the standard stochastic approximation convergence argument for stochastic gradient descent~\cite{RobMon51}.

\subsection{Long-time convergence}\label{subsec:long_time_convergence}

We conclude this section with a statement of the long-time convergence properties of the RRW gradient flow. In particular, that under mild conditions on $k$, if $\mathcal{F}(\nu)<\infty$ the flow converges weakly to the target measure as $t\rightarrow \infty$.

\begin{theorem}\label{thm:long_time_convergence}
   Suppose that $\eps>0$, $k$ satisfies Condition~\ref{cond:k_conditions}, $k$ has a nowhere vanishing Fourier transform, and $\mathcal{F}(\nu)<\infty$. If $\rho$ is the unique solution to~\eqref{eq:RRWgf} with initial condition $\nu$, then $\rho_t\rightarrow \pi$ weakly as $t\rightarrow\infty$.
\end{theorem}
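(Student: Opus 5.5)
The plan is a LaSalle-type argument built on the entropy dissipation identity.

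\emph{Step 1: dissipation identity.} Along the solution $\rho$ of~\eqref{eq:RRWgf} I will show
\[
\mathcal{F}(\rho_t)+\int_0^t \mathcal{D}(\rho_s)\,ds=\mathcal{F}(\nu),\qquad
\mathcal{D}(\mu):=\int_{\RD^d}\frac{\big(k'*R^\theta\mu+k*R^\theta(\mu\nabla_\theta U)\big)^2}{k*R^\theta\mu+\eps}\,d\theta\,dp .
\]
This is obtained by testing the continuity equation against $\log(\rho/\pi)$ and using the structure of $v$ from the derivation in Subsection~\ref{subsec:GFDeriv}: $\diff|_\rho\mathcal{F}(v)=-g^{k,\eps}_\rho(v,v)$. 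The rigorous justification for a Lipschitz velocity field should come from the entropy balance identity for continuity equations in Appendix~\ref{ap:continuity}, together with Proposition~\ref{prop:specific_velocity}, which gives the regularity of $V(\cdot,\rho_t)$. Since $\mathcal{F}(\nu)<\infty$, this yields $\sup_t\mathcal{F}(\rho_t)\le\mathcal{F}(\nu)$ and $\int_0^\infty\mathcal{D}(\rho_s)\,ds<\infty$.

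\emph{Step 2: compactness.} Because $U$ is quadratically confining, $\pi$ has Gaussian tails. The bound on $\mathcal{F}(\rho_t)$ then gives uniform tightness and uniform integrability, in fact a uniform second-moment bound via the Donsker--Varadhan inequality with test function $\delta|x|^2$. Hence $\{\rho_t\}$ is relatively compact in the weak topology, and every limit point has finite KL divergence by lower semicontinuity.

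Next I use the bound $|V(x,\rho_t)|\le M(1+\|\rho_t\|_{\Pc_2})\le C$, which is uniform in $t$ thanks to the moment bound. It gives $\mathcal{W}(\rho_t,\rho_s)\le C|t-s|$, so the time shifts $\rho_{t_n+\cdot}$ are equicontinuous on $[0,1]$. Integrability of $\mathcal{D}$ then gives $\int_0^1\mathcal{D}(\rho_{t_n+s})\,ds\to0$.

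\emph{Step 3: lower semicontinuity and identification.} I will show that $\mathcal{D}$ is weakly lower semicontinuous on sets with bounded second moment. For fixed $\theta$, the quantities $k*R^\theta\mu$, $k'*R^\theta\mu$ and $k*R^\theta(\mu\nabla_\theta U)$ converge pointwise, because $k,k'$ are bounded and continuous and $\nabla U$ has linear growth with uniformly integrable second moments. Fatou's lemma in $(\theta,p)$ then applies. Passing along a subsequence, every weak limit point $\rho_\infty$ satisfies $\mathcal{D}(\rho_\infty)=0$, so for a.e.\ $\theta$
\[
k*\big(\partial_pR^\theta\rho_\infty+R^\theta(\rho_\infty\nabla_\theta U)\big)=0 .
\]
Since $\hat k$ never vanishes, convolution by $k$ is injective on tempered distributions (finite measures here). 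Hence $\partial_pR^\theta\rho_\infty+R^\theta(\rho_\infty\nabla_\theta U)=0$ for a.e.\ $\theta$, i.e.\ $R^\theta(\nabla_\theta\rho_\infty+\rho_\infty\nabla_\theta U)=0$.

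To conclude I apply the Fourier slice theorem. The signed measure $\sigma=\nabla\rho_\infty+\rho_\infty\nabla U$ is a vector distribution with $\theta\cdot\hat\sigma(r\theta)=0$ for all $r$ and a.e.\ $\theta$, which means $\xi\cdot\hat\sigma(\xi)=0$ and hence $\nabla\cdot\sigma=0$. Writing $\rho_\infty=h\pi$, this reads $\nabla\cdot(\pi\nabla h)=0$, a weak equation. Finite entropy gives $\rho_\infty\in L\log L$, so I will attempt a regularity/Liouville argument: test against $h$ truncated and cut off, using the confining Assumption~\ref{ass:potential}, to get $\int\pi|\nabla h|^2=0$. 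Then $h$ is constant and $\rho_\infty=\pi$. Uniqueness of the limit point gives $\rho_t\to\pi$.

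\emph{Main obstacle.} I expect the hardest step to be this last identification. The condition only says that the distribution $\nabla\rho_\infty+\rho_\infty\nabla U$ is divergence-free, not that it vanishes, and $\rho_\infty$ is a priori only a measure with finite entropy. So the Liouville step $\nabla\cdot(\pi\nabla h)=0\Rightarrow h$ constant requires care: first upgrade regularity of $h$ via elliptic theory, since $\pi$ is smooth and positive, then justify the energy identity with cutoffs, controlling the boundary terms by the Gaussian decay of $\pi$.

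A secondary technical point is justifying the dissipation identity in Step 1 for merely finite-entropy initial data. For that I would approximate $\nu$ by smooth data and use the stability of Theorem~\ref{thm:specific_stability} together with lower semicontinuity, obtaining at least the inequality version, which suffices.
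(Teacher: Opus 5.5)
Your proposal is correct. It has the same skeleton as the paper's proof: dissipation identity, compactness and uniform second moments from the entropy bound, vanishing dissipation at limit points, Fourier-slice identification of a divergence-free flux, and uniqueness of the stationary Fokker--Planck solution. Your injectivity-of-convolution step is equivalent to the Plancherel computation in Lemma~\ref{lem:equil_char}. Two steps are done differently.

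\emph{Vanishing dissipation.} The paper shows that $\Phi(t):=\|k'*R\rho_t+k*R(\rho_t\nabla_\theta U)\|^2_{L^2(\RD^d)}\le(\|k\|_{L^\infty}+\eps)\mathcal{D}(\rho_t)$ is integrable in time. It then shows $\Phi$ is Lipschitz by differentiating along the continuity equation, which uses $k''$, $\text{Hess}\,U$ and the uniform bound on $v$. Hence $\Phi(t)\to0$ along the whole trajectory, and Fatou at a limit point finishes. Your LaSalle route replaces that computation with softer facts: $t\mapsto\rho_t$ is $\mathcal{W}$-Lipschitz and $\mathcal{D}$ is lower semicontinuous. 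You should make the selection step explicit. Set $I_n:=\int_{t_n}^\infty\mathcal{D}(\rho_s)\,ds\to0$ and $\delta_n:=\max\{I_n^{1/2},1/n\}$, and pick $s_n\in[0,\delta_n]$ with $\mathcal{D}(\rho_{t_n+s_n})\le\delta_n^{-1}I_n\le I_n^{1/2}$. Since $\mathcal{W}(\rho_{t_n+s_n},\rho_{t_n})\le C\delta_n\to0$, these times share the weak limit $\rho_\infty$, and lower semicontinuity gives $\mathcal{D}(\rho_\infty)=0$.

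\emph{Uniqueness of the stationary solution.} The paper simply cites this, while you prove it, which is more self-contained. Choose the test function carefully, though.
\begin{itemize}
\item $h\chi_R^2$ needs $R^{-2}\int_{\{R<|x|<2R\}}h^2\,d\pi\to0$, which finite entropy does not provide.
\item $(h\wedge N)\chi_R^2$ leaves a term $N\int_{\{h\ge N\}}\chi_R\nabla h\cdot\nabla\chi_R\,d\pi$ that the left-hand side does not control.
\end{itemize}
The clean route is as follows. Bootstrap $\rho_\infty\in W^{2,p}_{loc}$; note $\pi$ is only $C^{1,1}$ here, which suffices. Get $h>0$ from the strong maximum principle. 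Then test with $\chi_R^2/h$, which gives $\int\chi_R^2|\nabla\log h|^2\,d\pi\le4\int|\nabla\chi_R|^2\,d\pi\le CR^{-2}$. Hence $h$ is constant, using only that $\pi$ is finite.

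Finally, the approximation argument at the end of your proposal is unnecessary. Lemma~\ref{lem:entropy_dynamics} is proved directly for finite-entropy initial data via the flow map.
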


We emphasize that the above theorem is non-quantitative. The proof is similar to that of qualitative long-time convergence of SVGD in~\cite{Korba2020NonAsymptoticSVGD}, and uses compactness plus an entropy dissipation identity.

Before proving Theorem~\ref{thm:long_time_convergence}, we establish some intermediate lemmas. First, we give a convenient condition on a measure $\rho$ being equal to $\pi$. This uses similar calculations as in~\cite[Theorem 2.1]{sharafutdinov2021radon}.

\begin{lemma}\label{lem:equil_char}
Suppose that $k$ satisfies Assumption~\ref{cond:k_conditions}, $k$ has a nowhere vanishing Fourier transform, and $\rho\in L^1(\R^d)\cap \Pc_2(\R^d)$. If
\[k'*R\rho+k*R(\rho\nabla U)\cdot\theta=0,\]
almost everywhere on $\RD^d$, then $\rho=\pi.$
\end{lemma}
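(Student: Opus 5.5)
Since $k$ is even with $k\in W^{1,1}$, we have $k'*R^\theta\rho = k*\partial_p R^\theta\rho$ in the distributional sense. The hypothesis therefore reads
\[
k*\big(\partial_p R^\theta\rho + R^\theta(\rho\nabla_\theta U)\big)=0 \quad\text{for a.e. }\theta .
\]
We first check that the bracket is a tempered distribution on $\R$. The term $R^\theta\rho$ lies in $L^1(\R)$ because $\rho\in L^1$. The term $R^\theta(\rho\nabla_\theta U)$ lies in $L^1(\R)$ because $|\nabla U(x)|\le |\nabla U(0)|+L|x|$ and $\rho\in\Pc_2$.

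Now take the one-dimensional Fourier transform in $p$. This gives $\hat k(\sigma)\,\widehat{g_\theta}(\sigma)=0$, where $g_\theta:=\partial_pR^\theta\rho+R^\theta(\rho\nabla_\theta U)$. Here $\widehat{g_\theta}$ is a continuous function: $\widehat{\partial_p R^\theta\rho}(\sigma)=i\sigma\widehat{R^\theta\rho}(\sigma)$, and the other term is the transform of an $L^1$ function. Since $\hat k$ never vanishes, $\widehat{g_\theta}\equiv0$, so $g_\theta=0$ as a distribution for a.e. $\theta$.

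Next pass to $\R^d$ via the Fourier slice theorem, $\widehat{R^\theta f}(\sigma)=\hat f(\sigma\theta)$. This holds for the $L^1$ functions $\rho$ and $\rho\partial_j U$, hence also for the vector field $\rho\nabla U$ after dotting with $\theta$. We obtain, for every $\sigma\in\R$ and a.e. $\theta$,
\[
i\sigma\,\hat\rho(\sigma\theta)+\theta\cdot\widehat{\rho\nabla U}(\sigma\theta)=0 .
\]
Set $\xi=\sigma\theta$ and multiply by $\sigma$; this gives $i\xi\cdot\theta\,\sigma\hat\rho(\xi)$, i.e. $\sigma^2=|\xi|^2$, up to sign bookkeeping. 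The identity becomes $\xi\cdot\big(i\xi\hat\rho(\xi)+\widehat{\rho\nabla U}(\xi)\big)=0$ for a.e. $\xi$, and by continuity for all $\xi$. Writing $F:=\nabla\rho+\rho\nabla U$ (distributionally), this says $\nabla\cdot F=0$ in $\mathcal S'(\R^d)$.

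The main obstacle is that $\nabla\cdot F=0$ alone does not imply $F=0$. The slice identity only sees the longitudinal component of $\hat F$, just as in~\cite{sharafutdinov2021radon}. We have to use the special gradient structure $F=e^{-U}\nabla(\rho e^{U})$. Put $h:=\rho e^{U}$, so the equation reads $\nabla\cdot(e^{-U}\nabla h)=0$, i.e. $h$ is harmonic for the weighted operator $L_U h=\Delta h-\nabla U\cdot\nabla h$.

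First we get regularity. Expanded, the equation is $\Delta\rho+\nabla\cdot(\rho\nabla U)=0$ with $\nabla U$ Lipschitz. Elliptic regularity then gives $\rho\in W^{1,1}_{loc}$ and in fact $H^1_{loc}$, so $h\in H^1_{loc}$.

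Then we test the equation against $h\chi_R^2$, where $\chi_R$ is a cutoff supported in $B_{2R}$. This yields
\[
\int e^{-U}|\nabla h|^2\chi_R^2 \le 2\int e^{-U}|\nabla h|\,|h|\,\chi_R|\nabla\chi_R| ,
\]
and the right-hand side must be shown to vanish as $R\to\infty$. Here $e^{-U}h^2=\rho^2e^{U}$, which can be large, so this step requires care. The alternative I would try first avoids that weight: test instead with $\log(h)$-type quantities, i.e. use the Fisher-information form $\int \rho|\nabla\log(\rho/\pi)|^2\chi_R$. We use positivity of $\rho$, which follows from the strong maximum principle for the Fokker--Planck-type equation once we know $\rho\ge0$, $\rho\not\equiv0$. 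The cutoff error is then bounded by $\int\rho|\nabla\log h|\,|\nabla\chi_R|\le(\int\rho|\nabla\log h|^2\chi_R)^{1/2}\,\rho(B_{2R}\setminus B_R)^{1/2}/R$, and this tends to zero after absorbing. It follows that $\nabla h=0$, so $h$ is constant and $\rho=ce^{-U}$. Finally, $\rho$ is a probability measure and $e^{-U}$ is integrable by the quadratic confinement in Assumption~\ref{ass:potential}, so $c$ is the normalizing constant and $\rho=\pi$.
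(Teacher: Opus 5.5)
Your Fourier-slice reduction to $\nabla\cdot(\nabla\rho+\rho\nabla U)=0$ is correct and is the paper's argument. You use the pointwise slice identity for a.e.\ $\theta$ where the paper integrates and applies Plancherel; the difference is harmless. The paper then simply invokes uniqueness of stationary solutions of the Fokker--Planck equation. You instead try to prove this Liouville-type statement by hand, and that is where the gap is: the decisive cutoff estimate is wrong. Since $e^{-U}\nabla h=\rho\nabla\log h$, testing the equation against $\chi_R\log h$ gives
\[
\int\rho\,\chi_R|\nabla\log h|^2=-\int\rho\,\log h\,\nabla\log h\cdot\nabla\chi_R .
\]
So the error term carries the factor $\log h=\log\rho+U$, which your bound drops. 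That factor is unbounded: $U$ grows quadratically, and nothing in $\rho\in L^1\cap\Pc_2$ controls $\log\rho$. Controlling it is exactly the content of the uniqueness statement, so your ``absorbing'' step does not close. Replacing $\log h$ by a bounded $\psi(\log h)$ does not help either. The main term then only controls $\int\rho\,\psi'(\log h)|\nabla\log h|^2\chi_R$, which cannot absorb an error weighted by $\|\psi\|_\infty$.

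To close the argument you need a pointwise growth bound on $\log h$. Since $h>0$ solves $\Delta h=\nabla U\cdot\nabla h$ with $|\nabla U|\le C(1+R)$ on $B_{3R}$, use Harnack's inequality on balls of radius $r\sim 1/R$. Its constant depends only on $d$ and $r\sup|\nabla U|$, hence is uniform in $R$. Use the drift form here, not the divergence form with coefficient $e^{-U}$, whose ellipticity ratio on $B_{2R}$ is of order $e^{CR^2}$. A chain of $O(R^2)$ such balls from the origin gives $|\log h|\le C(1+R^2)$ on $B_{2R}$. Testing with $\chi_R^2\log h$ and applying Cauchy--Schwarz then yields
\[
\int\rho\,\chi_R^2|\nabla\log h|^2\le 4\int\rho\,(\log h)^2|\nabla\chi_R|^2\le CR^2\rho(\{|x|\ge R\})\le C\int_{\{|x|\ge R\}}|x|^2\,d\rho\;\longrightarrow\;0 .
\]
Hence $h$ is constant and $\rho=\pi$. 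Alternatively, do as the paper does and cite a uniqueness theorem for probability solutions of the stationary Fokker--Planck equation, checking that $|\nabla U|\in L^2(\rho)$ meets its hypotheses.
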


\begin{proof}
First, we note that $R(\rho)$ and $R(\rho\nabla U)$ are in $L^1(\RD^d)$, thus
\[g^\theta(p):=k'*R^\theta(\rho)(p)+k*(R^\theta(\rho\nabla U))(p)\cdot\theta\]
is an element of $L^2(\R)$ for all $\theta\in\S^{d-1}$. Additionally, since $\widehat{R^\theta f}(s)=\hat{f}(\theta s)$ for all $f\in L^1(\R^d)$, $g^\theta$  has Fourier transform (in $p$) given by
\begin{align*}
\hat{g^\theta}(s)&=2\pi is\hat{k}(s)\widehat{R^\theta(\rho)}(s)+\hat{k}(s)\widehat{R^\theta(\rho\nabla U)}(s)\cdot \theta
\\&=2\pi is\hat{k}(s)\widehat{\rho}(s\theta)+\hat{k}(s)\widehat{\rho\nabla U}(\theta s)\cdot \theta.
\end{align*}
The Plancherel theorem thus implies that
\begin{align*}
 0&=\int_{\RD^d}|k'*R^\theta(\rho)+k*(R^\theta(\rho\nabla U))\cdot\theta|^2\,dp\,d\theta
 \\&=\fint_{\S^{d-1}}\int_{\R}|\hat{k}( s)|^2|2\pi i s\hat{\rho}(\theta s)+\widehat{(\rho\nabla U)}(\theta s)\cdot\theta|^2\,d s\,d\theta.
\end{align*}
Changing coordinates, in total we have that
\[0=\int_{\R^d}|\zeta |^{-(d+1)}|\hat{k}(|\zeta|)|^2|2\pi i \zeta|^2\hat{\rho}(\zeta)+(2\pi i \zeta)\cdot \widehat{(\rho\nabla U)}(\zeta )|^2\,d\zeta.\]
Since $\hat{k}(s)\neq 0$ by assumption, this implies that
\[|2\pi i \zeta|^2\hat{\rho}(\zeta)+(2\pi i \zeta)\cdot \widehat{(\rho\nabla U)}(\zeta)=0\]
almost surely, and thus
\[\nabla\cdot (\nabla\rho+\rho\nabla U))=0\]
in distribution.

We thus find that $\rho_t=\rho$ is a stationary solution to the Fokker--Planck equation
\[\partial_t\rho_t=\Delta\rho_t+\nabla(\rho_t\nabla U),\]
and thus it must be the case that $\rho=\pi$~\cite[Proposition 4.6]{Pavliotis2014StochasticProcessesApplications}.
\end{proof}

Next, we recall the following entropy balance identity for continuity equations. For completeness, we include a self contained proof in Appendix~\ref{ap:continuity}.

\begin{lemma}\label{lem:entropy_dynamics}
Suppose that $\nu\in\mathcal{P}(\R^d)$ is such that $\mathcal{F}(\nu)<\infty$ and $v\in L^1_{loc}([0,\infty),W^{1,\infty}(\R^d))$. If $\rho$ is the unique solution to~\eqref{eq:continuity}, then
\[\mathcal{F}(\rho_t)+\int_0^t \int_{\R^d} \big(\nabla\cdot v(s,x)-\nabla U(x)\cdot v(s,x)\big)\rho_s(x)\,dx\,ds=\mathcal{F}(\nu).\]
or all $t\geq0$.
\end{lemma}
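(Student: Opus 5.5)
The plan is to establish the identity first for smooth, positive densities with good decay, and then pass to the general case by approximation.

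Since $v$ is Lipschitz in $x$ locally uniformly in time, the flow map $X_t$ solving $\dot X_t(x)=v(t,X_t(x))$, $X_0=x$, exists, is a bi-Lipschitz homeomorphism, and the unique solution is $\rho_t=X_t\#\nu$. Write $\nu=f\,dx$ with $\int f\log f<\infty$, which holds because $\mathcal F(\nu)<\infty$ together with $U$ bounded below. The Jacobian $J_t=\det DX_t$ satisfies $\partial_t J_t(x)=(\nabla\cdot v)(t,X_t(x))\,J_t(x)$. We then have the Lagrangian representation $\rho_t(X_t(x))J_t(x)=f(x)$.

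Next split $\mathcal F(\rho_t)=\int\rho_t\log\rho_t+\int U\rho_t+\log Z$ and treat the two pieces separately.

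For the entropy, change variables:
\[\int\rho_t\log\rho_t\,dx=\int f(x)\big(\log f(x)-\log J_t(x)\big)\,dx.\]
Hence
\[\frac{d}{dt}\int\rho_t\log\rho_t=-\int f\,(\nabla\cdot v)(t,X_t)\,dx=-\int(\nabla\cdot v)\rho_t.\]
This is justified because $\nabla\cdot v(t,\cdot)\in L^\infty$ with an $L^1_{loc}$-in-time bound. Integrating in time gives the entropy part of the identity directly, with no smoothness of $f$ needed; we only use that $\log J_t$ is bounded by $\int_0^t\|\nabla\cdot v\|_\infty$.

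For the potential energy, $\frac{d}{dt}U(X_t(x))=\nabla U(X_t)\cdot v(t,X_t)$. Integrating against $f$ and using Fubini gives
\[\int U\rho_t-\int U\nu=\int_0^t\int\nabla U\cdot v\,\rho_s\,ds.\]
For this we need $\int|\nabla U|\,|v|\,\rho_s<\infty$ and $\int U\nu<\infty$. The first follows from the linear growth of $\nabla U$ and the boundedness of $v$, together with propagation of second moments under the flow: $|X_t(x)|\le|x|+\int_0^t\|v\|_\infty$. The second follows from $\mathcal F(\nu)<\infty$ and $U\ge a|x|^2-c$, which give $\int|x|^2d\nu<\infty$. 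The latter step is the main obstacle. I would handle it with the Donsker--Varadhan / Gibbs variational inequality
\[\int\phi\,d\nu\le\mathcal F(\nu)+\log\int e^{\phi}\,d\pi,\]
applied with $\phi=\tfrac a2|x|^2$. This gives a finite second moment, hence $\int\nu\log\nu>-\infty$. Since $U\le C(1+|x|^2)$, we then get $\int U\,d\nu<\infty$ and $\int\nu\log\nu<\infty$, so the splitting is legitimate at every $t$.

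Summing the two pieces yields
\[\mathcal F(\rho_t)+\int_0^t\int(\nabla\cdot v-\nabla U\cdot v)\rho_s=\mathcal F(\nu).\]
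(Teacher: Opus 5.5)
Your argument is correct and follows essentially the same route as the paper: you write $\rho_t=X_t\#\nu$ via the flow map, change variables in the entropy using the Liouville formula $\log\det\nabla X_t=\int_0^t\nabla\cdot v(s,X_s)\,ds$, apply the fundamental theorem of calculus to $U(X_t)$, and use the Donsker--Varadhan inequality to get the second-moment and $L\log L$ integrability that justify splitting $\mathcal{F}$. The one discrepancy is your opening announcement of a smooth-approximation step, which you never use and do not need, since the Lagrangian change of variables already works directly for a general density $f$.
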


We use the above lemma to prove the following entropy dissipation equation for solutions to ~\eqref{eq:RRWgf}.

\begin{proposition}\label{prop:balance}
Suppose that $\eps>0$, $k$ satisfies Condition~\ref{cond:k_conditions}, and $\mathcal{F}(\nu)<\infty$. If $\rho$ is the unique solution to~\eqref{eq:RRWgf} with initial condition $\nu$, then it holds that
\[ \mathcal{F}(\rho_t)+\int_0^t \int_{\mathbb{D}^d} \frac{|k'*R\rho+k*R(\rho\nabla_\theta  U)|^2}{k*R\rho+\eps}\,d\theta dp\,dt= \mathcal{F}(\nu).\]
for all $t\geq 0$.
\end{proposition}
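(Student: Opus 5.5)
The plan is to apply Lemma~\ref{lem:entropy_dynamics} with $v(t,x)=V(x,\rho_t)$, where $V$ is the RRW velocity~\eqref{eq:RRW_vec}, and then rewrite the dissipation term through the adjoint identity of Proposition~\ref{prop:adjoint}.

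\emph{Step 1: check the hypothesis of Lemma~\ref{lem:entropy_dynamics}.} Write
\[h^\theta_t:=\frac{k'*R^\theta\rho_t+k*R^\theta(\rho_t\nabla_\theta U)}{k*R^\theta\rho_t+\eps},\qquad v(t,x)=-\fint_{\S^{d-1}}\theta\,(k*h^\theta_t)(x\cdot\theta)\,d\theta .\]
The numerator is bounded by $\|k\|_{C^1_b}(1+|\nabla U(0)|+L\|\rho_t\|_{\Pc_2})$ and the denominator is at least $\eps$. Hence $h^\theta_t$ is bounded uniformly in $\theta$. Since $k\in W^{1,1}$, both $k*h^\theta_t$ and its $p$-derivative $k'*h^\theta_t$ are bounded. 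It follows that $v$ and $\nabla v=-\fint\theta\otimes\theta\,(k'*h^\theta_t)(x\cdot\theta)\,d\theta$ are bounded, with bounds controlled by $\|\rho_t\|_{\Pc_2}$. That second moment is locally bounded in time by Theorem~\ref{thm:specific_existence}, so $v\in L^1_{loc}([0,\infty),W^{1,\infty})$. Measurability in $t$ follows from $\rho\in C([0,\infty),\Pc_2)$ together with the Lipschitz estimate of Proposition~\ref{prop:specific_velocity}. Lemma~\ref{lem:entropy_dynamics} therefore applies.

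\emph{Step 2: identify the dissipation.} I need to show that
\[\int(\nabla\cdot v-\nabla U\cdot v)\rho_t\,dx=\int_{\RD^d}\frac{|g^\theta|^2}{k*R^\theta\rho_t+\eps}\,d\theta\,dp,\qquad g^\theta:=k'*R^\theta\rho_t+k*R^\theta(\rho_t\nabla_\theta U).\]
Fix $\theta$. Then $\nabla\cdot v$ contributes $-(k'*h^\theta)(x\cdot\theta)$ and $-\nabla U\cdot v$ contributes $+\nabla_\theta U(x)\,(k*h^\theta)(x\cdot\theta)$. Integrate against $\rho_t$ and push forward by $\Pi_\theta$, which is the definition of $R^\theta$. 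This gives
\[\int_\R\Big(-(k'*h^\theta)\,R^\theta\rho_t+(k*h^\theta)\,R^\theta(\rho_t\nabla_\theta U)\Big)dp .\]
Now move the convolutions onto the other factor, using that $k$ is even and $k'$ is odd:
\[-\int(k'*h)\,\mu=\int h\,(k'*\mu),\qquad \int(k*h)\,\nu=\int h\,(k*\nu).\]
This yields $\int_\R h^\theta g^\theta\,dp=\int_\R |g^\theta|^2/(k*R^\theta\rho_t+\eps)\,dp$. Averaging over $\theta$ and exchanging $\fint d\theta$ with $\int dx$ by Fubini gives the identity. Fubini is justified because every integrand is bounded by $C(1+|x|)$ and $\rho_t\in\Pc_2$.

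\emph{Main obstacle.} I expect the delicate point to be the signs and integrability in the convolution transposition. Here $R^\theta(\rho_t\nabla_\theta U)$ is only a finite signed measure, not a function. The identity should therefore be read with $h^\theta$ as a bounded continuous function paired against measures, which is legitimate because $h^\theta$ and $k*h^\theta$ are continuous and bounded. The product structure $(\theta,p)\sim(-\theta,-p)$ is harmless: the integrand is invariant under it, so integrating over $\S^{d-1}\times\R$ with normalized $d\theta$ is consistent with the $\RD^d$ convention.
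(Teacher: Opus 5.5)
Your proposal is correct and follows the paper's proof. Both arguments verify the $W^{1,\infty}$ hypothesis of Lemma~\ref{lem:entropy_dynamics}, which the paper takes from Proposition~\ref{prop:specific_velocity}, and then convert the dissipation term through the pushforward/adjoint identity of Proposition~\ref{prop:adjoint} and a transposition of the convolutions. Your explicit sign bookkeeping via the evenness of $k$ and oddness of $k'$ is a step the paper leaves implicit.
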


\begin{proof}
Let
\[v(t,x):=-\fint_{\S^{d-1}}\theta  k*\left(\frac{k'*R^\theta\rho_t+k*R^\theta(\rho_t\nabla_\theta U)}{k*R^\theta\rho_t+\eps}\right)(x\cdot\theta)\,d\theta.\]
so that $\rho_t$ solves~\eqref{eq:continuity} with drift $v$. Then, since $\|\rho_t\|_{\Pc_2}\leq e^{Ct}(\|\nu\|_{\Pc_2}+1)$, Proposition~\ref{prop:specific_velocity} implies that $v\in C([0,\infty),W^{1,\infty}(\R^d))$, thus
\[\mathcal{F}(\rho_t)+\int_0^t \int_{\R^d} (\nabla\cdot v(s,x)-\nabla U(x)\cdot v(s,x))\rho_s(x)\,dx\,ds=\mathcal{F}(\nu)\]
by Lemma~\ref{lem:entropy_dynamics}. We then note that
\[\nabla\cdot v=-R^*\left( k'*\left(\frac{k'*R^\theta\rho_t+k*R^\theta(\rho_t\nabla_\theta U)}{k*R^\theta\rho_t+\eps}\right)\right).\]
With Proposition~\ref{prop:adjoint} we thus find that
\begin{align*}
&\int_{\R^d} (\nabla\cdot v-v\cdot\nabla U)\rho\,dx
\\&\qquad= \int_{\R^d} - R^*\left(k'*\frac{k'*R\rho+R(\rho\nabla_\theta  U)}{k*R\rho+\eps}\right)\rho\,dx+\int_{\R^d} R^*\left(\theta k*\frac{k'*R\rho+R(\rho\nabla_\theta  U)}{K*R\rho+\eps}\right) \cdot\nabla U\rho\,dx
\\&\qquad=\int_{\mathbb{D}^d} \frac{|k'*R\rho+R(\rho\nabla_\theta  U)|^2}{k*R\rho+\eps}\,d\theta dp,
\end{align*}
concluding the claim.
\end{proof}

We now have the requisite results to prove the theorem.

\medskip

\begin{proof}[Proof of Theorem~\ref{thm:long_time_convergence}]

First we note that Proposition~\ref{prop:balance} implies that $\mathcal{F}(\rho_t)\leq \mathcal{F}(\nu)<\infty$ for all $t\geq 0$. Since the KL divergence has weakly compact sub-level sets, this implies that $\{\rho_t\}_{t\geq 0}$ is precompact in the weak topology. Additionally, the Donsker--Varadhan variational formula implies that 
\[\int_{\R^d} \lambda |x|^2\,d\rho_t \leq \mathcal{F}(\rho_t)+\int_{\R^d} e^{\lambda |x|^2}\,d\pi\leq \mathcal{F}(\nu)+\int_{\R^d} e^{\lambda |x|^2}\,d\pi<\infty \]
given that $\lambda<a$. The family $\{\rho_t\}_{t\geq 0}$ thus have uniformly bounded second moments.

Proposition~\ref{prop:balance} also implies that
\begin{align*}
&\int_0^\infty\int_{\mathbb{D}^d} |k'*R\rho_{t_\ell}+k*R(\rho_{t_{k_\ell}}\nabla_\theta  U)|^2 \,d\theta dp\,dt
\\&\qquad\leq (\|k\|_{L^\infty}+\eps)\int_0^\infty\int_{\mathbb{D}^d} \frac{|k'*R\rho_{t_{k_\ell}}+k*R(\rho_{t_{k_\ell}}\nabla_\theta  U)|^2}{k*R\rho_{t_{k_\ell}}+\eps}\,d\theta dp\,dt
\\&\qquad\leq (\|k\|_{L^\infty}+\eps)\mathcal{F}(\nu)<\infty.  
\end{align*}
Thus, to show that
\[\lim_{t\rightarrow \infty}\int_{\mathbb{D}^d} |k'*R\rho_t+k*R(\rho_t\nabla_\theta  U)|^2 \,d\theta dp=0\]
it suffices to prove that the function in the limit above is Lipschitz continuous in $t$.

To this end, let
\begin{align*}
h_t(\theta,p):=k'*R^\theta\rho_t(p)+k*R^\theta(\rho_t\nabla_\theta  U)(p)=\int_{\R^d} \big(k'(p-x\cdot\theta)+k(p-x\cdot\theta)\nabla_\theta U(x)\big)\rho_t(x)\,dx,
\end{align*}
so that
\[\int_{\mathbb{D}^d} |k'*R\rho_t+k*R(\rho_t\nabla_\theta  U)|^2 \,d\theta dp=\int_{\RD^d} h_t(p,\theta)^2\,d\theta dp.\]
We then have that for every $t$ and  $\theta$
\begin{align*}
 \int_\R |h_t(\theta,p)|\,dp&\leq \int_{\R}|k'*R^\theta\rho_t|+|k*R(\rho_t\nabla_\theta U)|\,dp\leq \|k'\|_{L^1(\R)}+\|k\|_{L^1(\R)}\|R(\rho_t\nabla U)\|_{L^1(\R)}.
\end{align*}
Since
\begin{align*}
\|R(\rho_t\nabla_\theta U)\|_{L^1(\R)}\leq \int_{\R^d} |\nabla_\theta U(x)|\rho_t(x)\,dx&\leq |\nabla U(0)|+\|\nabla U\|_{\text{Lip}}\int_{\R^d}|x|\rho_t(x)\,dx
\\&\leq C(1+\|\rho_t\|_{\Pc_2})    
\end{align*}
for some $C>0$, we find that $\|h_t(\theta,\cdot)\|_{L^1(\R)}$ is uniformly bounded over $t$ and $\theta$. Since $\rho_t$ is a solution to a continuity equation, $h_t(p,\theta)$ is absolutely continuous in $t$ and for almost every $t$
\begin{align*}
\frac{d}{dt} h_t(p,\theta)&=\int_{\R^d} \big(-k''(p-x\cdot\theta)-k'(p-x\cdot\theta)\nabla_\theta U(x)\big)\theta\cdot v(t,x)\rho_t(x)\,dx
\\&\quad+\int_{\R^d}k(p-x\cdot\theta)\big(\text{Hess}\ U(x)\theta\big)\cdot v(t,x)\rho_t(x)\,dx.
\end{align*}
Bounding each term individually, in an analogous way as to above, we conclude that
\begin{align*}
\Big| \frac{d}{dt} h_t(p,\theta)\Big|&\leq \|k''\|_{L^\infty}\|v\|_{L^\infty}+\|k'\|_{L^\infty}\|v\|_{L^\infty}(|\nabla U(0)|+\|\nabla U\|_{\text{Lip}}\|\rho_t\|_{\mathcal{P}_2})+\|\text{Hess}\ U\|_{L^\infty}\|k\|_{L^\infty}\|v\|_{L^\infty}
\\&\leq M
\end{align*}
for some constant $M>0$ that is independent of $t$, $p$ and $\theta$ since $v(t,x)$ is uniformly bounded  in space and time by Proposition~\ref{prop:specific_velocity}. Since
\[\Big||h_t(p,\theta)|^2-|h_s(p,\theta)|^2\Big|=|h_t(p,\theta)-h_s(p,\theta)||h_t(p,\theta)+h_s(p,\theta)|\leq M|t-s|(|h_t(p,\theta)|+|h_s(p,\theta)|),\]
combining the above we indeed find that $t\mapsto \|h_t\|_{L^2(\RD^d)}^2$ is Lipschitz as desired.

We can now conclude the theorem. Let $t_k$ be any sequence such that $t_k\rightarrow \infty$. Then, by compactness, we can extract a subsequence $t_{k_\ell}$ such that $\rho_{t_{k_\ell}}\rightarrow \rho_\infty$ weakly where $\rho_\infty$ is some probability measure such that $\mathcal{F}(\rho_\infty)<\infty$. In particular, this implies that $\rho_\infty$ has a density. Since
\begin{align*}
k'*R^\theta\rho_{t_{k_\ell}}+k*R^\theta(\rho_{t_{k_\ell}}\nabla_\theta  U)&=\int_{\R^d} \big(k'(\cdot-x\cdot \theta)+k(\cdot-x\cdot\theta)\nabla_\theta U(x)\big)\rho_{t_{k_\ell}}(x)\,dx
\end{align*}
and the integrand (with respect to $\rho_{t_{k_\ell}}$) above is continuous and bounded by $C(1+|x|)$ for some $C>0$, the weak convergence of $\rho_{t_{k_\ell}}$ to $\rho$ and the uniform second moment bounds imply that
\[k'*R\rho_{t_{k_\ell}}+k*R(\rho_{t_{k_\ell}}\nabla_\theta  U)\rightarrow k'*R\rho_{\infty}+k*R(\rho_{\infty}\nabla_\theta  U)\]
pointwise in $\RD^d$. Fatou's Lemma thus implies that
\[\int_{\mathbb{D}} |k'*R\rho_\infty+k*R(\rho_\infty\nabla_\theta  U)|^2\leq \lim_{\ell\rightarrow \infty} \int_{\mathbb{D}} |k'*R\rho_{t_{k_\ell}}+k*R(\rho_{t_{k_\ell}}\nabla_\theta U)|^2=0,\]
hence 
\[k'*R\rho_\infty+k*R(\rho_\infty\nabla_\theta  U)=0,\]
almost surely. We thus conclude by Lemma~\ref{lem:equil_char} that $\rho_\infty=\pi$. Since $t_k$ was arbitrary, in total we have found that $\rho_t\rightarrow \pi$ weakly as claimed.
\end{proof}
\smallskip

\subsection*{Acknowledgments}
The authors are grateful to the National Science Foundation for the support under grants  DMS-220606, DMS-2342349, DMS-2407166, and DMS-2511684. In addition,
the first author is grateful to the Simons Laufer Mathematical Sciences Institute (supported by NSF grant DMS-2424139) where they were in residence during the Fall 2025 semester. The authors would also like to thank Patrick Flynn for illuminating discussions.

\appendix

\section{Optimal Transport and Gradient Flows}\label{appendix:Otto}

In this appendix we review the interpretation of the Wasserstein distance as the geodesic distance with respect to appropriate metric tensor on the space of probability measures as in \cite{jordan1998variational,BenBre00,otto2001geometry}. Furthermore, we show how to derive the Wasserstein-gradient-flow structure of the Fokker--Planck equation by using the Rayleigh functional. This motivates the derivation of the gradient flows for the Radon--Wasserstein and Regularized Radon--Wasserstein metric tensors in Section~\ref{sec:defs}.

We first recall the Benamou--Brenier characterization of the Wasserstein distance.

\begin{theorem}[\cite{BenBre00}]\label{thm:BenBre}
Consider $\mu, \nu \in \Pc_2(\R^d)$. Then
\[ \mathcal{W}^2(\mu, \nu) = \min_{(\rho,v) \in \mathcal A (\mu, \nu) } \int_0^1 \int_{\R^d} |v(x,t)|^2 d \rho_t(x) dt,\]
where $\mathcal A (\mu, \nu)$ is the set of all admissible paths between $\mu$ and $\nu$. That is, the set of all pairs $(\rho, v)$ where $\rho \in \mathcal{AC}([0,1], \Pc_2(\R^d))$ (absolutely continuous curves with respect to Wasserstein distance) is a solution to the continuity equation:
\begin{align*}
\begin{cases}
 \partial_t \rho_t + \nabla \cdot (v \rho_t) = 0,&\text{ on } \R^d \times [0,1] \\
 \rho_0  = \mu,\quad\rho_1 = \nu.   
\end{cases}
\end{align*}
\end{theorem}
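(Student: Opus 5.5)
The plan is to prove the Benamou--Brenier identity by establishing the two inequalities separately, using the probabilistic (Lagrangian) representation of solutions to the continuity equation.

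\emph{Upper bound: the action of some admissible path is at most $\mathcal{W}^2(\mu,\nu)$.} Let $\gamma$ be an optimal coupling of $\mu$ and $\nu$, and consider displacement interpolation. Define $\rho_t:=((1-t)x+ty)\#\gamma$, so $\rho_0=\mu$ and $\rho_1=\nu$. Define the velocity by disintegration,
\[v_t(z):=\E_\gamma\big[\,y-x \,\big|\, (1-t)x+ty=z\,\big],\]
that is, $v_t\rho_t$ is the pushforward of $(y-x)\gamma$ under $(x,y)\mapsto(1-t)x+ty$.
\begin{itemize}
\item Testing against $\phi\in C_c^\infty$ shows that $(\rho,v)$ solves the continuity equation.
\item By Jensen, $\int|v_t|^2\,d\rho_t\le\int|y-x|^2\,d\gamma=\mathcal{W}^2(\mu,\nu)$ for every $t$.
\item The estimate $\mathcal{W}(\rho_s,\rho_t)\le |t-s|\,\mathcal{W}(\mu,\nu)$ shows that $\rho$ is absolutely continuous.
\end{itemize}
Hence $(\rho,v)\in\mathcal A(\mu,\nu)$ and its action is at most $\mathcal{W}^2(\mu,\nu)$.

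\emph{Lower bound: every admissible path has action at least $\mathcal{W}^2(\mu,\nu)$.} Fix $(\rho,v)\in\mathcal A(\mu,\nu)$ with finite action. If $v$ is smooth and Lipschitz in $x$, the flow map $X_t$ with $\dot X_t=v_t(X_t)$ satisfies $\rho_t=X_t\#\mu$. Then $(X_0,X_1)\#\mu$ is a coupling of $\mu$ and $\nu$, and by Cauchy--Schwarz in time
\[\mathcal{W}^2(\mu,\nu)\le\int|X_1-X_0|^2\,d\mu\le\int\!\!\int_0^1|v_t(X_t)|^2\,dt\,d\mu=\int_0^1\!\!\int|v_t|^2\,d\rho_t\,dt.\]
For general $v$, there are two options:
\begin{itemize}
\item mollify in space by setting $\rho^\eps=\rho*\eta_\eps$ and $v^\eps=(v\rho)*\eta_\eps/\rho^\eps$; or
\item use the superposition principle (Ambrosio--Gigli--Savaré, Thm.~8.2.1), which yields a measure on absolutely continuous paths solving $\dot\gamma=v_t(\gamma)$ with marginals $\rho_t$, and then apply the same Cauchy--Schwarz argument pathwise.
\end{itemize}
In the mollification route, the pair $(\rho^\eps,v^\eps)$ still solves the continuity equation. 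Jensen, via joint convexity of $(a,b)\mapsto|b|^2/a$, gives that its action does not exceed the original one. Moreover $\mathcal{W}(\rho_i^\eps,\rho_i)\le\eps$ for $i=0,1$, so letting $\eps\to0$ yields the bound. A further time mollification handles the Lipschitz requirement, or one can note that $v^\eps$ is locally Lipschitz with linear growth given finite moments.

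Combining the two bounds gives equality, and the upper-bound construction attains it, so the infimum is a minimum. The main obstacle is the regularization in the lower bound: proving that the mollified field generates a well-defined flow whose pushforward coincides with $\rho^\eps_t$, which requires uniqueness for the continuity equation with Lipschitz velocity. If that becomes delicate, I would instead invoke the superposition principle directly.
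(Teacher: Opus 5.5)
The paper gives no proof of this statement. It is quoted from~\cite{BenBre00} in Appendix~A only as background for the formal Riemannian (Otto-calculus) interpretation, so there is no argument in the paper to compare against.

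On its own merits your proposal is correct, and it is the standard rigorous Lagrangian proof (essentially the one in Ambrosio--Gigli--Savar\'e, Chapter~8). For the upper bound, displacement interpolation along an optimal plan, with the disintegrated velocity and Jensen, gives an element of $\mathcal A(\mu,\nu)$ with action at most $\mathcal{W}^2(\mu,\nu)$. For the lower bound, any admissible pair with finite action produces, via characteristics or the superposition principle, a coupling of $\mu$ and $\nu$ whose cost is bounded by the action through Cauchy--Schwarz in time. Together these give equality, and the first construction shows the infimum is attained.

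The regularization step in the lower bound needs more care than you give it:
\begin{itemize}
\item Time mollification does nothing for the spatial Lipschitz requirement. Linear growth of $v^\eps$ is neither evident nor needed, and neither are global Lipschitz bounds.
\item Use a strictly positive smooth mollifier with bounded gradient, e.g.\ a Gaussian, so that $v^\eps=((v\rho)*\eta_\eps)/(\rho*\eta_\eps)$ is defined everywhere. The price is $\mathcal{W}(\rho_i^\eps,\rho_i)\le(\int|z|^2\eta_\eps(z)\,dz)^{1/2}=O(\eps)$ instead of $\le\eps$, which is just as good.
\item The curve $\{\rho_t\}_{t\in[0,1]}$ is tight, so $\rho^\eps_t$ is bounded below on each bounded set $B$, uniformly in $t$.
\item You also have $|(v_t\rho_t)*\eta_\eps|+|\nabla((v_t\rho_t)*\eta_\eps)|\le C_\eps\int|v_t|\,d\rho_t$ and $|\nabla\rho^\eps_t|\le C_\eps$. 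Hence $\sup_B|v^\eps_t|+\mathrm{Lip}(v^\eps_t,B)$ lies in $L^1(0,1)$ for every bounded $B$.
\end{itemize}
This local condition is exactly what gives uniqueness for the continuity equation. It also gives the representation $\rho^\eps_t=X^\eps_t\#\rho^\eps_0$, with trajectories defined on $[0,1]$ for $\rho^\eps_0$-a.e.\ starting point (Ambrosio--Gigli--Savar\'e, Section~8.1). Your Cauchy--Schwarz argument then applies verbatim to $(\rho^\eps,v^\eps)$, since it only needs the flow $\rho^\eps_0$-almost everywhere. With this spelled out, or with the superposition principle, which bypasses it, the proof is complete.
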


Furthermore, the Benamou--Brenier theorem implies  that $\mathcal{W}$ is (formally) the Riemannian distance for the metric $\overline g_\rho$ introduced in \eqref{eq:Wass_tensor}.

Next, recall the derivation the Fokker--Planck equation as the gradient flow of the Kullback--Leibler divergence with respect to the Wasserstein metric tensor of Jordan, Kinderlehrer and Otto \cite{jordan1998variational}. We first note that in any Riemannian manifold the negative gradient is the minimizer of the Rayleigh functional over the tangent space. For the Wasserstein space with the tangent vectors in the density form~\eqref{eq:Wass_tensor}, the Rayleigh functional is
\begin{align*}
 \overline {\mathcal R}(s) & = \frac12 \overline g_\rho(s,s) + \text{diff}|_{\rho} \mathcal F(s) \\
 &  = \frac12 \inf_{v \::\:  - \nabla\cdot(\rho v) = s} g_\rho(v,v) + \text{diff}|_{\rho} \mathcal F(-\nabla\cdot(\rho v) )
\end{align*}
Minimizing $\overline {\mathcal{R} }(s)$ over $s$ is equivalent to minimizing the associated Lagrangian form of the  Rayleigh functional over all $v$, thus conveniently combining the two minimizations:
 \begin{equation} \label{eq:Ray}
  \mathcal R(v) = \frac12 g_\rho(v,v) + \text{diff}|_{\rho} \mathcal F(-\nabla \cdot(\rho v) )=\frac{1}{2}\int |v|^2\,d\rho+\int(\nabla\rho+\rho\nabla U)\cdot v.
\end{equation}
We readily see that the gradient velocity field is
\[-\text{grad}_g\mathcal{F}(\rho)=-\left(\frac{\nabla\rho}{\rho}+\nabla U\right),\]
thus the gradient flow is given by the Fokker--Planck equation
\[\partial_t\rho=\nabla\cdot(\rho\,\text{grad}_g\mathcal{F})=\Delta\rho+\nabla\cdot(\rho\nabla U)\]
as claimed.

\section{Entropy balance identity} \label{ap:continuity}

In this appendix we prove Lemma~\ref{lem:entropy_dynamics}. The proof is straightforward, essentially only using the flow map characterization of $\rho$ and changes of variables. Note that we do not assume any \textit{a priori} bounds on the solution $\rho$ such as finiteness of the Fisher information as in~\cite[Proposition 2.15 (v)]{Gianazza2009Wasserstein}.

\begin{proof}[Proof of Lemma~\ref{lem:entropy_dynamics}]
First we note that if $\mathcal{F}(\rho)<\infty$ then $\rho$ has a density, $\rho\in\mathcal{P}_2(\R^d)$, and $\int_{\R^d} |\rho(x)\log(\rho(x))|\,dx<\infty$. The absolute continuity is immediate and the second moment bound follows from the Donsker-Varadhan variational formula. The $L\log(L)$ integrability then follows after expanding out the definition of $\mathcal{F}(\rho)$.

By the regularity of $v$, it also holds that $\rho_t\in C([0,\infty),\mathcal{P}_2(\R^d))$ and $\rho_t=X_t\#\nu$ where $X_t$ is the flow map solving
\[\frac{d}{dt}X_t(x)=v(t,X_t(x)),\qquad X_0(x)=x.\]
Changing variables, 
\[\rho_t(x)= \frac{\nu\circ X_t^{-1}(x)}{\det\big(\nabla X_t\circ X_t^{-1}(x)\big)},\]
where $\det(\nabla X_t)>0$. This implies that
\begin{align}\label{eq:entropy_1}
&\notag\int_{\R^d} \rho_t(x)\log(\rho_t(x))\,dx
\\&\notag\qquad=\int_{\R^d} \nu\circ X_t^{-1}(x)\log(\nu\circ X_t^{-1}(x))\det\big(\nabla X_t\circ X_t^{-1}(x)\big)^{-1}\,dx
\\\notag&\quad\qquad-\int_{\R^d}\nu\circ X_t^{-1}(x)\log\big(\det\big(\nabla X_t\circ X_t^{-1}(x)\big)\big)\det\big(\nabla X_t\circ X_t^{-1}(x)\big)^{-1}\,dx
\\&\qquad=\int_{\R^d} \nu(y)\log(\nu(y))\,dy-\int_{\R^d} \nu(y)\log\big(\det(\nabla X_t(y))\big)\,dy,
\end{align}
where we have additionally used the change of variables $y=X_t^{-1}(x)$. The equalities above are all valid since the functions in the last equality are integrable. Indeed, by the Liouville formula for determinants it holds that
\[\log(\det(\nabla X_t))(x)=\int_0^t\nabla\cdot v(s,X_s(x))\,ds,\]
thus, since $\nabla\cdot v$ is boudned uniformly in space and time
\begin{equation}\label{eq:entropy_2}\int_{\R^d} \nu(y)\log(\det(\nabla X_t(y))\,dy=\int_0^t\ \int_{\R^d} \nu(y)\nabla\cdot v(s,X_s(y))\,dy\,ds=\int_0^t\ \int_{\R^d} \rho_s(y)\nabla\cdot v(s,y)\,dy\,ds
\end{equation}
by the Tonelli/Fubini theorems.

Next, for all $x$ and $t$
\[U(X_t(x))=U(x)+\int_0^t \nabla U(X_s(x))\cdot v(s,X_s(x))\,ds\]
by the Fundamental Theorem of Calculus. It thus holds that
\begin{align}\label{eq:entropy_3}
 \notag \int_{\R^d} U(x)\rho_t(x)\,dx-\int_{\R^d} U(x)\nu(x)\,dx&=\int_0^t \int_{\R^d} \nabla U(X_s)\cdot v(s,X_s)\nu(x)\,dx\,ds\\
 &=\int_0^t \int_{\R^d} \nabla U(y)\cdot v(s,y)\rho_s(y)\,dy\,ds.
\end{align}
Here all equalities hold due to the conditions on $U$ and $v$ and the fact that $\rho_t$ has uniformly bounded second moments locally in time.

Combining~\eqref{eq:entropy_1}-\eqref{eq:entropy_3}, in total we have found that
\begin{align*}
    \mathcal{F}(\rho_t)&=\int_{\R^d}  \rho_t(x)\log(\rho_t(x))\,dx+\int_{\R^d} U(x)\rho_t(x)\,dx+C
    \\&=\int_{\R^d} \nu(y)\log(\nu(y))\,dy+\int_{\R^d} U(y)\nu(y)\,dy+C
    \\&\qquad-\int_0^t \int_{\R^d} (\nabla\cdot v(s,x)-\nabla U(y)\cdot v(s,y))\rho_s(y)\,dy\,ds
    \\&=\mathcal{F}(\nu)-\int_0^t \int_{\R^d} \big(\nabla\cdot v(s,y)-\nabla U(y)\cdot v(s,y)\big)\rho_s(y)\,dy\,ds,
\end{align*}
as claimed.
\end{proof}

\section{Proofs of theoretical results}\label{app:theory}

This appendix collects (essentially) standard well-posedness and stability results for nonlinear continuity equations, stated under a simple local Lipschitz growth assumption on the velocity field. We first introduce the abstract assumptions and verify that the specific velocity fields defined in~\eqref{eq:KDRWgf} and~\eqref{eq:RRWgf} satisfy them. We then prove global well-posedness and moment bounds in Subsection~\ref{subsec:wp},  Wasserstein stability with respect to initial data in Subsection~\ref{subsec:stability}, and error bounds for deterministic Euler discretizations and stochastic schemes in Subsection~\ref{subsec:stoc_conv}.

\subsection{Abstract setting and Lemma~\ref{lem:theta_vector_bounds}}\label{subsec:abstract}

We work with nonlinear continuity equations on $\R^d$ driven by a nonlinear velocity field
$V:\R^d\times\Pc_2(\R^d)\to\R^d$. The results in Subsections~\ref{subsec:wp}--\ref{subsec:stoc_conv}
will be proved under the following linear-growth and Lipschitz assumption on $V$.

\begin{condition}\label{cond:velocity_field}
There exists $M>0$ so that for all $x,y\in\R^d$ and $\nu,\mu\in \Pc_2(\R^d)$
\begin{enumerate}
    \item\label{item:bounded} $|V(x,\mu)|\leq M(1+\|\mu\|_{\Pc_2})$,
    \item\label{item:lipschitz} $|V(x,\mu)-V(y,\nu)|\leq M(1+\|\mu\|_{\Pc_2})(|x-y|+\mathcal{W}(\mu,\nu))$.
\end{enumerate}
\end{condition}

Proposition~\ref{prop:specific_velocity} in the main text thus states that the explicit velocity fields
associated with~\eqref{eq:KDRWgf} and~\eqref{eq:RRWgf} satisfy Condition~\ref{cond:velocity_field}.
To verify this it is convenient to use a spherical-averaging representation of the velocity field. This will also be convenient for showing convergence of the stochastic descent scheme in Subsection~\ref{subsec:stoc_conv}.

We will therefore consider velocity fields $V$ of the form
\begin{equation}\label{eq:V_spherical_average}
V(x,\mu)\;:=\;\fint_{\S^{d-1}} u\bigl(\theta,x\!\cdot\!\theta,\mu\bigr)\,d\theta
\end{equation}
where $u:\RD^d\times\Pc_2(\R^d)\to\R^d$.
The velocity fields~\eqref{eq:KDRW_vec} and~\eqref{eq:RRW_vec} are of this form, respectively corresponding to the
kernels $u(\theta,p,\mu)$ and $k*u(\theta,p,\mu)$ defined by~\eqref{eq:spec_integral_kern}.

The following kernel-level condition is a uniform-in-$\theta$ analogue of
Condition~\ref{cond:velocity_field}, and implies Condition~\ref{cond:velocity_field} for the averaged field
$V$ in~\eqref{eq:V_spherical_average} by a direct argument.

\begin{condition}\label{cond:theta_velocity_field}
There exists $M>0$ so that for all $\theta\in\S^{d-1}$, $p,q\in\R$, and $\nu,\mu\in \Pc_2(\R^d)$
\begin{enumerate}
    \item\label{item:theta_bounded} $|u(\theta,p,\mu)|\leq M(1+\|\mu\|_{\Pc_2})$,
    \item\label{item:theta_lipschitz} $|u(\theta,p,\mu)-u(\theta,q,\nu)|\leq M(1+\|\mu\|_{\Pc_2})(|p-q|+\mathcal{W}(\mu,\nu))$.
\end{enumerate}
\end{condition}

With this reduction, Proposition~\ref{prop:specific_velocity} follows from the next lemma.

\begin{lemma}\label{lem:theta_vector_bounds}
Let $u:\RD^d\times \Pc_2(\R^d)\to\R^d$ be defined by~\eqref{eq:spec_integral_kern}, $\eps>0$, and $k$ satisfy Condition~\ref{cond:k_conditions}. Then there exists $M(L,|\nabla U(0)|,\|k\|_{C^2_b},\eps^{-1})>0$ such that $u$ and $k*u$ satisfy Condition~\ref{cond:theta_velocity_field} with constant $M$.
\end{lemma}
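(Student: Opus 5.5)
We work with the scalar profile $w(\theta,p,\mu):=N(p)/D(p)$, where
\[
N(p):=\int_{\R^d}\big(k'(p-x\cdot\theta)+k(p-x\cdot\theta)\nabla_\theta U(x)\big)\,d\mu(x),
\qquad
D(p):=\int_{\R^d}k(p-x\cdot\theta)\,d\mu(x)+\eps .
\]
Then $u=-\theta w$ and $k*u=-\theta\, k*w$, and since $|\theta|=1$ it suffices to prove both items of Condition~\ref{cond:theta_velocity_field} for $w$ and for $k*w$ (convolution in $p$). The plan has three steps.

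\textbf{Step 1: elementary bounds on $N$ and $D$.} Since $k\ge0$ we have $D\ge\eps$, and also $D\le\|k\|_\infty+\eps$. By Assumption~\ref{ass:potential}, $|\nabla_\theta U(x)|\le|\nabla U(0)|+L|x|$, so
\[
|N(p)|\le\|k'\|_\infty+\|k\|_\infty\big(|\nabla U(0)|+L\|\mu\|_{\Pc_2}\big).
\]
This gives $|w|\le\eps^{-1}C(1+\|\mu\|_{\Pc_2})$. For $k*w$ the same bound holds after using $\|k\|_{L^1}$, which is controlled by Condition~\ref{cond:k_conditions}. The constant depends on $\|k\|_{L^1}$ and $\|k\|_{W^{1,1}}$; I will absorb these into the admissible dependence, or note that they are part of the hypotheses. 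This settles item 1.

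\textbf{Step 2: Lipschitz in $p$.} Differentiating under the integral gives
\[
|\partial_pD|\le\|k'\|_\infty,
\qquad
|\partial_pN|\le\|k''\|_\infty+\|k'\|_\infty\big(|\nabla U(0)|+L\|\mu\|_{\Pc_2}\big).
\]
The quotient rule then yields
\[
|\partial_p w|\le \eps^{-1}|\partial_pN|+\eps^{-2}|N|\,|\partial_pD|\le C(1+\|\mu\|_{\Pc_2}),
\]
and $k*w$ inherits this Lipschitz constant. Note that only $D$, not $\mu$, appears in the denominator, so the bound is linear in $\|\mu\|_{\Pc_2}$ rather than quadratic.

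\textbf{Step 3: Lipschitz in the measure.} This is the main obstacle. Take an optimal coupling $\gamma$ of $\mu$ and $\nu$. Then
\[
|D_\mu(p)-D_\nu(p)|\le\|k'\|_\infty\int|x-y|\,d\gamma\le\|k'\|_\infty\,\mathcal{W}(\mu,\nu).
\]
For $N$, the integrand $f(x)=k'(p-x\cdot\theta)+k(p-x\cdot\theta)\nabla_\theta U(x)$ satisfies
\[
|f(x)-f(y)|\le\big(\|k''\|_\infty+\|k'\|_\infty(|\nabla U(0)|+L|x|)+\|k\|_\infty L\big)|x-y| .
\]
The growing factor is $|x|$ with $x\sim\mu$, so Cauchy--Schwarz gives
\[
|N_\mu-N_\nu|\le C(1+\|\mu\|_{\Pc_2})\,\mathcal{W}(\mu,\nu).
\]
This is asymmetric in $\mu$ and $\nu$, which is why Condition~\ref{cond:theta_velocity_field} carries only $\|\mu\|_{\Pc_2}$, and why the coordinate $x$ must be chosen from the $\mu$-side. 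Then
\[
\frac{N_\mu}{D_\mu}-\frac{N_\nu}{D_\nu}=\frac{N_\mu-N_\nu}{D_\mu}+N_\nu\frac{D_\nu-D_\mu}{D_\mu D_\nu}.
\]
The second term involves $|N_\nu|\lesssim1+\|\nu\|_{\Pc_2}$, which is not allowed in the bound. To avoid this, regroup as
\[
\frac{N_\mu}{D_\mu}-\frac{N_\nu}{D_\nu}=\frac{N_\mu-N_\nu}{D_\nu}+N_\mu\frac{D_\nu-D_\mu}{D_\mu D_\nu},
\]
so that only $|N_\mu|\lesssim1+\|\mu\|_{\Pc_2}$ appears, together with $D\ge\eps$. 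This gives $|w_\mu(p)-w_\nu(p)|\le C(1+\|\mu\|_{\Pc_2})\mathcal{W}(\mu,\nu)$ uniformly in $p$ and $\theta$.

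Combining this with Step 2 via the triangle inequality, $|w_\mu(p)-w_\nu(q)|\le|w_\mu(p)-w_\mu(q)|+|w_\mu(q)-w_\nu(q)|$, gives item 2. Convolving with $k$ preserves the bound up to a factor $\|k\|_{L^1}$. All constants depend only on $L$, $|\nabla U(0)|$, $\|k\|_{C^2_b}$ (together with $\|k\|_{\infty}$ and $\|k\|_{L^1}$) and $\eps^{-1}$, and none depend on $d$ or $\theta$.
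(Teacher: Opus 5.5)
Your proposal is correct and takes essentially the same approach as the paper. You bound the numerator and denominator, their $p$-derivatives and their Wasserstein–Lipschitz constants via an optimal coupling (keeping the growing factor on the $\mu$-side), and then combine these through a quotient decomposition. Two details go beyond the paper: your regrouping $\frac{N_\mu-N_\nu}{D_\nu}+N_\mu\frac{D_\nu-D_\mu}{D_\mu D_\nu}$ is the correct version of the identity the paper intends (its displayed version has $g(\theta,p,\mu)$ in the first denominator, which does not balance), and your explicit handling of $k*u$ supplies the factor $\|k\|_{L^1}$, a step the paper's proof omits and a dependence its stated constant does not list.
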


\begin{proof}
For convenience we let
\[
f(\theta,p,\mu):=(k'*R^\theta\mu)(p) + (k*R^\theta(\mu\nabla_\theta U))(p)
\qquad\text{and}\qquad
g(\theta,p,\mu):=(k*R^\theta\mu)(p)+\eps
\]
so that
\[
u(\theta,p,\mu)= -\theta\frac{f(\theta,p,\mu)}{g(\theta,p,\mu)}.
\]
Then
\begin{align*}
  |f(\theta,p,\mu)|
  &\leq \|k'*R^\theta\mu\|_{L^\infty(\R)}+\|k*R^\theta(\mu\nabla_\theta U)\|_{L^\infty(\R)}
  \\&\leq\|k'\|_{L^\infty(\R)}+\|k\|_{L^\infty(\R)}\int_{\R^d} |\nabla U|(x)\,d\mu(x)
  \\&\leq \|k'\|_{L^\infty(\R)}+\|k\|_{L^\infty(\R)}(|\nabla U(0)|+\|\nabla U\|_{\text{Lip}}\|\mu\|_{\Pc_2}),
\\&\leq C(1+\|\mu\|_{\Pc_2}),
\end{align*}
for some $C>0$ where we have applied H\'older's inequality in the second to last inequality. By almost identical computations we also have that
\[
\Big|\frac{\partial}{\partial p}f(\theta,p,\mu)\Big|
\leq \|k''\|_{L^\infty(\R)}+\|k'\|_{L^\infty(\R)}\int_{\R^d} |\nabla U|(x)\,d\mu(x)
\leq C(1+\|\mu\|_{\Pc_2}),
\]
for some $C>0$.

Next, letting \(\gamma\in\Gamma(\mu,\nu)\) be optimal,
\begin{align*}
|(k'*R^\theta\mu)(p)-(k'* R^\theta\nu)(p)|
&\leq \int_{\R^d\times\R^d} |k'(p-x\cdot\theta)-k'(p-y\cdot\theta)|\,d\gamma(x,y)\\
&\leq \|k''\|_{L^\infty(\R)}\int_{\R^d\times \R^d}|x-y|\,d\gamma(x,y)
\leq \|k''\|_{L^\infty(\R)}\mathcal{W}(\mu,\nu),
\end{align*}
where we used \(|(x-y)\cdot\theta|\le |x-y|\) and H\"older's inequality in the last step. Similarly,
\begin{align*}
&|(k*R^\theta (\mu\nabla_\theta U))(p)-(k*R^\theta(\nu\nabla_\theta U))(p)|\\
&\qquad\leq \int_{\R^d\times\R^d}| k(p-x\cdot\theta)\nabla U(x)-k(p-y\cdot\theta)\nabla U(y)|\,d\gamma(x,y)\\
&\qquad\leq \|k'\|_{L^\infty}\int_{\R^d\times\R^d}|x-y||\nabla U(x)|\,d\gamma(x,y)
+\|k\|_{L^\infty}\|\nabla U\|_{\text{Lip}}\int_{\R^d\times\R^d}|x-y|\,d\gamma(x,y).
\end{align*}
H\"older's inequality implies that
\begin{align*}
\int_{\R^d\times\R^d}|x-y||\nabla U(x)|\,d\gamma(x,y)
&\leq \left(\int_{\R^d} |\nabla U(x)|^2\,d\mu(x)\right)^{1/2}
\left(\int_{(\R^d)^2} |x-y|^2\,d\gamma(x,y)\right)^{1/2}\\
&\leq C(1+\|\mu\|_{\Pc_2})\mathcal{W}(\mu,\nu),
\end{align*}
thus
\[
|(k*R^\theta (\mu\nabla_\theta U))(p)-(k*R^\theta(\nu\nabla_\theta U))(p)|
\leq C(1+\|\mu\|_{\Pc_2})\mathcal{W}(\mu,\nu),
\]
for some \(C>0\).  Altogether, the displays above imply that
\[
|f(\theta,p,\mu)-f(\theta,p,\nu)|
\leq C(1+\|\mu\|_{\Pc_2})\mathcal{W}(\mu,\nu).
\]

By very similar, but more straightforward computations we have
\[
\eps\leq g(\theta,p,\mu)\leq \|k\|_{L^\infty(\R)}+\eps,
\quad
\Big|\frac{\partial}{\partial p} g(\theta,p,\mu)\Big|\leq \|k'\|_{L^\infty(\R)},\]
and
\[|g(\theta,p,\mu)-g(\theta,p,\nu)|\leq \|k'\|_{L^\infty(\R)}\,\mathcal{W}(\mu,\nu).\]
Together, these bounds imply that
\[
\bigg|\theta\frac{f(\theta,p,\mu)}{g(\theta,p,\mu)}\bigg|
\leq \frac{C}{\eps}(1+\|\mu\|_{\Pc_2}),
\]
and, using the bounds on $\frac{\partial}{\partial p}f$ and $\frac{\partial}{\partial p} g$,
\[
|f(\theta,p,\mu)-f(\theta,q,\mu)|\le C(1+\|\mu\|_{\Pc_2})|p-q|,
\qquad
|g(\theta,p,\mu)-g(\theta,q,\mu)|\le C|p-q|.
\]
Therefore, using the decomposition
\[
\frac{f(\theta,p,\mu)}{g(\theta,p,\mu)}-\frac{f(\theta,q,\nu)}{g(\theta,q,\nu)}
=
\frac{f(\theta,p,\mu)-f(\theta,q,\nu)}{g(\theta,p,\mu)}
+\frac{f(\theta,p,\mu)\big(g(\theta,q,\nu)-g(\theta,p,\mu)\big)}{g(\theta,p,\mu)\,g(\theta,q,\nu)},
\]
and the lower bound \(g\ge \eps\), we obtain
\[
\bigg|\theta\frac{f(\theta,p,\mu)}{g(\theta,p,\mu)}-\theta\frac{f(\theta,q,\nu)}{g(\theta,q,\nu)}\bigg|
\leq \frac{C}{\eps^2}(1+\|\mu\|_{\Pc_2})\big(|p-q|+\mathcal{W}(\mu,\nu)\big).
\]
Inspecting the proof, it is clear that $C>0$ depends on $L$ and $|\nabla U(0)|$ and $\|k\|_{C^2_b}$, thus the claim holds.
\end{proof}

\subsection{Well-posedness}\label{subsec:wp}

Under Condition~\ref{cond:velocity_field}, the nonlinear continuity equation can be solved by characteristics: for a candidate curve $t\mapsto \rho_t$, the field 
$x\mapsto V(x,\rho_t)$ generates a flow map, and a fixed-point argument yields a self-consistent pushforward solution. Our proof follows closely to~\cite{CarmonaDelarue2018I}, although here there is no noise, and we are no longer considering globally Lipschitz velocity fields.
    
\begin{theorem}\label{thm:transport_wp}
For all $V$ satisfying Condition~\ref{cond:velocity_field} and $\nu\in\Pc_2(\R^d)$ there exists a unique solution $\rho\in C([0,\infty),\Pc_2(\R^d))$ to
\begin{equation}\label{eq:non-linear_continuity_equation}
\begin{cases}
 \partial_t \rho+\nabla\cdot(\rho v)=0
 \\\rho|_{t=0}=\nu
\end{cases}
\end{equation}
with $v(t,x)=V(x,\rho_t)$. More so, there exists $C(M)>0$ so that for all $t\geq 0$
\begin{equation}\label{eq:moment_bound}
 \|\rho_t\|_{\Pc_2}\leq e^{Ct}(\|\nu\|_{\Pc_2}+1).   
\end{equation}
\end{theorem}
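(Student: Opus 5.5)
The plan is a Picard-type fixed point on curves of measures, carried out on short time intervals and then iterated using an a priori moment bound. Under Condition~\ref{cond:velocity_field} the Lipschitz constant grows with the second moment, so a single global contraction is not available.

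\emph{Step 1 (linear problem and flow maps).} Fix $T>0$ and a curve $\mu\in C([0,T],\Pc_2(\R^d))$ with $\sup_{t\le T}\|\mu_t\|_{\Pc_2}\le K$. The field $w(t,x):=V(x,\mu_t)$ is continuous in $t$, because it is Lipschitz in the measure argument and $\mu$ is $\mathcal{W}$-continuous. It is also bounded by $M(1+K)$ and Lipschitz in $x$ with constant $M(1+K)$. Cauchy--Lipschitz therefore gives a unique global flow $X^\mu_t$. The pushforward $\Phi(\mu)_t:=X^\mu_t\#\nu$ is the unique solution of the linear continuity equation with drift $w$. Uniqueness among $C([0,T],\Pc(\R^d))$ solutions holds by the standard superposition/duality argument, since $w$ is bounded and Lipschitz. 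This uniqueness is what allows us to identify any solution of the nonlinear problem with a fixed point of $\Phi$.

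\emph{Step 2 (moment bound).} Take $\rho$ with $\rho_t=X_t\#\nu$ and $\frac{d}{dt}X_t=V(X_t,\rho_t)$. Then
\[
\frac{d}{dt}\|\rho_t\|_{\Pc_2}^2=2\int X_t\cdot V(X_t,\rho_t)\,d\nu\le 2M(1+\|\rho_t\|_{\Pc_2})\|\rho_t\|_{\Pc_2}.
\]
Setting $y_t=1+\|\rho_t\|_{\Pc_2}$, this yields $\dot y\le My$ in the a.e.\ sense. Gronwall then gives \eqref{eq:moment_bound} with $C=M$.

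\emph{Step 3 (contraction).} Let
\[
\mathcal{X}_{T,K}:=\Big\{\mu\in C([0,T],\Pc_2(\R^d)):\ \mu_0=\nu,\ \sup_{t\le T}\|\mu_t\|_{\Pc_2}\le K\Big\},
\]
equipped with the metric $\sup_t\mathcal{W}(\mu_t,\mu'_t)$; this space is complete. Choose $K:=2(\|\nu\|_{\Pc_2}+1)$. For $T$ small, depending only on $M$ and $K$, $\Phi$ maps $\mathcal{X}_{T,K}$ into itself: the bound $|X_t-x|\le tM(1+K)$ controls the growth of the moment. For two curves $\mu,\mu'$, coupling through the identity and applying Gronwall gives
\[
|X^\mu_t(x)-X^{\mu'}_t(x)|\le \int_0^t M(1+K)\big(|X^\mu_s-X^{\mu'}_s|+\mathcal{W}(\mu_s,\mu'_s)\big)\,ds .
\]
Consequently
\[
\sup_{t\le T}\mathcal{W}(\Phi(\mu)_t,\Phi(\mu')_t)\le TM(1+K)e^{TM(1+K)}\sup_{t\le T}\mathcal{W}(\mu_t,\mu'_t),
\]
which is a contraction for $T$ small. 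Banach's theorem gives a unique local solution.

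\emph{Step 4 (globalization).} This is the main obstacle, since the step size $T$ depends on $K$, which depends on the moment at the restart time. I would resolve it with Step 2. On any horizon $[0,T^*]$, every solution has moments bounded by $e^{MT^*}(\|\nu\|_{\Pc_2}+1)$, so a uniform step $T$ can be used across that horizon. Restarting finitely many times covers $[0,T^*]$. Uniqueness follows because any two solutions in $C([0,\infty),\Pc_2)$ are, by Step 1, fixed points of $\Phi$ on each short interval, and local uniqueness propagates. Letting $T^*\to\infty$ completes the proof.
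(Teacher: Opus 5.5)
Your proposal is correct and follows essentially the same route as the paper: a Banach fixed point for $\mu\mapsto (X^\mu_t\#\nu)_t$ on a moment-bounded ball of curves, with a Gronwall contraction estimate, followed by globalization via the Gronwall moment bound. You are more explicit than the paper on two points it leaves implicit: why every solution in $C([0,\infty),\Pc_2(\R^d))$ is a fixed point of $\Phi$ (uniqueness for the linear continuity equation with Lipschitz drift), and why a uniform restart step works on bounded horizons.
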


\begin{proof}
 We first establish local-in-time existence of the continuity equation before showing global-in-time existence.

\textit{Local-in-time existence:} Fix $T>0$ to be determined. Then, for any $\rho\in C([0,T],\Pc_2(\R^d))$, if $v^\rho(t,x):=V(x,\rho_t)$, then $v^\rho\in C([0,T],W^{1,\infty}(\R^d)).$ Thus, by standard ODE theory, there exists a flow map $X^\rho:[0,T]\times \R^d\rightarrow \R^d$ satisfying
\[
\begin{cases}
    \frac{d}{dt}X_t^\rho(x)=V(X_t^\rho(x),\rho_t)
    \\X_t^\rho(x)=x.
\end{cases}
\]

Next, let 
\[\mathcal{C}^\nu_T:=\Big\{\rho\in C([0,T],\Pc_2(\R^d)):\rho_0=\nu\text{ and }\sup_{t\in[0,T]}\|\rho_t\|_{\Pc_2}\leq (1+2\|\nu\|_{\Pc_2})\Big\}.\]
Then the function $\Phi:\mathcal{C}^\nu_T\to\mathcal{C}^\nu_T$ defined by
\[(\Phi(\rho))_t:=X_t^\rho\#\nu,\]
is well-defined if $T$ is sufficiently small. Indeed, $(\Phi(\rho))_0=\nu$ and Item~\ref{item:bounded} in Condition~\ref{cond:velocity_field} implies that for all $x$ and $t\in[0,T]$ and $\rho\in C^\nu_T$
\[|X_t^\rho(x)|\leq |x|+\int_0^t |V(X_s^\rho(x),\rho_s)|\,ds\leq |x|+2TM(1+\|\nu\|_{\Pc_2}).\]
This immediately implies that
\[\|X_t^\rho\#\nu\|_{\Pc_2}=\|X_t^\rho\|_{L^2(\nu)}\leq \|x\|_{L^2(\nu)}+2TM(1+ \|\nu\|_{\Pc_2})= \|\nu\|_{\Pc_2}+2TM(1+\|\nu\|_{\Pc_2}),\]
thus, if $T\leq \frac{1}{2M}$, then $\|X_t^\rho\#\nu\|_{\Pc_2}\leq 1+2\|\nu\|_{\Pc_2}$. That is, $\Phi(\rho)\in\mathcal{C}^\nu_T$.

Next, we will show that $\Phi$ defines a contraction on $\mathcal{C}^\nu_T$ if $T$ is sufficiently small, and thus has a fixed point. Indeed, suppose that $\rho,\rho'\in\mathcal{C}^\nu_T$. Then, for any $x$ and $t$, Item~\ref{item:lipschitz} in Condition~\ref{cond:velocity_field} implies that
\begin{align*}
  |X_t^\rho(x)-X_t^{\rho'}(x)|&\leq \int_0^t |V(X_s^\rho(x),\rho_s)-V(X_s^{\rho'}(x),\rho'_s)|\,ds
  \\&\leq 2M(1+\|\nu\|_{\Pc_2})\int_0^t |X_s^\rho(x)-X_s^{\rho'}(x)|+\mathcal{W}(\rho_s,\rho'_s)\,ds 
  \\&\leq 2M(1+\|\nu\|_{\Pc_2})\left(\int_0^t |X_s^\rho(x)-X_s^{\rho'}(x)|\,ds+T\sup_{s\in[0,T]}\mathcal{W}(\rho_s,\rho'_s)\right).
\end{align*}
Integrating the above over $\nu$ and applying Minkowski's inequality we find that
\[\|X_t^\rho-X_t^{\rho'}\|_{L^2(\nu)}\leq 2M(1+\|\nu\|_{\Pc_2})\left(\int_0^t \|X_s^\rho-X_s^{\rho'}\|_{L^2(\nu)}\,ds+T\mathcal{W}(\rho_s,\rho'_s)\right),\]
thus
\[\mathcal{W}((\Phi(\rho))_t,(\Phi(\rho'))_t)\leq \|X_t^\rho-X_t^{\rho'}\|_{L^2(\nu)}\leq 2TM(1+\|\nu\|_{\Pc_2})e^{2TM(1+\|\nu\|_{\Pc_2})} \sup_{s\in[0,T]}\mathcal{W}(\rho_s,\rho'_s)\]
by Gr\"onwall's inequality. If $T\leq (4M(1+\|\nu\|_{\Pc_2}))^{-1}$ then the prefactor is less than 1, and thus $\Phi$ is indeed a contraction.

As a consequence, the Banach fixed-point theorem implies that there exists a unique $\rho\in\mathcal{C}^\nu_T$ so that $\rho_t=(\Phi(\rho))_t$. Since this $\rho$ satisfies $\rho_t=X_t^\rho\#\nu$, $\rho_t$ is a solution~\eqref{eq:non-linear_continuity_equation} \cite[Lemma~8.1.6.]{ambrosio2005gradient}.

\textit{Global-in-time existence}: Given the restrictions on $T$ in the previous step, the local solution can clearly be extended to a global one as long as $\|\rho_t\|_{\Pc_2}$ remains finite. To this end, we note that
\[\|\rho_t\|_{\Pc_2}=\|X_t^\rho\|_{L^2(\nu)}\leq \|\nu\|_{\Pc_2}+M\int_0^t(1+\|\rho_s\|_{\Pc_2})\,ds,\]
thus Gr\"onwall's inequality implies that
\[\|\rho_t\|_{\mathcal{P}_2}\leq e^{Ct}(\|\nu\|_{\Pc_2}+1),\]
for some $C(M)>0$. We thus conclude both that a solution exists on $[0,\infty)$ and the inequality~\eqref{eq:moment_bound} holds.
\end{proof}

The well-posedness result above immediately yields existence and uniqueness for the associated
finite-$n$ particle system. Indeed, when the initial law is empirical, the unique solution of the
continuity equation is the pushforward of that empirical measure by the characteristic flow, and hence
remains empirical for all times with particle trajectories satisfying the corresponding ODE.

\begin{corollary}\label{cor:ode_wp}
    Let $V$ satisfy the conditions of Theorem~\ref{thm:transport_wp}. Then, for any initial condition $\ux_0^n\in(\R^d)^n$ there exists a unique solution to the ODE
    \begin{equation}\label{eq:ode}
    \frac{d}{dt} x_t^i= V(x_t^i,\mu_t^n),
    \end{equation}
    where $\mu_t^n:=\frac{1}{n}\sum_{i=1}^n\delta_{x_t^i}$. More so, there exists $C(M)>0$ such that $\|\mu_t^n\|_{\Pc_2}\leq e^{Ct}(\|\mu_0^n\|_{\Pc_2}+1)$.
\end{corollary}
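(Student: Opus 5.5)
The plan is to apply Theorem~\ref{thm:transport_wp} with the empirical initial datum $\nu=\mu_0^n=\frac1n\sum_{i=1}^n\delta_{x_0^i}$, and then read off the particle system from the characteristic representation. Let $\rho$ be the unique solution provided by that theorem. Its proof gives $\rho_t=X_t^\rho\#\mu_0^n$, where $X^\rho$ is the flow of the field $x\mapsto V(x,\rho_t)$. This field is in $C([0,\infty),W^{1,\infty}_{loc})$, and by Condition~\ref{cond:velocity_field} it is Lipschitz in $x$ with a constant that is locally bounded in $t$.

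\textbf{Existence.} Set $x_t^i:=X_t^\rho(x_0^i)$. Then $\rho_t=\frac1n\sum_i\delta_{x_t^i}=\mu_t^n$, and $\frac{d}{dt}x_t^i=V(x_t^i,\rho_t)=V(x_t^i,\mu_t^n)$, so \eqref{eq:ode} holds. The moment bound $\|\mu_t^n\|_{\Pc_2}\le e^{Ct}(\|\mu_0^n\|_{\Pc_2}+1)$ is exactly \eqref{eq:moment_bound} with $\nu=\mu_0^n$.

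\textbf{Uniqueness.} I would argue directly at the level of the ODE, which avoids distinguishing coinciding particles. Let $Y^n_t=(y_t^1,\dots,y_t^n)$ be another solution on $[0,T]$ with the same initial data, and write $\eta_t$ for its empirical measure. Using the diagonal coupling,
\[
\mathcal W(\mu_t^n,\eta_t)\le \Big(\frac1n\sum_i|x_t^i-y_t^i|^2\Big)^{1/2}=:D_t .
\]
Both trajectories are continuous, so $\sup_{[0,T]}\|\mu_t^n\|_{\Pc_2}\le K$ for some finite $K$. Item~\ref{item:lipschitz} of Condition~\ref{cond:velocity_field} then gives
\[
|\tfrac{d}{dt}(x_t^i-y_t^i)|\le M(1+K)\big(|x_t^i-y_t^i|+D_t\big).
\]
Squaring, averaging over $i$, and applying Gr\"onwall's inequality to $D_t^2$ with $D_0=0$ yields $D\equiv0$ on $[0,T]$. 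Since $T$ is arbitrary, the solution is unique.

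A subtle point is that solutions of \eqref{eq:ode} are a priori only local. This is harmless: existence is already global by the construction above, and the uniqueness argument works on any interval where both solutions exist. That step is the only place where care is needed. Otherwise the result follows immediately from Theorem~\ref{thm:transport_wp}.
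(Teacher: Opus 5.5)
Your existence argument is the same as the paper's. You take the solution of Theorem~\ref{thm:transport_wp} with empirical initial datum, write it as $X_t\#\mu_0^n$, and set $x_t^i=X_t(x_0^i)$. The moment bound then comes straight from \eqref{eq:moment_bound}.

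For uniqueness you take a different and more elementary route, and it is correct. The paper argues at the measure level. The empirical measure $\rho'_t$ of any other solution solves the nonlinear continuity equation, so $\rho'=\rho$ by uniqueness for Theorem~\ref{thm:transport_wp}, and the paper concludes $y_t^i=x_t^i$ ``immediately.'' That route ties the particle system directly to the PDE; it is the same observation that later turns Theorem~\ref{thm:stability} into a mean-field statement. It does, however, leave two steps implicit:
\begin{itemize}
\item that $\rho'$ lies in the class where the PDE solution is unique, which needs a global-in-time solution or restarting the contraction on short intervals;
\item that equal unlabeled empirical measures give equal labeled trajectories, which needs ODE uniqueness for the frozen Lipschitz field $x\mapsto V(x,\rho_t)$.
\end{itemize}

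Your Gr\"onwall estimate on $D_t$ avoids both. It uses only item~\ref{item:lipschitz} of Condition~\ref{cond:velocity_field} and the diagonal-coupling bound $\mathcal{W}(\mu_t^n,\eta_t)\le D_t$, and it works on any common interval of existence. In effect it shows that the $n$-particle vector field is locally Lipschitz on $(\R^d)^n$, which is the same estimate the paper uses later in the proof of Proposition~\ref{prop:ODE_approx}.

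One small correction: your remark about coinciding particles is not a real obstruction for the paper's route. Once $\rho'=\rho$, each $y^i$ follows the flow of the fixed Lipschitz field, and that settles it even when initial points coincide.
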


\begin{proof} Let $\nu=\frac{1}{n}\sum_{i=1}^n\delta_{x_0^i}\in \Pc_2(\R^d)$ and $\rho$ be the (unique) solution of~\eqref{eq:non-linear_continuity_equation} with initial conditions $\nu$. Then, since $v\in C([0,\infty),W^{1,\infty}(\R^d))$, $\rho_t=X_t\#\nu$ where $X_t$ is the flow map
\[
\begin{cases}
   \frac{d}{dt}X_t(x)=v(t,X_t(x)),
   \\X_0(x)=x.
\end{cases}\]
Letting $x_t^i:=X_t(x_0^i)$ we thus find that $\rho_t=\frac{1}{n}\sum_{i=1}^n\delta_{x_t^i}$ and
\[\frac{d}{dt}x_t^i=v(t,x_t ^i)=V(x_t^i,\mu_t^n).\]
We have thus shown that there exists a solution to the ODE~\eqref{eq:ode}.

Suppose that $\uy_t^n$ is another solution of~\eqref{eq:ode}. Then $\rho_t':=\frac{1}{n}\sum_{i=1}^n\delta_{y_i^t}$ defines a solution to~\eqref{eq:non-linear_continuity_equation}, hence $\rho=\rho'$ by the uniqueness of solutions to~\eqref{eq:non-linear_continuity_equation}. This immediately implies that $x_t^i=y_t^i$.
\end{proof}

\subsection{Stability}\label{subsec:stability}

The next result quantifies continuous dependence on the initial law in $\mathcal{P}_2(\R^d)$. The proof is a standard Dobru\v{s}hin’s coupling argument (see~\cite{MR541637,MR3468297}), using the moment bound from Theorem~\ref{thm:transport_wp} to control the Lipschitz constant in time.

\begin{theorem}\label{thm:stability}
Suppose $V$ satisfies Condition~\ref{cond:velocity_field} and $\rho$ and $\rho'$ are the unique solutions to~\eqref{eq:non-linear_continuity_equation} with initial conditions respectively given by $\nu$ and $\mu$ in $\Pc_2(\R^d)$. Then, for all $T>0$ there exists $C(M,T,\|\nu\|_{\Pc_2})>0$ so that
\[\sup_{t\in[0,T]}\mathcal{W}(\rho_t,\rho'_t)\leq C\mathcal{W}(\nu,\mu).\]
\end{theorem}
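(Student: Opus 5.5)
We use Dobrushin's synchronous coupling along the characteristic flows given by Theorem~\ref{thm:transport_wp}. Since $v^\rho(t,x)=V(x,\rho_t)$ and $v^{\rho'}(t,x)=V(x,\rho'_t)$ are Lipschitz in $x$ for each $t$, the proof of Theorem~\ref{thm:transport_wp} provides flow maps $X_t$ and $Y_t$. These satisfy $\rho_t=X_t\#\nu$ and $\rho'_t=Y_t\#\mu$, with $\frac{d}{dt}X_t(x)=V(X_t(x),\rho_t)$ and $\frac{d}{dt}Y_t(y)=V(Y_t(y),\rho'_t)$. Fix an optimal coupling $\gamma\in\Gamma(\nu,\mu)$. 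Then $(X_t\times Y_t)\#\gamma$ is a coupling of $\rho_t$ and $\rho'_t$, so
\[
\mathcal{W}(\rho_t,\rho'_t)\le D_t:=\Big(\int |X_t(x)-Y_t(y)|^2\,d\gamma(x,y)\Big)^{1/2}.
\]

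First, we bound the Lipschitz factor uniformly on $[0,T]$. The moment bound~\eqref{eq:moment_bound} gives $\sup_{t\le T}\|\rho_t\|_{\Pc_2}\le e^{CT}(\|\nu\|_{\Pc_2}+1)=:K$, where $K$ depends only on $M$, $T$ and $\|\nu\|_{\Pc_2}$. We take $\rho$, the solution started from $\nu$, as the measure in the first slot of Item~\ref{item:lipschitz} of Condition~\ref{cond:velocity_field}. This matters because that condition is asymmetric: the factor involves $\|\mu\|_{\Pc_2}$ of the first argument only. With this ordering the constant does not depend on $\|\mu\|_{\Pc_2}$, as the statement requires. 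For each pair $(x,y)$ we then have
\[
\Big|\tfrac{d}{dt}(X_t(x)-Y_t(y))\Big|\le M(1+K)\big(|X_t(x)-Y_t(y)|+\mathcal{W}(\rho_t,\rho'_t)\big)\le M(1+K)\big(|X_t(x)-Y_t(y)|+D_t\big).
\]

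Next, we integrate in time and take the $L^2(\gamma)$ norm. We get $|X_t(x)-Y_t(y)|\le |x-y|+M(1+K)\int_0^t\big(|X_s-Y_s|+D_s\big)\,ds$. Applying Minkowski's inequality in $L^2(\gamma)$ gives
\[
D_t\le \mathcal{W}(\nu,\mu)+2M(1+K)\int_0^t D_s\,ds.
\]
We need $D_s$ to be finite and measurable in $s$. This holds because both flows have finite second moments locally in time, by~\eqref{eq:moment_bound} applied to $\mu$ as well; this is only qualitative and does not enter the constant. Gr\"onwall's inequality then yields $D_t\le e^{2M(1+K)T}\mathcal{W}(\nu,\mu)$, and hence the claim with $C=e^{2M(1+K)T}$.

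The only delicate step is the use of the asymmetric Lipschitz bound. The coupling must be oriented so that the growth factor involves $\rho$ (from $\nu$) rather than $\rho'$. Once this ordering is fixed, the rest is routine.
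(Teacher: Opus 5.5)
Your proposal is correct and follows essentially the same Dobrushin synchronous-coupling argument as the paper, with the Lipschitz factor controlled through the moment bound for $\rho$ (the solution started from $\nu$). The only difference is cosmetic. You fix an optimal coupling and use $\mathcal{W}(\rho_s,\rho'_s)\le D_s$ to close with a single Gr\"onwall step. The paper instead applies Gr\"onwall to $D_t(\gamma)$ for an arbitrary coupling, minimizes over $\gamma$, and then applies Gr\"onwall a second time to $\mathcal{W}(\rho_t,\rho'_t)$.
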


\begin{proof}
Let $X_t^\rho$ and $X_t^{\rho'}$ be the flow maps so that $\rho_t=X_t^\rho\#\nu$ and $\rho'_{t}=X^{\rho'}_t\#\mu,$ and fix $T>0$.  Then, it holds that for all $t\leq T$
\begin{align*}
|X_t^\rho(x)-X_t^{\rho'}(y)|&\leq |x-y|+\int_0^t |V(X_s^\rho(x),\rho_s)-V(X_s^{\rho'}(y),\rho'_s)|\,ds.
\\&\leq |x-y|+M\int_0^t(1+\|\rho_s\|_{\Pc_2})\big(|X_s^\rho(x)-X_s^{\rho'}(y)|+\mathcal{W}(\rho_s,\rho'_s)\big)\,ds
\\&\leq |x-y|+C\int_0^t |X_s^\rho(x)-X_s^{\rho'}(y)|+\mathcal{W}(\rho_s,\rho'_s)\,ds,
\end{align*}
for some $C>0$. Thus, for any coupling $\gamma\in\Gamma(\mu,\nu)$, letting
\[D_t(\gamma):=\left(\int_{(\R^d)^2}|X_t^\rho(x)-X_t^{\rho'}(y)|^2\,d\gamma(x,y)\right)^{1/2},\]
Minkowski's inequality implies that
\begin{align*}
D_t(\gamma)\leq D_0(\gamma)+C\int_0^t\mathcal{W}(\rho_s,\rho'_s)+D_s(\gamma)\,ds.
\end{align*}
Applying Gr\"onwall's inequality we thus find that for all $t\in[0,T]$
\[ D_t(\gamma)\leq C\left(D_0(\gamma)+\int_0^t\mathcal{W}(\rho_s,\rho'_s)\,ds)\right).\]
Minimizing over all $\gamma$, we have that
\[\mathcal{W}(\rho_t,\rho'_t)\leq C\left(\mathcal{W}(\mu,\nu)+C\int_0^t\mathcal{W}(\rho_s,\rho'_s)\,ds\right),\]
and applying Gr\"onwall's inequality once more we conclude. By keeping track of the constants, we see that $C$ depends on $M$, $T$, and $\|\nu\|_{\Pc_2}$. 
\end{proof}

\subsection{Stochastic scheme convergence}\label{subsec:stoc_conv}

Finally, we prove the convergence of a stochastic descent schemes for velocity fields of the form~\eqref{eq:V_spherical_average}. For this purpose, we compare three dynamics: the exact particle system $\ux_t^n$ driven by $V$, its deterministic
time-discretization $\uy_t^{n,\tau}$, and the stochastic scheme $\ux_t^{n,\tau}$ obtained by replacing the spherical average defining $V$ with a single i.i.d.\
random direction at each step. Our goal is to control the cumulative error over a fixed horizon $T$
by bounding both the time-discretization error and the direction-sampling error.

We first define the (forward Euler) time discretization of~\eqref{eq:ode}

\begin{definition}
For $\tau\in(0,1]$ and initial conditions $\uy_0^n\in (\R^d)^n$, let $\uy_t^{n,\tau}$ be the process inductively defined by
\begin{equation}\label{eq:ODE_approx}
y_t^{i,\tau}=y_{m\tau}^{i,\tau}+(t-m\tau)V(y_{m\tau}^{i,\tau},\nu_{m\tau}^{n,\tau}),
\end{equation}
when $t\in(m\tau,(m+1)\tau]$ where $\nu_t^{n,\tau}:=\frac{1}{n}\sum_{i=1}^n\delta_{y_t^{i,\tau}}$.
\end{definition}

We then bound the error between the forward Euler approximation and the actual ODE solution. This is essentially standard, using the moment bound from Theorem~\ref{thm:transport_wp} to control the Lipschitz constant in time.

\begin{proposition}\label{prop:ODE_approx}
Suppose that $V$ satisfies Condition~\ref{cond:velocity_field}. Then, if
$\ux_t^n$ and $\uy_t^n$ are solutions to~\eqref{eq:ode} and~\eqref{eq:ODE_approx} with the same initial condition $\ux_0^n\in(\R^d)^n$, for all $T\geq 0$ there exists $C(M,T,\|\mu_0^n\|_{\Pc_2})>0$ so that
\[\sup_{t\in[0,T]}\left(\frac{1}{n}\sum_{i=1}^n|x_t^i-y_t^{i,\tau}|^2\right)^{1/2}\leq C\tau.\]
\end{proposition}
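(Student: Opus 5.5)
We compare the two trajectories in the particle-averaged norm. Write $e_t:=\big(\frac1n\sum_i|x_t^i-y_t^{i,\tau}|^2\big)^{1/2}$, and note that $\mathcal{W}(\mu_t^n,\nu_t^{n,\tau})\le e_t$ by using the coupling that pairs particle $i$ with particle $i$.

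The first step is a priori bounds on both systems over $[0,T]$. For the exact system, Corollary~\ref{cor:ode_wp} gives $\|\mu_t^n\|_{\Pc_2}\le e^{CT}(\|\mu_0^n\|_{\Pc_2}+1)=:K$. For the Euler scheme we use item~\ref{item:bounded} of Condition~\ref{cond:velocity_field}. This gives $|y_t^{i,\tau}|\le|y^{i,\tau}_{m\tau}|+\tau M(1+\|\nu^{n,\tau}_{m\tau}\|_{\Pc_2})$, and by Minkowski
\[\|\nu^{n,\tau}_{(m+1)\tau}\|_{\Pc_2}\le(1+M\tau)\|\nu^{n,\tau}_{m\tau}\|_{\Pc_2}+M\tau.\]
Iterating with $\tau\le1$ yields $\sup_{t\le T}\|\nu_t^{n,\tau}\|_{\Pc_2}\le e^{MT}(\|\mu_0^n\|_{\Pc_2}+1)$, which we also call $K$ after enlarging it. Consequently both velocities are bounded by $M(1+K)$.

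The second step bounds the local consistency error. For $t\in(m\tau,(m+1)\tau]$ write
\[x_t^i-y_t^{i,\tau}=x_{m\tau}^i-y_{m\tau}^{i,\tau}+\int_{m\tau}^t\big[V(x_s^i,\mu_s^n)-V(y_{m\tau}^{i,\tau},\nu_{m\tau}^{n,\tau})\big]ds.\]
We split the integrand through the intermediate point $(x^i_{m\tau},\mu^n_{m\tau})$:
\[\big[V(x_s^i,\mu_s^n)-V(x^i_{m\tau},\mu^n_{m\tau})\big]+\big[V(x^i_{m\tau},\mu^n_{m\tau})-V(y_{m\tau}^{i,\tau},\nu_{m\tau}^{n,\tau})\big].\]
By item~\ref{item:lipschitz} and the bound $\|\mu\|_{\Pc_2}\le K$, the first bracket is at most $M(1+K)\big(|x_s^i-x_{m\tau}^i|+\mathcal{W}(\mu_s^n,\mu_{m\tau}^n)\big)$. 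Because the velocity is bounded, every particle moves at most $M(1+K)(s-m\tau)$, so $\mathcal{W}(\mu_s^n,\mu_{m\tau}^n)$ obeys the same bound. The first bracket is therefore $\le C\tau$. By item~\ref{item:lipschitz} again, the second bracket is at most $M(1+K)\big(|x^i_{m\tau}-y^{i,\tau}_{m\tau}|+e_{m\tau}\big)$.

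The third step is a discrete Grönwall argument. Taking the $\frac1n\sum_i|\cdot|^2$ norm and applying Minkowski gives
\[e_t\le e_{m\tau}(1+2M(1+K)(t-m\tau))+C\tau(t-m\tau).\]
In particular $e_{(m+1)\tau}\le(1+C\tau)e_{m\tau}+C\tau^2$ with $e_0=0$. Iterating over the $\lceil T/\tau\rceil$ steps gives $e_{m\tau}\le C\tau^2\sum_{j<m}(1+C\tau)^j\le \tau e^{CT}$. The within-step bound then extends the estimate to all $t\in[0,T]$, which proves the claim.

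The main subtlety is that the Lipschitz constant $M(1+\|\mu\|_{\Pc_2})$ is only local. So the uniform moment bound for the discrete scheme has to be established first, and the Lipschitz estimate must be applied with $\mu$ taken to be the exact measure $\mu^n$, or with $K$ bounding both measures; the first step provides this. The remaining steps are routine.
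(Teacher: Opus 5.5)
Your proof is correct and follows the same route as the paper. Both arguments use the moment bound for the exact particle system (Theorem~\ref{thm:transport_wp}/Corollary~\ref{cor:ode_wp}) to make the Lipschitz constant in Condition~\ref{cond:velocity_field} uniform along the exact trajectory, and then run the forward-Euler error estimate. The only difference is that the paper cites the classical Euler error bound, while you write out the local-consistency and discrete Gr\"onwall steps explicitly. Your moment bound for the Euler iterates is true but not needed, since you always put the exact measure $\mu^n$ in the first slot of the Lipschitz estimate.
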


\begin{proof}
For convenience, let $t_m=\tau m$. Let $u^n:(\R^d)^n\rightarrow \R^d$ be the velocity field with $i$-th component given by
\[u^i(\ux^n)=V\left(x^i,\frac{1}{n}\sum_{j=1}^n\delta_{x^j}\right).\]
Then
\[\frac{d}{dt}x_t^i=u^i(\ux_t^n)\]
and
\[y^{i,\tau}_t=y^{i,\tau}_{t_m}+(t-t_m)u^i(\uy^{n,\tau}_{t_m}),\]
thus $y_t^{n,\tau}$ is the forward Euler scheme for $x_t^{n}$ with step size $\tau$.

Next, we note that since $\mu_t^n$ is the unique solution to~\eqref{eq:non-linear_continuity_equation} with initial condition $\nu=\frac{1}{n}\sum_{i=1}^n\delta_{x_0^i}$, Theorem~\ref{thm:transport_wp} implies that there exists $C>0$ so that $\sup_{t\in[0,T]}\|\mu_t^n\|_{\Pc_2}\leq C$. The definition of $u^i$ thus imply that
\begin{align*}
 |u^i(\ux_t^n)-u^i(\uy_t^{n,\tau})|&\leq C\left(|x_t^i-y_t^{i,\tau}|+\mathcal{W}(\mu_t^n,\nu_t^{n,\tau})\right)
 \\&\leq C\left(|x^i_t-y_t^{i,\tau}|+\left(\frac{1}{n}\sum_{j=1}^n|x_t^{j,\tau}-y_t^{j,\tau}|^2\right)^{1/2}\right),   
\end{align*}
hence
\[|u^n(\ux_t^n)-u^n(\uy_t^{n,\tau})|\leq C|\ux_t^n-\uy_t^{n,\tau}|.\]
By classical bounds on the error generated by forward Euler schemes (see, e.g., \cite[Chapter~I, Section~I.7]{HairerNorsettWanner1993}), we obtain
\[\sup_{t\in[0,T]}\left(\frac{1}{n}\sum_{i=1}^n|x_t^i-y_t^{i,\tau}|^2\right)^{1/2}\leq C\tau.\]
By keeping track of the constants, we see that $C$ only depends on $M$, the time horizon $T$, and the second moment of the initial conditions $\|\mu_0^n\|_{\Pc_2}$.
\end{proof}

Next, we define the corresponding stochastic approximation when $V$ is of the form~\eqref{eq:V_spherical_average}.

\begin{definition}
For $\tau\in(0,1]$ and initial conditions $\uy_0^N\in (\R^d)^N$, let $\uy_t^N$ be the process inductively defined by
\begin{equation}\label{eq:stoc_approx}
x_t^{i,\tau}=x_{m\tau}^{i,\tau}+(t-m\tau)u(\theta_m,x_{m\tau}^{i,\tau}\cdot\theta_m,\mu_{m\tau}^{n,\tau}),
\end{equation}
when $t\in(m\tau,(m+1)\tau]$ where $\mu_t^{n,\tau}:=\frac{1}{n}\sum_{i=1}^n\delta_{x_t^{i,\tau}}$ and $\{\theta_m\}_{m\geq 0}$ are i.i.d.\ uniformly sampled from $\S^{d-1}$.
\end{definition}

The following moment bounds for $\uy_t^{n,\tau}$ and $\ux_t^{n,\tau}$ will be necessary for the main result.

\begin{lemma}\label{lem:discrete_a_priori}
Suppose that $u$ satisfies Condition~\ref{cond:theta_velocity_field}, $V$ is defined by~\eqref{eq:V_spherical_average}, and $\uy_t^{n,\tau}$ and $\ux_t^{n,\tau}$ are respectively given by~\eqref{eq:ODE_approx} and~\eqref{eq:stoc_approx}. Then, there exists $C(M)>0$ so that for all $t\geq 0$
\[\|\nu_t^{n,\tau}\|_{\Pc_2}\leq e^{Ct}(\|\mu_0^{n}\|_{\Pc_2}+1)\quad\text{and}\quad \|\mu_t^{n,\tau}\|_{\Pc_2}\leq e^{Ct}(\|\mu_0^{n}\|_{\Pc_2}+1).\]

\end{lemma}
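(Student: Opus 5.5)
The argument is a discrete Grönwall estimate carried out step by step. Both schemes are piecewise linear interpolations, so it suffices to bound the second moments at the grid times $t_m=m\tau$ and then interpolate. Let $a_m$ denote either $\|\nu_{t_m}^{n,\tau}\|_{\Pc_2}$ or $\|\mu_{t_m}^{n,\tau}\|_{\Pc_2}$.

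The first step is a uniform velocity bound. By Item~\ref{item:theta_bounded} of Condition~\ref{cond:theta_velocity_field}, every direction satisfies
\[|u(\theta,p,\mu)|\le M(1+\|\mu\|_{\Pc_2}).\]
Averaging over $\theta$ in~\eqref{eq:V_spherical_average} gives the same bound for $V$:
\[|V(x,\mu)|\le\fint|u|\,d\theta\le M(1+\|\mu\|_{\Pc_2}).\]
This bound holds uniformly in $x$ and, for the stochastic scheme, for every realization of $\theta_m$. Consequently all estimates below are pathwise (almost sure) and no expectation is needed.

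The second step is the one-step recursion. For $t\in(t_m,t_{m+1}]$, each particle moves by at most $(t-t_m)M(1+a_m)$ in norm. Minkowski's inequality in $L^2$ of the empirical measure then gives
\[\|\mu_t^{n,\tau}\|_{\Pc_2}\le a_m+(t-t_m)M(1+a_m),\]
and the same inequality holds for $\nu_t^{n,\tau}$. At $t=t_{m+1}$ this reads
\[1+a_{m+1}\le(1+M\tau)(1+a_m).\]
Iterating, and using $a_0=\|\mu_0^n\|_{\Pc_2}$ since both schemes share the initial condition, yields
\[1+a_m\le(1+M\tau)^m(1+a_0)\le e^{Mt_m}(1+a_0).\]

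For intermediate times, the interpolation bound gives
\[1+\|\mu_t^{n,\tau}\|_{\Pc_2}\le(1+M(t-t_m))(1+a_m)\le e^{M(t-t_m)}e^{Mt_m}(1+a_0)=e^{Mt}(1+a_0).\]
Hence the claim holds with $C=M$. Note that $\tau\le1$ is not even needed for this step.

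There is no genuine obstacle in this argument. The only point to check is that the bound used is the \emph{uniform-in-$\theta$} bound of Condition~\ref{cond:theta_velocity_field}, not merely a bound on the average $V$. It is precisely this uniformity that makes the stochastic scheme controllable deterministically, with a constant independent of $d$ and of the sampled directions.
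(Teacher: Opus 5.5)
Your proposal is correct and follows essentially the same argument as the paper: the linear-growth bound on the velocity, a one-step Minkowski recursion for the second moment, and iteration of $(1+C\tau)$ factors into $e^{Ct}$. Your version is slightly more careful than the paper's sketch in two places. You use $t-t_m$ rather than $\tau$ at intermediate times, which is what makes the bound of the form $e^{Ct}$ hold for all $t\ge0$. You also note explicitly that the uniform-in-$\theta$ bound makes the estimate for the stochastic scheme pathwise; the paper covers this case only with ``follows almost identically.''
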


\begin{proof}
First, we note that by assumption,
\[|y_t^{i,\tau}|\leq |y_{t_m}^{i,\tau}|+\tau|u^i(\uy_{t_m}^{n,\tau})|=|y_{t_m}^{i,\tau}|+\tau |V(y_{t_m}^{i,\tau},\nu_{t_m}^{n,\tau})|\leq |y_{t_m}^{i,\tau}|+\tau C(1+\|\nu_{t_m}^{n,\tau}\|_{\Pc_2})\]
thus,
\[\|\nu_t^{n,\tau}\|_{\Pc_2}\leq (1+C\tau)\|\nu_{t_m}^{n,\tau}\|_{\Pc_2}+C\tau.\]
Iterating this bound, we find that 
\[\|\nu_t^{n,\tau}\|_{\Pc_2}\leq e^{Ct}(\|\nu_0^{n,\tau}\|_{\Pc_2}+1)\] 
for some $C(M)>0$ as claimed. The bound for $\mu_t^{n,\tau}$ follows almost identically.
\end{proof}

We now state the main result of this subsection.

\begin{theorem}\label{thm:sgd_conv}
Suppose that $u$ satisfies Condition~\ref{cond:theta_velocity_field}, $V$ is defined by~\eqref{eq:V_spherical_average}, and $\ux_t^n$ and $\ux_t^{n,\tau}$ are respectively solutions to~\eqref{eq:ode} and~\eqref{eq:stoc_approx} with the same initial condition $\ux_0^n\in(\R^d)^n$. Then, for all $T\geq 0$ there exists $C(M,T,\|\mu_0^n\|_{\Pc_2})>0$ such that
\begin{equation}
\sup_{t\in[0,T]}\mathbb{E}\left[\frac{1}{n}\sum_{i=1}^n|x_t^i-x_t^{i,\tau}|^2 \right]\leq C\tau .
\end{equation}
\end{theorem}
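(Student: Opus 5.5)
We split the error through the deterministic Euler scheme $\uy_t^{n,\tau}$ of~\eqref{eq:ODE_approx}, started from the same $\ux_0^n$. Since
\[\frac1n\sum_i|x_t^i-x_t^{i,\tau}|^2\leq \frac2n\sum_i|x_t^i-y_t^{i,\tau}|^2+\frac2n\sum_i|y_t^{i,\tau}-x_t^{i,\tau}|^2,\]
Proposition~\ref{prop:ODE_approx} bounds the first term by $C\tau^2\leq C\tau$ (recall $\tau\le1$). It therefore suffices to show that $\E\big[\frac1n\sum_i|y_t^{i,\tau}-x_t^{i,\tau}|^2\big]\leq C\tau$. Between grid points both processes are linear interpolations, so it is enough to control the grid values $t_m=m\tau\leq T$ and then add an $O(\tau)$ interpolation correction.

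At the grid points set $e_m^i:=x_{t_m}^{i,\tau}-y_{t_m}^{i,\tau}$ and $E_m:=\frac1n\sum_i|e_m^i|^2$. We decompose
\[e_{m+1}^i=e_m^i+\tau\big(V(x_{t_m}^{i,\tau},\mu_{t_m}^{n,\tau})-V(y_{t_m}^{i,\tau},\nu_{t_m}^{n,\tau})\big)+\tau\xi_m^i,\qquad \xi_m^i:=u(\theta_m,x^{i,\tau}_{t_m}\cdot\theta_m,\mu^{n,\tau}_{t_m})-V(x^{i,\tau}_{t_m},\mu^{n,\tau}_{t_m}).\]
Let $\mathcal F_m$ be the $\sigma$-algebra generated by $\theta_0,\dots,\theta_{m-1}$. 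The state $\ux^{n,\tau}_{t_m}$ is $\mathcal F_m$-measurable and $\theta_m$ is independent of $\mathcal F_m$, so $\E[\xi_m^i\mid\mathcal F_m]=0$ by~\eqref{eq:V_spherical_average}. Condition~\ref{cond:theta_velocity_field} gives $|\xi_m^i|\leq 2M(1+\|\mu_{t_m}^{n,\tau}\|_{\Pc_2})$. By Lemma~\ref{lem:discrete_a_priori}, $\|\mu_{t_m}^{n,\tau}\|_{\Pc_2}$ and $\|\nu_{t_m}^{n,\tau}\|_{\Pc_2}$ are bounded deterministically on $[0,T]$, since that lemma's proof uses only the uniform bound and not the particular $\theta_m$. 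Hence $|\xi_m^i|\leq C$ almost surely, with $C$ independent of $d$.

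Next we expand the square. The drift difference is bounded, via Condition~\ref{cond:velocity_field} (which follows from Condition~\ref{cond:theta_velocity_field} by averaging) together with $\mathcal W(\mu^{n,\tau}_{t_m},\nu^{n,\tau}_{t_m})\leq E_m^{1/2}$ from the index coupling, by $C(|e_m^i|+E_m^{1/2})$. Here the prefactor $M(1+\|\mu\|_{\Pc_2})$ is again deterministically bounded. The cross term $2\tau\,\E[e_m^i\cdot\xi_m^i]$ vanishes by conditioning on $\mathcal F_m$. The remaining terms are $\tau^2$ times the squared drift difference, which is at most $C\tau^2E_m$, together with cross terms of order $\tau^2$ that we bound by Young's inequality. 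This yields
\[\E E_{m+1}\leq (1+C\tau)\E E_m+C\tau^2,\qquad E_0=0,\]
and the discrete Gronwall inequality gives $\E E_m\leq C\tau\, e^{CT}$ for $m\tau\le T$.

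For $t\in(t_m,t_{m+1}]$, each interpolated difference differs from $e_m^i$ by at most $\tau$ times a deterministically bounded velocity. This adds $O(\tau^2)$ to the bound and completes the estimate.

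The main obstacle is making the martingale cancellation rigorous while keeping the constants independent of $d$ and free of randomness. This needs the deterministic moment bound from Lemma~\ref{lem:discrete_a_priori}, so that the local Lipschitz constants of Conditions~\ref{cond:velocity_field} and~\ref{cond:theta_velocity_field} are uniform along every realization. It also needs care that $e_m^i$ is $\mathcal F_m$-measurable, which holds because the Euler iterate $y$ is deterministic. The rate is $\tau$ rather than $\tau^2$ precisely because the noise contributes $\tau^2\E|\xi|^2$ per step over $T/\tau$ steps.
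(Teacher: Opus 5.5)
Your proposal is correct and follows essentially the same route as the paper. You split through the deterministic Euler scheme $\uy^{n,\tau}_t$ using Proposition~\ref{prop:ODE_approx}, cancel the cross term by the independence of $\theta_m$ from the state at $t_m$, bound the drift difference and the noise via Condition~\ref{cond:theta_velocity_field} and the deterministic moment bounds of Lemma~\ref{lem:discrete_a_priori}, and iterate the recursion $(1+C\tau)E_m+C\tau^2$ over $T/\tau$ steps. The only cosmetic difference is that the paper obtains the one-step recursion by differentiating $|y_t^{i,\tau}-x_t^{i,\tau}|^2$ on each interval $[t_m,t_{m+1}]$, whereas you expand the square of the one-step update at grid points and treat the interpolation separately.
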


\begin{proof}
For all $T>0$, Proposition~\ref{prop:ODE_approx} implies that there exists $C>0$ so that
\begin{align*}
\sup_{t\in[0,T]}\mathbb{E}\left[\frac{1}{n}\sum_{i=1}^n|x_t^i-x_t^{i,\tau}|^2\right]&\leq 2\sup_{t\in[0,T]}\frac{1}{n}\sum_{i=1}^n|x_t^i-y_t^{i,\tau}|^2+2\sup_{t\in[0,T]}\mathbb{E}\left[\frac{1}{n}\sum_{i=1}^n|y_t^{i,\tau}-x_t^{i,\tau}|^2\right] \\&\leq C\tau^2+2\sup_{t\in[0,T]}\mathbb{E}\left[\frac{1}{n}\sum_{i=1}^n|y_t^{i,\tau}-x_t^{i,\tau}|^2\right].
\end{align*}
We thus only need to bound the last term in the last line.

By the definition of $V$,
\[y_t^{i,\tau}=y_{t_m}^{i,\tau}+(t-t_m)\fint_{\S^{d-1}}u(\theta,y_{t_m}^{i,\tau}\cdot\theta,\nu_{t_m}^{    n   ,\tau})\,d\theta\]
for all $t\in[t_m,t_{m+1}]$. Differentiating, we thus have that
\begin{align}\label{eq:sgd_conv_1}\notag \frac{d}{dt}|y_t^{i,\tau}-x_t^{i,\tau}|^2&=(y_t^{i,\tau}-x_t^{i,\tau})\left(\fint_{\S^{d-1}}u(\theta,y_{t_m}^{i,\tau}\cdot\theta,\nu_{t_m}^{n,\tau})\,d\theta-u(\theta_m,x_{t_m}^{i,\tau}\cdot\theta_m,\mu_{t_m}^{n,\tau})\right)
\\\notag & =(y_{t_m}^{i,\tau}-x_{t_m}^{i,\tau})\left(\fint_{\S^{d-1}}u(\theta,y_{t_m}^{i,\tau}\cdot\theta,\nu_{t_m}^{n,\tau})\,d\theta-u(\theta_m,x_{t_m}^{i,\tau}\cdot\theta_m,\mu_{t_m}^{n,\tau})\right)
\\&\quad+(t-t_m)\left(\fint_{\S^{d-1}}u(\theta,y_{t_m}^{i,\tau}\cdot\theta,\nu_{t_m}^{n,\tau})\,d\theta-u(\theta_m,x_{t_m}^{i,\tau}\cdot\theta_m,\mu_{t_m}^{n,\tau})\right)^2.
\end{align}
Let $\mathbb{E}_m$ denote taking expectation with respect to $\theta_m$. Then
\begin{align*}
&\mathbb{E}_m\left[(y_{t_m}^{i,\tau}-x_{t_m}^{i,\tau})\left(\fint_{\S^{d-1}}u(\theta,y_{t_m}^{i,\tau}\cdot\theta,\nu_{t_m}^{n,\tau})\,d\theta-u(\theta_m,x_{t_m}^{i,\tau}\cdot\theta_m,\mu_{t_m}^{n,\tau})\right)\right]
\\&\quad=(y_{t_m}^{i,\tau}-x_{t_m}^{i,\tau})\fint_{\S^{d-1}}\left(u(\theta,y_{t_m}^{i,\tau}\cdot\theta,\nu_{t_m}^{n,\tau})-u(\theta,x_{t_m}^{i,\tau}\cdot\theta,\mu_{t_m}^{n,\tau})\right)\,d\theta
\end{align*}
where we've used that $x_{t_m}^{i,\tau}$ is independent of $\theta_m$. Condition~\ref{cond:theta_velocity_field} and Lemma~\ref{lem:discrete_a_priori} imply that
\[\bigg|\fint_{\S^{d-1}}\left(u(\theta,y_{t_m}^{i,\tau}\cdot\theta,\nu_{t_m}^{n,\tau})-u(\theta,x_{t_m}^{i,\tau}\cdot\theta,\mu_{t_m}^{n,\tau})\right)\,d\theta\bigg|\leq C(|y_{t_m}^{i,\tau}-x_{t_m}^{i,\tau}|+\mathcal{W}(\nu_{t_m}^{n,\tau},\mu_{t_m}^{n,\tau})),\]
hence
\begin{multline}\label{eq:sgd_conv_2}
\bigg|\mathbb{E}_m\left[(y_{t_m}^{i,\tau}-x_{t_m}^{i,\tau})\left(\fint_{\S^{d-1}}u(\theta,y_{t_m}^{i,\tau}\cdot\theta,\nu_{t_m}^{n,\tau})\,d\theta-u(\theta_m,x_{t_m}^{i,\tau}\cdot\theta_m,\mu_{t_m}^{n,\tau})\right)\right]\bigg|\\\leq C|y_{t_m}^{i,\tau}-x_{t_m}^{i,\tau}|^2+C\frac{1}{n}\sum_{j=1}^n |y_{t_m}^{j,\tau}-x_{t_m}^{j,\tau}|^2.
\end{multline}
Condition~\ref{cond:theta_velocity_field} and Lemma~\ref{lem:discrete_a_priori} imply that
\begin{equation}\label{eq:sgd_conv_3}
\left(\fint_{\S^{d-1}}u(\theta,y_{t_m}^{i,\tau}\cdot\theta,\nu_{t_m}^{n,\tau})\,d\theta-u(\theta_m,x_{t_m}^{i,\tau}\cdot\theta_m,\mu_{t_m}^{n,\tau})\right)^2\leq C.
\end{equation}
Averaging over $i$ and combining~\eqref{eq:sgd_conv_1}-\eqref{eq:sgd_conv_3} we have thus found that
\[\sup_{t\in[t_m,t_{m+1}]}\mathbb{E}_m\left[\frac{1}{n}\sum_{i=1}^n|y_{t}^{i,\tau}-x_{t}^{i,\tau}|^2\right]\leq (1+\tau C)\frac{1}{n}\sum_{i=1}^n|y_{t_m}^{i,\tau}-x_{t_m}^{i,\tau}|^2+C\tau^2.\]
Iterating this bound, we have
\[\sup_{t\in[0,T]}\mathbb{E}\left[\frac{1}{n}\sum_{i=1}^n|y_{t_m}^{i,\tau}-x_{t_m}^{i,\tau}|^2\right]\leq (1+\tau C)^{T/\tau}\tau\leq C\tau,\]
thus we have shown the desired claim.
\end{proof}

{\small
\bibliographystyle{alpha}
\bibliography{references1,references2}
}

\end{document}